\newcommand{\cmark}{\ding{51}}%
\newcommand{\xmark}{\ding{55}}%
\newcommand{\ignore}[1]{}
\newcommand\C[1]\null
\newtheorem{theorem}{Theorem}
\newtheorem{lemma}{Lemma}
\newtheorem{assump}{Assumption}
\renewcommand{\citet}[1]{\cite{#1}}
\title{On the Sample Complexity Bounds of \\ Bilevel Reinforcement Learning}
\author{{Mudit Gaur} \\
	Purdue University
	\And
    {Utsav Singh} \\
	IIT Kanpur
    \And
    {Amrit Singh Bedi\thanks{Equal contribution.}} \\
	University Of Central Florida
    \And
    {Raghu Pasupathy\footnotemark[1]} \\
    Purdue University\\
 	\And
    {Vaneet Aggarwal\footnotemark[1]}\\
	Purdue University    
}
\begin{document}
\maketitle

\begin{abstract} \label{abstract}
Bilevel reinforcement learning (BRL) has emerged as a powerful framework for aligning generative models, yet its theoretical foundations, especially sample complexity bounds, remain relatively underexplored. In this work, we present the first sample complexity bound for BRL, establishing a rate of $\tilde{\mathcal{O}}(\epsilon^{-3})$ in continuous state-action spaces. Traditional MDP analysis techniques do not extend to BRL due to its nested structure and non-convex lower-level problems. We overcome these challenges by leveraging the Polyak-Łojasiewicz (PL) condition and the MDP structure to obtain closed-form gradients, enabling tight sample complexity analysis. Our analysis also extends to general bi-level optimization settings with non-convex lower levels, where we achieve state-of-the-art sample complexity results of $\tilde{\mathcal{O}}(\epsilon^{-3})$ improving upon existing bounds of  $\tilde{\mathcal{O}}(\epsilon^{-6})$. Additionally, we address the computational bottleneck of hypergradient estimation by proposing a fully first-order, Hessian-free algorithm suitable for large-scale problems.
\end{abstract}

\section{Introduction} \label{introduction}

Bilevel reinforcement learning (BRL) has emerged as a powerful framework for modeling hierarchical decision-making processes, particularly in the context of artificial intelligence (AI) alignment. Recent works, such as those by \citet{chakraborty2024parl,ding2024sail,shen2023penaltybasedbilevelgradientdescent,srivastava2025technical}, have demonstrated the potential of bilevel formulations to address challenges in reinforcement learning from human feedback (RLHF) and inverse reinforcement learning. Despite these advancements, the theoretical understanding of BRL remains limited, especially concerning sample complexity in parameterized settings. Most existing theoretical analyses such as \citep{yang2024bilevel} are confined to tabular settings due to their analytical tractability, while empirical studies \citep{fu2017learning} are conducted in parameterized environments, leading to a \textit{disconnect between theory and practice}.

A significant challenge in bridging this gap is the computational complexity associated with the hierarchical structure of BRL. Practical algorithms often circumvent the need for second-order gradient evaluations to solve bilevel problems by employing first-order approximations \citep{chakraborty2024parl}. However, these simplification raises critical questions about the theoretical performance loss incurred by such approximations in BRL. Therefore, understanding and deriving tight sample complexity bounds for BRL are crucial for guiding the development of more efficient algorithms and for assessing the trade-offs between theoretical rigor and practical applicability. To this end, we present the first sample complexity result for BRL in continuous state action settings, achieving a bound of $\tilde{\mathcal{O}}(\epsilon^{-3})$. This result extends to standard bilevel optimization problems, providing a significant theoretical contribution wixth practical implications.

\textbf{Key challenges and our approach.} The theoretical analysis of BRL is not possible using the existing theoretical frameworks \cite{gaur2024closinggapachievingglobal,mondal2024improved,ganesh2025order,ganesh2025order2,ganesh2025a} used to analyze MDP algorithms with a known reward function.  Existing bi-level algorithms are also ill-suited to the BRL setup since they require unbiased gradients \cite{pmlr-v151-chen22e,grazzi2023bilevel}, which are not available in the BRL setup. Many bi-level algorithms \cite{chen2023decentralized,ji2021bilevel} also require the estimation of second-order terms such as Hessian, which make them computationally infeasible as well in high-dimensional setups. Some works in the field of BRL do employ the  approximation of the second-order Hessian \cite{chakraborty2024parl,yang2024bilevel}. However, these works are limited to tabular state spaces. Other approaches such as  \cite{shen2023penaltybasedbilevelgradientdescent} use a penalty based reformulation of the BRL problem. This work is still restricted to the tabular setup. From a theoretical standpoint, none of the works develop a method to analyze the sample complexity of the work for continuous state action space, 
works that have a sample complexity such as \cite{yang2024bilevel} only do so for a tabular state space. We overcome this challenge by (i) proposing a first-order BRL algorithm that works for continuous state-action spaces, (ii) providing the first-ever sample complexity results for a BRL algorithm. We use a penalized bi-level framework with non-convex lower level initially proposed in \cite{kwon2024penaltymethodsnonconvexbilevel} for standard optimization, but it is not straightforward to apply to reinforcement learning settings, which is the main focus of our work. 

In order to obtain our sample complexity result, we use the insight that the gradient parameter estimation step  in the algorithm laid out in \cite{chen2024findingsmallhypergradientsbilevel} (lines 3-8 of Algorithm \ref{algo_1}) are an SGD step on a loss function that satisfies the  Polyak-Łojasiewicz (PL) property. We combine this insight with our novel recursive analysis of the optimality gap (lemma \ref{thm_supp}) for stochastic gradient descent (SGD) with biased gradient estimate to obtain the first ever sample complexity result for BRL. We also demonstrate that our analysis holds for the standard bi-level penalty-based formulation of \cite{kwon2024penaltymethodsnonconvexbilevel} with unbiased gradient estimates and provides state-of-the-art sample complexity results for the same (Theorem \ref{thm}).

We obtain this result with our unique insight {that the parameter estimation steps of the algorithm provided in \citet{chen2024findingsmallhypergradientsbilevel}, are gradient descent steps on a loss function that satisfies the Polyak-Łojasiewicz (PL) property}. Additionally, we use the MDP structure to obtain closed form expressions of the gradients. This allows us to obtain the first ever sample complexity of BRL. Our analysis also applies to the general bi-level setup as a special case where we obtain state of the art sample complexity results of ${\epsilon}^{-4}$.

We summarize our main contributions as follows.

\begin{itemize}[leftmargin=*]
    \item \textbf{Novel sample complexity bounds in BRL:} We derive the first sample complexity bounds for BRL with parameterized settings, achieving a bound of $\tilde{\mathcal{O}}(\epsilon^{-3})$. Our analysis addresses the challenges posed by non-convex lower-level problems and does not rely on computationally expensive second-order derivatives. 

    \item \textbf{Generalization to standard bilevel optimization:}
    Our theoretical results extend beyond reinforcement learning to standard bilevel optimization problems, assuming access to unbiased gradients for the upper and lower level objectives. For setups with non-convex lower-level problems, our method achieves a state-of-the-art sample complexity of $\tilde{\mathcal{O}}(\epsilon^{-3})$.

    \item We perform proof of concept experiments on Mujoco Tasks to demonstrate that the proposed first order algorithm  works in practice.
\end{itemize}

\section{Related Works} \label{related_works}

%{\bf \color{blue} Need first explanation as to optimization has been more widely studied, and that RL is a problem with more biases? Then, say we will first review literature and compare with optimization literature and then with RL. }

% \begin{wrapfigure}{r}{0.48\textwidth}
%   \vspace{-55pt}
%   \begin{center}
%   %\fbox
%   {%
%     \begin{minipage}{0.48\textwidth}
%     \vspace{-.21in}

\begin{table}[t]\label{1}
\centering
 \caption{This table shows a comparison of state-of-the-art sample complexity results for bilevel reinforcement learning (BRL). Our result is among the first to establish sample complexity bounds for continuous state-action spaces.\vspace{2mm}}
\label{tbl_related_1}%
%\resizebox{0.6\columnwidth}{!}
{ 
\begin{tabular}{|c|c|c|c|c|c|c|}
\hline
References & \multicolumn{1}{c|}{\begin{tabular}[c]{@{}c@{}}  Continuous  \\ Space \end{tabular}} & \multicolumn{1}{c|}{\begin{tabular}[c]{@{}c@{}} Iteration \\Complexity \end{tabular}} &  \multicolumn{1}{c|}{\begin{tabular}[c]{@{}c@{}} Sample \\ Complexity \end{tabular}} \\ \hline

        \cite{shen2024principled} &    \textcolor{red}{\xmark}              & 
 $\tilde{\mathcal{O}}(\epsilon^{-1})$ &   \textcolor{red}{\xmark}             \\ 
        \cite{chakraborty2024parl} &    \textcolor{red}{\xmark}              & 
 $\tilde{\mathcal{O}}(\epsilon^{-1})$ &   \textcolor{red}{\xmark} \\ 
\cite{yang2024bilevel} &    \textcolor{red}{\xmark}              & 
 $\tilde{\mathcal{O}}(\epsilon^{-1.5})$ &  $\tilde{\mathcal{O}}(\epsilon^{-3.5})$  \\ 
 \cite{shen2023penaltybasedbilevelgradientdescent} &    \textcolor{red}{\xmark}              & 
 $\tilde{\mathcal{O}}(\epsilon^{-1})$ &   \textcolor{red}{\xmark} \\ 
        This Work &    \textcolor{green}{\cmark}              & 
 $\tilde{\mathcal{O}}(\epsilon^{-1})$ &   $\tilde{\mathcal{O}}(\epsilon^{-3})$             \\    
    \hline
\end{tabular}
}
\end{table}
%  \vspace{-.21in}
%  \end{minipage}
%    }
%   \end{center}
%   \vspace{-20pt}
% \end{wrapfigure}
We first go over the prevailing literature in the field of bilevel optimization. Once we have established a broad overview of the existing results in the field, we will lay out the existing results in the field of BRL and how they compare to the bilevel optimization results. 

\textbf{Bilevel optimization } problems have been studied extensively from the theoretical perspective in recent years. Approaches such as \citet{ji2021bilevel} have been shown to achieve convergence, but with expensive evaluations of Hessian / Jacobian matrices and Hessian / Jacobian vector products. Works such as \citet{sow2022convergence,yang2023achieving} forgo the use of exact Hessian/Jacobian matrices but instead approximate them. Works such as \citet{stoch_bil2} do not require even the approximation of the second-order terms. However, in all of the aforementioned works, the lower level is restricted to be convex. In general, bilevel optimization with non-convex lower-level objectives is not computationally tractable without further assumptions, even for the special case of min-max optimization \citep{daskalakis2020complexityconstrainedminmaxoptimization}. Therefore, additional assumptions are necessary for the lower-level problem. The work in \cite{kwon2024penaltymethodsnonconvexbilevel} established a penalty-based framework for solving bilevel optimizations with a possible non-convex lower levels with the PL assumption on the lower-level function. The work in \cite{chen2024findingsmallhypergradientsbilevel} obtained convergence in the bilevel setup with a non-convex lower level with an improved sample complexity with respect to \cite{kwon2024penaltymethodsnonconvexbilevel}, where it obtained $\epsilon^{-6}$ compared to $\epsilon^{-7}$. Note that PL assumption has been used previously in fields such as optimization \citep{karimi2016linear}, RL \citep{gaur2023global,pmlr-v202-gaur23a} as well as analysis of modern ML methods such as flow matching \citep{gaur2025generativemodelingcontinuousflows} and diffusion models \citep{gaur2025generativemodelingcontinuousflows,srikanth2025discretestatediffusionmodels}.

% {\bf \color{blue} None of these papers are in Table - also have a line for Table 2 to compare our works with these.}  {\bf \color{green} Have made the changes and the references here are in the table there is one reference daskalakis2020 which does not fit in the table so is not there.}

% {\bf \color{blue} bilevel or bilevel - keep changing in paper, be consistent. made this change all are bilevel now}

\textbf{Bilevel reinforcement learning} has been used in several applications such as RLHF
\citep{christiano2017deep,xu2020preference},
reward shaping  \citep{Zou_Ren_Yan_Su_Zhu_2021}, Stackelberg Markov
game \citep{liu2021sharp,song2023can}, AI-economics with two-level deep RL \citep{zheng2022ai}, social environment design \citep{zhang2024social}, incentive design \citep{chen2016caching}, etc. Another recent work \citep{chakraborty2024parl}  studies the policy alignment problem and introduces
a corrected reward learning objective for RLHF that leads to strong performance gain. There are a very limited number of theoretical convergence results for such a setup. The PARL algorithm \citep{chakraborty2024parl} achieves convergence of the BRL setup using the implicit gradient method that requires not only the strong convexity of the lower-level objective but also necessitates the use of second-order derivatives. Note that in general the lower level of BRL is the discounted reward which is not convex. The work of \citet{shen2024principled} employs a penalty-based framework to achieve convergence for a BRL setup using a first-order algorithm. Similarly, \citet{yang2024bilevel} establishes convergence by deriving an expression for the hypergradient without assuming convexity of the lower-level problem. However, it is important to note that all existing convergence results in BRL thus far do not provide sample complexity for continuous state action spaces. Despite the existence of sample complexity results for bilevel optimization with non-convex lower-level objectives in the broader bilevel literature without tabular state space restriction, such results remain absent in the context of BRL. 

% {\bf \color{blue} linear in ??}. {\bf \color{blue} All these papers not in Table 1 - why you onlu selected the 3 papers there?? Also, need to mention about Table 1 and comparison here.}  {\bf \color{green} I got rid of the papers that dealt exclusively with RLHF as that wont fit always in the BRL setup. Am only mentioning BRL applications and theory now.}

\section{Problem Formulation} \label{Problem Setup}

\noindent \textbf{Markov Decision Process (MDP).} We consider a discounted MDP defined by the tuple ${\mathcal{M}} = (\mathcal{S}, \mathcal{A}, P, r_{\phi}, \gamma)$, where $\mathcal{S}$ is a bounded measurable state space and $\mathcal{A}$ is a bounded measurable action space. We remark that in our setup, both the state and action spaces can be infinite, though they remain bounded. In the MDP, ${P} : \mathcal{S} \times \mathcal{A} \rightarrow \mathcal{P}(\mathcal{S})$ is the probability transition function and $r_{\phi} : \mathcal{S} \times \mathcal{A} \rightarrow [0,1]$ represents the parameterized reward function, ($\phi \in \Theta$) where $\Theta$ is a compact space. 
In order to encourage exploration, in many cases an additional KL-regularization term is preferred. This can be accounted for by defining the reward function as 
\begin{align}
         r_{\phi}(s,a) =  r_{\phi}(s,a) + {\beta}h_{\pi,\pi_{\text{ref}}}(s,a), 
\end{align}
where $h_{\pi,\pi_{\text{ref}}}(s_{i},a_{i})= {\log}\left(\frac{\pi(a_{i}|s_{i})}{\pi_{\text{ref}}(a_{i}|s_{i})}\right)$ is the KL regularization term where $\pi_{\text{ref}}$  is the reference policy. This form of the KL penalty is used in RLHF works such as in \citet{ziegler2019fine}. Note that our analysis works for any regularization term that is uniformly bounded.
Finally, $0 < \gamma < 1$ is the discount factor. 
A policy $\pi : \mathcal{S} \rightarrow \mathcal{P}(\mathcal{A})$ maps each state to a probability distribution over the action space. The state-action value function or $Q$ function is defined as follows:
\begin{align}
Q_{\phi}^{\pi}(s,a) = \mathbb{E}\left[\sum_{t=0}^{\infty}\gamma^{t}r_{\phi}(s_{t},a_{t})|s_{0}=s,a_{0}=a\right].   \label{mdp_ps_1} 
\end{align}
For a discounted MDP, we  define the optimal
action value functions as
\begin{align}
Q_{\phi}^{*}(s,a) =  \sup_{\pi}Q_{\phi}^{\pi}(s,a),  \hspace{0.5cm} \forall (s,a) \in  \mathcal{S}\times\mathcal{A} \label{mdp_ps_2}  .  
\end{align}
We have the expected average return given by 
 \begin{align}
     J(\phi,\lambda) = \mathbb{E}_{s \sim \nu, a \sim \pi_{\lambda}(.|s)}[Q_{\phi}^{\pi_{\lambda}}(s,a) ], \label{mdp_ps_3}
 \end{align}
where the policy is parameterized as $\{\pi_{\lambda} , \lambda \in \Lambda\}$ and $\Lambda$ is a compact set. 

\noindent \textbf{Bilevel reinforcement learning (BRL).} With the above notation in place, we can formulate the BRL problem as
\begin{align} 
    &\min_{\phi} G(\phi,\ \lambda^*(\phi)) \nonumber\\
    &\text{where } \ \lambda^*(\phi) = \arg\min_{\lambda} -J(\phi,\lambda), \label{mdp_ps_4}
\end{align}
where the upper-level objective $G(\phi,\lambda^*(\phi))$ is a function of the reward parameter $\phi$, while the lower-level objective is a function of the policy parameter $\lambda$. We denote the lower level loss function as $-J(\phi,\lambda)$ as opposed to $J(\phi,\lambda)$ 
to keep our notation in line with the bi-level literature; a similar notation is followed in \citet{shen2024principled}. 

\noindent \textbf{Existing approaches and limitations.} To solve the problem in \eqref{mdp_ps_4}, one popular approach is to rewrite the problem in \eqref{mdp_ps_4} in the following manner
\begin{align}
   & \min_{\phi}{\Phi}(\phi):=  G(\phi,\lambda^{*}(\phi)) \nonumber\\
&\text{where } \ \lambda^*(\phi) = \arg\min_{\lambda} -J(\phi,\lambda),\label{mdp_ps_6},
\end{align}
which is known as the \textit{hyper-objective} approach, where $\Phi$ is the hyper-objective. To solve it,  we need the calculation of the hyper-gradient given by
\begin{equation}
    {\nabla}_{\phi}\Phi(\phi) =  \nabla_{\phi}G(\phi,\lambda^{*}(\phi)) + v.\nabla_{\lambda}G(\phi,\lambda^{*}(\phi)), 
    \label{mdp_ps_7}
\end{equation}
where the term $v$ apart from the gradient of $\Phi$ is given as 
\begin{equation}
   v = -[{\nabla}^{2}_{\lambda,}J(\phi,\lambda^{*}(\phi))]^{-1}{\nabla}^{2}_{\phi,\lambda}J(\phi,\lambda^{*}(\phi))
    \label{mdp_ps_8} 
\end{equation}
    This approach has been used in the existing literature \citep{yang2023achieving,sow2022convergence,chakraborty2024parl}. Apart from having to calculate the Hessian and its inverse, this technique requires that the lower-level objective $J$ be convex. One solution, which is employed in \citet{yang2023achieving,sow2022convergence}, is to estimate first-order approximations of the Hessian. This is because the calculation of second-order terms, which in many cases can get prohibitively expensive from a computational perspective. 
%

% In this work, we  provide an algorithm for the bi-level RL setup using the penalized objective in \eqref{mdp_ps_9}. We will then show the performance of the algorithm, which improves upon the guarantees in the prior works. 

%how our result holds for the standard bi-level setup where it improves on existing results.

%\vspace{-.1in}
\section{Proposed Approach} \label{Proposed Algorithm}
To avoid computationally expensive Hessians and for situations where the lower levels are not necessarily convex, penalty-based methods such as those developed in \citet{kwon2024penaltymethodsnonconvexbilevel} have been proposed. Based on that, in this paper, we consider the proxy objective 
\begin{align}
    {\Phi}_{\sigma}(\phi) =  \min_{\lambda} \left( G(\phi,\lambda)  +  \frac{J(\phi,{\lambda}^{*}(\phi))-J({{\phi}},\lambda)}{\sigma}\right)  \label{mdp_ps_9},
\end{align}
where $\sigma$ is a positive constant. The gradient of ${\Phi}_{\sigma}(\phi)$ is given by
\begin{align}
    {\nabla}_{\phi}{\Phi}_{\sigma}(\phi) =&   {\nabla}_{\phi}G(\phi,\lambda_{\sigma}^{*}(\phi)) + \frac{{\nabla}_{\phi}J({\phi},{\lambda^{*}(\phi)}) -  {\nabla}_{\phi}J(\phi,{\lambda_{\sigma}^{*}(\phi)})}{\sigma}, \label{mdp_ps_10}
\end{align} 
where $\lambda^{*}(\phi)=\arg\min_{\lambda} -J(\phi,\lambda)$ and  $\lambda_{\sigma}^{*}(\phi) =  \arg\min_{\lambda} -(J(\phi,\lambda) -  {\sigma}G(\phi,\lambda))$. For future notational convenience, we define the penalty function $h_{\sigma}(\phi,\lambda) =J(\phi,\lambda) -  {\sigma}G(\phi,\lambda)$.
A key advantage of this formulation is the fact that, unlike the method involving the hyper-gradient, it does not require the calculation of costly second-order terms. It is also applicable to setups where the lower level is non-convex. Despite these advantages, the theoretical analysis of this setup (even for the standard bi-level framework) is not well explored. 

\textbf{Remark (differences with \citep{kwon2024penaltymethodsnonconvexbilevel, chen2024findingsmallhypergradientsbilevel})}. Existing analyses in standard bilevel optimization settings have achieved sample complexities of $\tilde{\mathcal{O}}(\epsilon^{-7})$ and $\tilde{\mathcal{O}}(\epsilon^{-6})$ in \citet{kwon2024penaltymethodsnonconvexbilevel} and \citet{chen2024findingsmallhypergradientsbilevel}, respectively. These results apply to bilevel problems without an MDP structure, where the lower-level objective is non-convex but it is reasonable to assume access to unbiased gradient estimates with bounded variance for both upper- and lower-level objectives. However, such assumptions do not hold in bilevel reinforcement learning (BRL), where gradient estimates are inherently biased due to the underlying MDP dynamics. In this work, we develop a sample complexity analysis tailored to the BRL setting. We also specialize our analysis to the standard bilevel optimization setup and demonstrate that our approach yields improved sample complexity bounds compared to prior work (see Table 2).

\noindent \textbf{Algorithm development.} We will describe the algorithm to solve the problem described in Equation \eqref{mdp_ps_9}. We achieve this by implementing a gradient descent step in which the gradient is given by the expression in Equation \eqref{mdp_ps_10}. %
In order to estimate this gradient, we have to estimate the three terms ${\nabla}_{\phi}G(\phi,\lambda_{\sigma}^{*}(\phi))$, ${\nabla}_{\phi}J({\phi},\lambda^{*}(\phi))$ and ${\nabla}_{\phi}J(\phi,{\lambda_{\sigma}^{*}(\phi)})$. In turn, these terms require the estimation of the terms $\lambda^{*}(\phi)$ and $\lambda_{\sigma}^{*}(\phi)$.

For the gradient of $J(\phi,\lambda)$ with respect to the upper level variable and reward parameter $\phi$, note that there was no existing  closed-form expression. We show in Lemma \ref{lemma_5} in the Appendix \ref{lem_proof} that a closed form of ${\nabla}_{\phi}J(\phi,\lambda)$ is given by
\begin{align}
      \nabla_{\phi}J(\phi,\lambda)  =  \sum_{i=1}^{\infty} {\gamma}^{i-1} \mathbb{E}{\nabla}_{\phi}r_{\phi}(s_{i},a_{i}) \label{mdp_ps_12},
\end{align}

Here, the expectation is over the state action distribution induced by the policy $\lambda$.  This expression is obtained by following an argument similar to the proof of the policy gradient theorem in \citet{sutton1999policy}.
Note that we can only obtain a truncated estimate for $\nabla_{\phi}J({\phi},{\lambda})$, which will also lead to bias. In Algorithm \ref{algo_1}, we take an average of this truncated estimate over $B$ batches for a more stable estimate. We define the sample-based average here as

\begin{align}
     {\nabla}_{\phi}{J}(\phi,\lambda,B) = \frac{1}{B} \sum_{j=1}^{B}{\nabla}_{\phi}\hat{J}_{j}(\phi,\lambda). \label{bl_0}
\end{align}

where ${\nabla}_{\phi}\hat{J}_{j}(\phi,\lambda) = \sum_{j=1}^{H}{\nabla}_{\phi}r_{\phi}(s_{j,i},a_{j,i})$. Here, $(s_{j,i},a_{j,i})$ are the $i^{th}$ state-action pair of the $j^{th}$ trajectory sampled from the policy $\pi_{\lambda}$. 

For the gradient for the lower-level loss function gradient $J(\phi,\lambda)$ with respect to the lower-level variable $\lambda$ we use the policy gradient function to obtain
\begin{align}
   \nabla_{\lambda}J({\phi},{\lambda}) &=  \mathbb{E}_{(s,a)\sim d_{\nu}^{\pi_{\lambda}}}[{\nabla}_{\lambda}{\log}{\pi_{\lambda}}(a|s)Q^{\lambda}_{\phi}(s,a)]  \nonumber\\
   &+   \mathbb{E}_{(s_{i},a_{i} \sim \pi_{\lambda})}{\beta}\sum_{i=1}^{\infty}{\gamma}^{i-1}{\nabla}_{\lambda}h_{\pi_{\lambda},\pi_{ref}}(s_{i},a_{i}) \label{mdp_ps_11} 
\end{align}
Here $d_{\nu}^{\pi_{\lambda}}$ denotes the stationary distribution of the state action space induced by the policy $\pi_{\lambda}$. The second term on the right-hand side is due to the presence of the KL regularization term in the reward $r(\phi)$. Note that in real-world applications of RL algorithms, such as actor-critic, the estimate of $Q^{\lambda}_{\phi}$ is not an unbiased estimate, but instead a parametrized function, such as a neural network, is used to approximate it, leading to bias. Additionally we cannot sample the infinite sum $\mathbb{E}_{(s_{i},a_{i} \sim \pi_{\lambda})}{\beta}\sum_{i=1}^{\infty} {\nabla}_{\lambda}h_{\pi_{\lambda},\pi_{ref}}(s_{i},a_{i})$ but have to get a finite truncated estimate, which also leads to bias.We denote by $\nabla_{\lambda}\hat{J}({\phi},{\lambda},n,B)$ the estimate of $\nabla_{\lambda}{J}({\phi},{\lambda})$ as  

\begin{align}
   \nabla_{\lambda}\hat{J}({\phi},{\lambda},n,B) &=  \frac{1}{n}\sum_{i=1}^{n}[{\nabla}_{\lambda}{\log}{\pi_{\lambda}}(a_{i}|s_{i})\hat{Q}^{\lambda}_{\phi}(s_{i},a_{i})]  \nonumber\\
   &+     \frac{\beta}{B}\sum_{j=1}^{B}\sum_{i=1}^{H}{\gamma}^{i-1}{\nabla}_{\lambda}h_{\pi_{\lambda},\pi_{ref}}(s_{j,i},a_{j,i}) \label{mdp_ps_12_1}
\end{align}

Note that the estimate of ${Q}^{\lambda}_{\phi}(s,a)$ denoted by $\hat{Q}^{\lambda}_{\phi}(s,a)$ is estimated using $n$ samples.  For upper-level loss functions, unbiased gradient estimates can be calculated, as demonstrated in \citet{chakraborty2024parl}. For notational convenience, we define
\begin{align}
    {\nabla}G(\phi,\lambda,B) = \frac{1}{B} \sum_{i=1}^{B}{\nabla}\hat{G}_{i}(\phi,\lambda), \label{mdp_ps_13} 
\end{align}
where $B$ is the size of the gradient sample dataset and ${\nabla}\hat{G}_{i}(\phi,\lambda)$ is the gradient estimate sample $i^{th}$. Note here that the batch size $B$ and horizon length $H$ can vary across the different gradients. We keep this notation the same across gradients with respect to $\phi$ and $\lambda$ for notational convenience.   % {\bf \color{blue} Again unclear - RLHF things have to be remark if important with detail - what can be a pair - gradient is number and not pair. I have removed references to RLHF here and made the discussion more general}{\bf \color{green} I removed references to RLHF as well}.

Now that we have expressions for the gradients of the upper and lower level function, we now move onto the estimation of  ${\nabla}_{\phi}J({\phi},\lambda^{*}(\phi))$ and ${\nabla}_{\phi}J(\phi,{\lambda_{\sigma}^{*}(\phi)})$.
Consider the term ${\lambda_{\sigma}^{*}(\phi)}$ which is a minimizer of the function given by $h_{\sigma}(\phi,\lambda)$. Thus, it is obtained by performing a gradient descent on $h_{\sigma}(\phi,\lambda)$ with respect to $\lambda$. 
Similarly, ${\lambda^{*}(\phi)}$  is the minimizer of the function given by $J(\phi,\lambda)$ and can be obtained by gradient descent. Note that these steps are performed on lines 4-7 of Algorithm \ref{algo_1}.
The gradient descent step for the proxy loss function $\Phi_{\sigma}(\phi)$ is performed on line 11. We estimate the gradients of $G(\phi,\lambda)$ and $J(\phi,\lambda)$ with respect to $\phi$ using the expression in Equations $\eqref{mdp_ps_12}$ and  $\eqref{mdp_ps_13}$.

\begin{algorithm}[ht]
	\caption{A first-order approach to bilevel RL}
	\label{algo_1}
    \begin{algorithmic}[1]
    \STATE 	\textbf{Input:} $\mathcal{S},$ $\mathcal{A}$, Time Horizon $T \in \mathcal{Z}$, Number of gradient estimation updates for lower level $K \in \mathcal{Z}$, sample batch size $n \in \mathcal{Z}$, gradient batch size $B \in \mathcal{Z}$, Horizon length $H \in \mathcal{Z}$,  starting policy parameters $\lambda^{0}_{0},{\lambda'}^{0}_{0}$, starting reward parameter $\phi_{0}$ 
		\FOR{$t\in\{0,\cdots,T-1\}$} 
		{   
            \FOR{$k \in\{0,\cdots,K-1\}$} 
		  {
             \STATE $d_{k} = \nabla_{\lambda}\hat{J}({\lambda}_{t}^{k},{\phi}_{t},n,B)$ \\
             \STATE $d^{'}_{k} = \nabla_{\lambda}\hat{J}({{\lambda}^{'}}_{t}^{k},{\phi}_{t},n,B) - {\sigma}.{\nabla}_{\lambda}\hat{G}(\phi_{t},{\lambda^{'}}_{t}^{k},B)$ \\
             \STATE ${\lambda}^{k+1}_{t} = {\lambda}^{k}_{t} + {\tau}{\cdot}\frac{d_{k}}{||d_{k}||}$ \\
             \STATE ${\lambda^{'}}^{k+1}_{t} = {\lambda^{'}}^{k}_{t} + {\tau}^{'}{\cdot}\frac{d^{'}_{k}}{||d^{'}_{k}||}$\\
            }
             \ENDFOR\\
             \STATE $d_{t} = {\nabla}_{\phi}\hat{G}(\phi_{t},{\lambda^{'}}_{t}^{K},B) - \frac{1}{\sigma}\left({\nabla}_{\phi}\hat{J}(\phi_{t},\lambda^{K}_{t},B)-{\nabla}_{\phi}\hat{J}(\phi_{t},{\lambda^{'}}^{K}_{t},B)\right)$ \\
             \STATE $\phi_{t+1} = \phi_{t} - {\eta}{\cdot}{d_{t}}$ \\
		}
		\ENDFOR\\
	\end{algorithmic}
\end{algorithm}

% \begin{algorithm}[ht]
% 	\caption{A first-order approach to bilevel RL}
% 	\label{algo_2}
%     \begin{algorithmic}[1]
% 		\FOR{$t\in\{0,\cdots,T-1\}$} 
% 		{   
%             \FOR{$k \in\{0,\cdots,K-1\}$} 
% 		  {
%              \STATE $d_{k} = \nabla_{\lambda}\hat{J}({\lambda}_{t}^{k},{\phi}_{t},n,B)$ \\
%              \STATE $d^{'}_{k} = \nabla_{\lambda}\hat{J}({{\lambda}^{'}}_{t}^{k},{\phi}_{t},n,B) - {\sigma}.{\nabla}_{\lambda}\hat{G}(\phi_{t},{\lambda^{'}}_{t}^{k},B)$ \\
%              \STATE ${\lambda}^{k+1}_{t} = {\lambda}^{k}_{t} + {\tau}{\cdot}\frac{d_{k}}{||d_{k}||}$ \\
%              \STATE ${\lambda^{'}}^{k+1}_{t} = {\lambda^{'}}^{k}_{t} + {\tau}^{'}{\cdot}\frac{d^{'}_{k}}{||d^{'}_{k}||}$\\
%             }
%              \ENDFOR\\
%              \STATE $d_{t} = {\nabla}_{\phi}\hat{G}(\phi_{t},{\lambda^{'}}_{t}^{K},B) - \frac{1}{\sigma}\left({\nabla}_{\phi}\hat{J}(\phi_{t},\lambda^{K}_{t},B)-{\nabla}_{\phi}\hat{J}(\phi_{t},{\lambda^{'}}^{K}_{t},B)\right)$ \\
%              \STATE $\phi_{t+1} = \phi_{t} - {\eta}{\cdot}{d_{t}}$ \\
% 		}
% 		\ENDFOR\\
% 	\end{algorithmic}
% \end{algorithm}

%
\section{Theoretical Analysis}\label{Main Result}

%
% In this section, we will show that the proposed algorithm {\bf \color{blue}  Should the global convergence Global or forst order stationary??} a first order stationary point of the upper loss function $\Phi$ and provide a sample complexity for the same. Note that due to the non-convexity of the upper loss function first order stationarity is the best we can do {\bf \color{blue} Why is the title of the section Global then?} in terms of optimality. This matches results in other bilevel reinforcement works such as  \citet{chakraborty2024parl,shen2024principled}. {\bf \color{blue} If results match, we don't have a paper right?} 
%We will present this section in the following order. 
We begin by outlining the assumptions required for our analysis, followed by the presentation of our convergence results. We then provide a detailed theoretical analysis, explaining the derivation of these results.
\begin{assump}  \label{assump_1} 
For any $\phi \in \Theta$, $\lambda \in  \Lambda$ and $\sigma \in \mathbb{R}^{+}$, we have the following assumptions
\begin{enumerate}
\item For all $0 \le \sigma \le \sigma_{0}$, the function $h_{\sigma}(\phi,\lambda)$ satisfies the inequality 
     \begin{align}
         ||{\nabla}h_{\sigma}(\phi,\lambda)||^{2} \le {\mu}(h_{\sigma}(\phi,\lambda)-h_{\sigma}(\phi,\lambda_{\sigma}^{*})) \label{as_1}
     \end{align}
     where $\lambda_{\sigma}^{*} = \arg\min_{\lambda \in \Lambda}(h_{\sigma}(\phi,\lambda))$ and $\sigma_{0}$ is a positive constant.

\item The functions $h_{\sigma}(\phi,\lambda)$ and $J(\phi,\lambda)$ are Lipschitz and smooth in variables $\phi$ and $\lambda$.  \label{as_2}

\item The functions $h_{\sigma}(\phi,\lambda)$ and $J(\phi,\lambda)$ have Lipschitz and smooth Hessians in both $\phi$ and $\lambda$.  \label{as_3}
\end{enumerate}
\end{assump}
In \citet{kwon2024penaltymethodsnonconvexbilevel}, the first  Assumption in Equation \eqref{as_1}  was shown to ensure that the proxy objective $\phi_{\lambda}(\phi)$ is differentiable. This assumption also exists in the literature \cite{chen2024findingsmallhypergradientsbilevel} to ensure the existence of the gradient given in Equation \eqref{mdp_ps_10}. It is thus key for the setup given in Equation \eqref{mdp_ps_9} to be solvable using gradient descent.  The Assumption \ref{assump_1}.\ref{as_2} is a standard assumption in bi-level literature used for convergence analyses \citep{grazzi2023bilevel,chen2024findingsmallhypergradientsbilevel}. The Assumption \ref{assump_1}.\ref{as_3} ensures that solving for the optimal point of the proxy objective $\Phi_{\sigma}$ brings us close the optimal point of the true objective $\Phi$. 

\begin{assump}  \label{assump_5} 
For any fixed $ \lambda \in \Lambda, \phi \in \Phi$ and $\theta \in \Theta$ be the parameters of the neural network class used to parametrize the $Q$, where $\Theta$ is a compact set, and $\mu$ is a distribution over $\mathcal{S}\times\mathcal{A}$. Then it holds that
\begin{eqnarray} 
  \min_{\theta \in \Theta}\mathbb{E}_{s,a \sim \mu}\left(Q_{\theta}(s,a) - Q^{\pi_{\lambda}}_{\phi}(s,a) \right)^{2} \le \epsilon_{approx}. \label{assump_5_1} \nonumber
\end{eqnarray}
\end{assump}
 Assumption  \ref{assump_5} ensures that a class of neural networks is able to approximate the function obtained by applying the Bellman operator to a neural network of the same class. Similar assumptions are also considered in \citet{fu2020single,wang2019neural,gaur2024closinggapachievingglobal}. {This assumption ensures that we are able to find an accurate estimate of the $Q$ function. This assumption accounts for the bias in gradient estimation, something not present in the standard bi-level setup. In works such as \cite{shen2023penaltybasedbilevelgradientdescent} a similar constant denoted by $\epsilon_{oracle}$ is used} 

\begin{assump} [For upper level] \label{assump_8} 
For any fixed $\lambda,\lambda_{1},\lambda_{2} \in \Lambda$, $\phi,\phi_{1},\phi_{2} \in \Theta$ and $(s,a) \in \mathcal{S}\times\mathcal{A}$,  we have the following properties
\begin{enumerate} 
\item $||{\nabla}r_{\phi}(s,a)|| \le C_{1} $
\item $||{\nabla}{\log}{\pi}_{\lambda}(s,a)|| \le C_{2}$
\item $||{\nabla}r_{\phi_{1}}(s,a) - {\nabla}r_{\phi_{2}}(s,a)|| \le  C_{3}||\phi_{1} -\phi_{2}||$
\item $||{\nabla}{\log}{\pi}_{\lambda_{1}}(s,a)-{\nabla}{\log}{\pi}_{\lambda_{2}}(s,a)|| \le  C_{4}||\lambda_{1} - \lambda_{1}||$
\end{enumerate}
where $C_{1}-C_{5}$ and $C_{2} \ge 1$ are positive constants. Additionally, there exist $\varepsilon,\bar\varepsilon\in(0,1]$ such that
$\pi_\lambda(a\mid s)\ge \varepsilon$ for all $a\in\mathcal A$ and $\lambda\in\Lambda$, and $\pi_{ref}(a\mid s)\ge \bar\varepsilon$ for all $a\in\mathcal A$ 
\end{assump}
Similar assumptions have been utilized in prior policy gradient-based works  \citep{masiha2022stochastic,mondal2024improved}, as well as actor critic algorithms, such as \citet{fu2020single,gaur2024closinggapachievingglobal,ganesh2025order}.

\begin{assump} [For upper level] \label{assump_6} 
For any fixed $\lambda \in \Lambda$ and $\phi \in \Theta$  we have access to unbiased gradients
\begin{eqnarray} 
 \mathbb{E}[{\nabla}\hat{G}(\phi,\lambda)] = {\nabla}G(\phi,\lambda) \label{assump_6_1} 
\end{eqnarray}
and the gradient estimates have bounded variance
\begin{eqnarray} 
 \mathbb{E}\|{\nabla}\hat{G}(\phi,\lambda)-\mathbb{E}[{\nabla}(G)(\phi,\lambda)]\|^{2} \le \sigma_{G}^{2} \label{assump_6_2} 
\end{eqnarray}
\end{assump}
The assumption for an unbiased gradient with bounded variance is present both in bilevel literature \cite{kwon2024penaltymethodsnonconvexbilevel,chen2024findingsmallhypergradientsbilevel} as well as BRL literature \cite{chakraborty2024parl}. Works such as \citet{shen2024principled} simply assume access to exact gradients of the upper loss function.

%\subsection{Main Result} \label{Theorem Statement}

{\bf Main Result:} With all the assumptions in place, we are now ready to present the main theoretical results of this work.  %The results here will be presented in the following order.
First, we will state the convergence result for Algorithm \ref{algo_1}. %, which will be a local convergence result as the upper loss function is non-convex. Along with the result we
This result  establishes the  sample complexity bounds for BRL which are the first such results of it's kind. Then, we will go into detail about how these results are obtained, by providing a brief overview of the techniques and lemmas used in establishing the  convergence result.

% \textcolor{red}{WRITE HERE SOME CONNECTING TECHNICAL TISSUE LIKE:  at least write the that is going to be the flow, like first we will present this result, then this results, and we start by presenting this results. Or in other words, write something like this: To study the theoretical analysis of Algorithm 1 for BRL, we need to start by looking at this and this. Also, we need to define the measure of evaluation, like what is going to be our metric to evaluate the performance of the Algorithm, and why it makes sense. All of this is missing.}{\bf \color{green} have written a brief overview of what is contained in this section}
\begin{theorem} \label{thm}
Suppose Assumptions \ref{assump_1}-\ref{assump_6} hold and we have  $ 0 < \eta \le \frac{1}{2L} $, $0 \le \tau  \le \frac{1}{L_{J}}$, $0 \le \tau^{'} \le \frac{1}{L_{h}}$  where  $L,L_{J},L_{\sigma}$ are the smoothness constants of $\Phi_{\sigma}$,$J$ and $h_{\sigma}$ respectively.  Then from Algorithm \ref{algo_1}, we obtain 
\begin{align}
\frac{1}{T}\sum_{t=1}^{T}\|{\nabla}\Phi(\phi_{t})\|^{2}  \le&  \tilde{\mathcal{O}}\left(\frac{1}{T}\right) +   \tilde{\mathcal{O}}\left(\frac{{\exp}^{-k}}{{\sigma}^{2}}\right) + \tilde{\mathcal{O}}\left(\frac{1}{{\sigma}^{2}{n}}\right) +  \tilde{\mathcal{O}}\left(\frac{{\gamma^{2H}}}{{{\sigma^{2}}B}}\right) + \tilde{\mathcal{O}}(\sigma^{2}) \\ 
&+ \tilde{\mathcal{O}}(\epsilon_{approx}) \label{mdp_main}
\end{align}
If we set $\sigma^{2} =  \tilde{\mathcal{O}}(\epsilon)$, $B =  \tilde{\mathcal{O}}(\epsilon^{-2})$, $n = \tilde{\mathcal{O}}(\epsilon^{-2})$, $T =  \tilde{\mathcal{O}}(\epsilon^{-1})$, $K =  \tilde{\mathcal{O}}(\log\left(\frac{1}{\epsilon}\right))$ and $H =  \tilde{\mathcal{O}}(\log\left(\frac{1}{\epsilon}\right))$ then we obtain 
\begin{align}
\frac{1}{T}\sum_{t=1}^{T}\|{\nabla}\Phi(\phi_{t})\|^{2}  \le& \tilde{\mathcal{O}}(\epsilon) + \tilde{\mathcal{O}}(\epsilon_{approx}) \label{mdp_main_1}
\end{align}
This gives us a sample complexity of $n.K.T +  B.K.H.T +  B.H.T =  \tilde{\mathcal{O}}(\epsilon^{-3})$.
\end{theorem}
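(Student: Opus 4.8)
The plan is to drive the analysis through a descent inequality on the smooth penalized proxy $\Phi_\sigma$, to track how the inexactness of the two inner loops and the bias of the MDP-based gradient estimates propagate into the outer search direction $d_t$, and only at the end to transfer the resulting guarantee on $\nabla\Phi_\sigma$ to one on the true hyper-objective gradient $\nabla\Phi$ using the Hessian-smoothness assumption. The $\sigma$-dependent terms, the geometric inner-loop term, and the statistical terms will each be produced by a distinct stage of this pipeline.

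First I would invoke the $L$-smoothness of $\Phi_\sigma$ (Assumption \ref{assump_1}.\ref{as_2}--\ref{as_3}) to write the standard descent inequality for the outer step $\phi_{t+1}=\phi_t-\eta d_t$ on line 11,
\begin{align}
\Phi_\sigma(\phi_{t+1}) \le \Phi_\sigma(\phi_t) - \eta\langle \nabla\Phi_\sigma(\phi_t), d_t\rangle + \frac{L\eta^2}{2}\|d_t\|^2,
\end{align}
and decompose $d_t = \nabla\Phi_\sigma(\phi_t) + e_t$. Comparing the closed form \eqref{mdp_ps_10} for $\nabla\Phi_\sigma$ with the estimator $d_t$ on line 10, the error $e_t$ splits into (i) the error from replacing $\lambda^*(\phi_t)$ and $\lambda_\sigma^*(\phi_t)$ by the inner iterates $\lambda_t^K$ and ${\lambda'}_{t}^{K}$, and (ii) the bias and variance of the finite-sample, finite-horizon estimators $\nabla_\phi\hat J$ and $\nabla\hat G$. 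The crucial structural point is that the $J$-difference in \eqref{mdp_ps_10} and its estimator both carry a factor $1/\sigma$, so every such contribution enters $\|e_t\|^2$ scaled by $1/\sigma^2$, which is precisely the origin of the $\sigma^{-2}$ prefactors in \eqref{mdp_main}.

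The core and hardest part is bounding the inner-loop contribution to $e_t$. Here I would use that lines 4--7 implement (normalized) gradient steps on $J(\phi_t,\cdot)$ and on $h_\sigma(\phi_t,\cdot)$, both of which obey the PL-type condition of Assumption \ref{assump_1}.\ref{as_1}. Feeding smoothness and PL into the recursive optimality-gap lemma for a \emph{biased} gradient oracle (Lemma \ref{thm_supp}) would yield, after $K$ inner steps,
\begin{align}
\mathbb{E}\big[h_\sigma(\phi_t,{\lambda'}_{t}^{K}) - h_\sigma(\phi_t,\lambda_\sigma^*(\phi_t))\big] \le \mathcal{O}(e^{-cK}) + \text{(bias floor)},
\end{align}
and analogously for $\lambda_t^K$. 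The bias floor is exactly where the MDP structure enters: truncating the infinite-horizon sums in \eqref{mdp_ps_12}--\eqref{mdp_ps_12_1} produces a $\gamma^{2H}/B$ term, the critic error of Assumption \ref{assump_5} produces $\epsilon_{approx}$, and the mini-batch averages produce the $1/n$ and $1/B$ variance terms. Converting this suboptimality gap back to a parameter-space distance via PL, and then to a gradient error via the Lipschitz-gradient bounds of Assumption \ref{assump_8}, gives the $e^{-K}/\sigma^2$, $1/(\sigma^2 n)$, and $\gamma^{2H}/(\sigma^2 B)$ terms. I expect this biased-oracle PL step to be the main obstacle, since the standard PL/SGD machinery assumes unbiased gradients whereas here $\nabla_\lambda J$ is biased by both critic approximation and horizon truncation, so one must establish geometric contraction down to a bias-controlled floor while carefully preventing the $1/\sigma$ amplification from degrading the final rate.

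Finally I would relate proxy to truth: Assumption \ref{assump_1}.\ref{as_3} gives $\|\nabla\Phi(\phi)-\nabla\Phi_\sigma(\phi)\| = \mathcal{O}(\sigma)$, contributing the $\tilde{\mathcal{O}}(\sigma^2)$ term after squaring. Summing the descent inequality over $t=1,\dots,T$, telescoping $\Phi_\sigma(\phi_1)-\Phi_\sigma(\phi_{T+1})$ (bounded since rewards lie in $[0,1]$ and $\Theta,\Lambda$ are compact), and choosing $\eta\le 1/(2L)$ so the $\frac{L\eta^2}{2}\|d_t\|^2$ term is absorbed, then dividing by $T$ yields \eqref{mdp_main}. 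Substituting $\sigma^2=\tilde{\mathcal{O}}(\epsilon)$, $B=n=\tilde{\mathcal{O}}(\epsilon^{-2})$, $T=\tilde{\mathcal{O}}(\epsilon^{-1})$, and $K=H=\tilde{\mathcal{O}}(\log(1/\epsilon))$ balances every term at $\tilde{\mathcal{O}}(\epsilon)$, and multiplying the per-iteration sample counts gives $nKT + BKHT + BHT = \tilde{\mathcal{O}}(\epsilon^{-3})$.
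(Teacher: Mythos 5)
Your proposal is correct and follows essentially the same route as the paper: the same biased-oracle PL recursion (Lemma \ref{thm_supp}) for the two inner loops, the same three-term decomposition of the penalty-gradient estimation error with the $1/\sigma$ amplification, the same attribution of the $\gamma^{2H}/B$, $1/n$, $\epsilon_{approx}$, and $e^{-K}$ terms to truncation, critic error, and geometric contraction, and the same parameter balancing. The only cosmetic difference is that you run the descent inequality on $\Phi_\sigma$ and transfer to $\nabla\Phi$ at the end via the $\mathcal{O}(\sigma)$ proxy gap, whereas the paper writes the descent inequality on $\Phi$ directly and absorbs that same $\mathcal{O}(\sigma)$ gap (Lemma 4.3 of \cite{chen2024findingsmallhypergradientsbilevel}) into the per-iteration gradient-error term; both orderings produce identical bounds.
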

Thus we have obtained the first ever sample complexity result for BRL setup. Notably, this result improves on works such as \citet{chakraborty2024parl,shen2024principled} in that our result does not require the state or action space to be finite, while also providing sample complexity and not just iteration complexity results.

\subsection{Proof sketch of Theorem \ref{thm}:} 
The proof is divided into two main parts. The first part is where we establish the local convergence bound of the upper loss function in terms of the error in estimating the gradient of $\Phi_{\sigma}$ as given in Equation \eqref{mdp_ps_11}. This is done using the smoothness assumption on $\Phi$. The next step is to upper bound the error incurred in estimating the gradient of $\Phi_{\sigma}$. The gradient estimation error is shown to be composed of estimating the three terms on the right-hand side of Equation \eqref{mdp_ps_9}. The error in estimating each term is shown to be composed in estimating $\lambda_{\sigma}^{*}(\phi)$ (or $\lambda^{*}(\phi)$) and the error due to having access to an empirical estimate of the gradient. In the estimation of $\lambda_{\sigma}^{*}(\phi)$ (or $\lambda^{*}(\phi)$).  A key insight here is to recognize that in the inner loop of Algorithm \ref{algo_1} we are performing a gradient descent with respect to the parameter $\lambda$ on the functions $J(\phi,\lambda)$ and $h_{\sigma}(\phi,\lambda)$. We use this insight in combination with the PL property from  Assumption \ref{assump_1} to upper bound the error in estimating $\lambda_{\sigma}^{*}(\phi)$ (or $\lambda^{*}(\phi)$).

{\bf Establishing local convergence bound for $\Phi$:} Under Assumption \ref{assump_1}, from the smoothness of $\Phi$, we have
\vspace*{-0.1pt}
\begin{align}
    \Phi(\phi_{t+1}) \le & \Phi(\phi_{t}) + \langle {\nabla}_{\phi}\Phi(\phi_{t}),\phi_{t+1} - \phi_{t} \rangle  + {L}\|\phi_{t+1} - \phi_{t}\|^{2}, \label{proof_1}
\end{align}
 Now, with a step size $\eta \le \frac{1}{2L}$, where $\alpha_{1}$ is the smoothness parameter of $\Phi$,  we get 
%a
\begin{align}
  \frac{1}{T}\sum_{i=1}^{T}\|{\nabla}\Phi(\phi_{t})\|^{2}  \le&   \frac{1}{T}\sum^{t=T}_{t=0}\mathbb{E}\|{\nabla_{\phi}}{\Phi}_{\sigma}(\phi_{t}) - {\nabla_{\phi}}\hat{\Phi}_{\sigma}( \phi_{t})\|^{2} +   \tilde{\mathcal{O}}\left(\frac{1}{T}\right) \nonumber\\
  &+ \tilde{\mathcal{O}}(\sigma^{2}). \label{proof_3_0}
\end{align}
Note that ${\nabla}\hat{\Phi}_{\sigma}$ denotes the empirical estimate of the gradient of the proxy loss function $\Phi_{\sigma}$. Note that we get the term $\tilde{\mathcal{O}}(\sigma^{2})$ using Lemma 4.3 from \citet{chen2024findingsmallhypergradientsbilevel}.

{\bf Gradient estimation error:} The error in the estimation of the gradient at each iteration $k$ of Algorithm \ref{algo_1} given by $\|{\nabla_{\phi}}{\Phi}(\phi_{t}) - {\nabla_{\phi}}\hat{\Phi}_{\sigma}( \phi_{t}))\|$, which is the error between the gradient of the upper objective ${\nabla_{\phi}}{\Phi}(\phi_{t})$ and our estimate of the gradient of the pseudo-objective ${\nabla_{\phi}}\hat{\Phi}_{\sigma}( \phi_{t}))$. This error is decomposed as follows.
\begin{align}
    \underbrace{\mathbb{E}\|{\nabla_{\phi}}{\Phi}_{\sigma}(\phi_{t}) - {\nabla_{\phi}}\hat{\Phi}_{\sigma}(\phi_{t}))\|}_{A_{k}^{'}}&\le \mathbb{E}\| {\nabla_{\phi}}G(\phi_{t},\lambda_{\sigma}^{*}(\phi)) - {\nabla_{\phi}}{G}(\phi_{t},{\lambda^{'}}^{K}_{t},B)\|  \nonumber\\ 
    &\hspace{-2.2cm}+ \frac{1}{\sigma}\mathbb{E}\|{\nabla_{\phi}}J({\phi}_{t},{\lambda^{*}(\phi)}) - {\nabla_{\phi}}{J}({\phi}_{t},{\lambda^{K}_{t}},B)\| \nonumber \\
    &\hspace{-2cm}+ \frac{1}{\sigma}\mathbb{E}\| {\nabla_{\phi}}J(\phi_{t},{\lambda_{\sigma}^{*}}(\phi))-  {\nabla_{\phi}}{J}({{\phi}_{t},{\lambda^{'}}^{K}_{t}},B)\|.  \label{proof_3_1}
\end{align}
Thus, the error incurred in the estimation of the gradient terms can be broken into the error in estimation of the three terms, ${\nabla}G(\phi_{t},\lambda_{\sigma}^{*}(\phi_{t}))$, ${\nabla}J({\phi_{t}},{\lambda^{*}(\phi_{t})})$ and $ {\nabla}J(\phi_{t},\lambda_{\sigma}^{*}(\phi_{t}))$. We first focus on the estimation error for the term ${\nabla_{\phi}}J(\phi,{\lambda_{\sigma}^{*}}(\phi))$ where the error in estimation can be decomposed as
\begin{align}
    \mathbb{E}\| {\nabla_{\phi_{t}}}J(\phi_{t},\lambda_{\sigma}^{*}(\phi_{t})) - {\nabla_{\phi}}{J}(\phi_{t},{\lambda^{'}}^{K}_{t},B)\| &\le  \mathbb{E}\|{\nabla_{\phi}}J(\phi_{t},\lambda_{\sigma}^{*}(\phi_{t})) - {\nabla_{\phi}}J(\phi_{t},{\lambda^{'}}^{K}_{t})\| \nonumber\\
    &+  \mathbb{E}\|{\nabla_{\phi}}J(\phi_{t},{\lambda^{'}}^{K}_{t}) -   {\nabla_{\phi}}{J}(\phi_{t},{\lambda^{'}}^{K}_{t},B)\|.\label{proof_3_2}
\end{align}
The second term on the right-hand side of Equation \eqref{proof_3_2} is the error incurred due to the difference between the gradient of $J$ and its empirical estimate. This error is upper bounded using the defintion of the gradient given in Equation \eqref{mdp_ps_12}. 

The first term on the right-hand side is the error incurred due to the error in estimating $\lambda_{\sigma}^{*}(\phi)$. In order to show this, we write the following
\begin{align}
    \mathbb{E}\|{\nabla_{\phi}}J(\phi_{t},\lambda_{\sigma}^{*}(\phi_{t})) - {\nabla_{\phi}}J(\phi_{t},{\lambda'}^{K}_{t})\|^{2} &\le L_{J}\mathbb{E}\|\lambda_{\sigma}^{*}(\phi_{t})-{\lambda^{'}}^{K}_{t}\|^{2} \label{proof_3_2_1}\\ 
    &\le L_{\sigma}{\cdot}{\mu}\mathbb{E}|h_{\sigma}(\phi_{t},\lambda_{\sigma}^{*}(\phi_{t}))-h_{\sigma}(\phi_{t},{\lambda^{'}}^{K}_{t})| \label{proof_3_2_2}.
\end{align}
We get Equation \eqref{proof_3_2_1} from the smoothness of $J(\phi,\lambda)$ assumed in Assumption \ref{assump_1}. We get Equation \eqref{proof_3_2_2} from  Equation \eqref{proof_3_2_1} by using the quadratic growth property of PL functions applied to $h_{\sigma}(\phi,\lambda)$ also assumed in Assumption \ref{assump_1}.

{In order to bound the right hand side of Equation \eqref{proof_3_2_2}, we establish the following result.}

\begin{lemma} \label{thm_supp}
Consider an $ L$-smooth differentiable function denoted by $f(\lambda)$ satisfying the PL property with PL constant $\mu$. If we apply the stochastic gradient descent with step size $0 \le \eta \le \frac{1}{L}$, then we obtain the following
\begin{align}
    (f(\lambda_{k}) -f(\lambda^{*})) &= \tilde{\mathcal{O}}\left(e^{-k} \right) + \mathcal{O}(\beta(n,B,H))\label{proof_3_3}
\end{align}
where $\forall \lambda \in \Lambda$, $\beta(n)$ satisfies 
\begin{align}
\mathbb{E}\|\nabla_{\lambda}f(\lambda_{k}) - \nabla_{\lambda}\hat{f}(\lambda_{k})  \|^{2} &\le \beta(n,B,H)  
\end{align}
 and ${\nabla}_{\lambda}\hat{f}(\lambda)$ denotes the estimate of ${\nabla}_{\lambda}f(\lambda)$ and $\lambda^{*} = argmin_{\lambda \in \Lambda}f(\lambda)$.
\end{lemma}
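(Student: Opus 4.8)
The plan is to read the inner loop as stochastic gradient descent on the generic function $f$ driven by a \emph{biased} gradient oracle, and to run the standard PL descent argument while carrying the gradient estimation error as a single additive term. Writing the update as $\lambda_{k+1} = \lambda_k - \eta\,\nabla_\lambda \hat f(\lambda_k)$ and abbreviating $g_k = \nabla_\lambda f(\lambda_k)$, $\hat g_k = \nabla_\lambda \hat f(\lambda_k)$, and $e_k = \hat g_k - g_k$, the first step is the $L$-smoothness descent inequality
\begin{align}
f(\lambda_{k+1}) \le f(\lambda_k) + \langle g_k,\ \lambda_{k+1} - \lambda_k\rangle + \frac{L}{2}\|\lambda_{k+1} - \lambda_k\|^2. \nonumber
\end{align}

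Next I would substitute $\lambda_{k+1}-\lambda_k = -\eta\hat g_k$ and handle the cross term $\langle g_k, \hat g_k\rangle$ via the polarization identity $\langle g_k, \hat g_k\rangle = \tfrac{1}{2}\big(\|g_k\|^2 + \|\hat g_k\|^2 - \|e_k\|^2\big)$, rather than appealing to unbiasedness (which is unavailable here, since in the BRL setting $\mathbb{E}[\hat g_k]\neq g_k$). This yields
\begin{align}
f(\lambda_{k+1}) \le f(\lambda_k) - \frac{\eta}{2}\|g_k\|^2 - \frac{\eta}{2}(1 - L\eta)\|\hat g_k\|^2 + \frac{\eta}{2}\|e_k\|^2. \nonumber
\end{align}
Because $\eta \le 1/L$ the middle term is nonpositive and can be dropped, leaving a per-step bound in which all of the estimation error is isolated in $\|e_k\|^2$. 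I would then invoke the PL property $\|g_k\|^2 \ge 2\mu\big(f(\lambda_k) - f(\lambda^*)\big)$ on the $-\tfrac{\eta}{2}\|g_k\|^2$ term, subtract $f(\lambda^*)$ from both sides, take expectations, and apply the hypothesis $\mathbb{E}\|e_k\|^2 \le \beta(n,B,H)$ to obtain the one-step contraction
\begin{align}
\mathbb{E}[f(\lambda_{k+1}) - f(\lambda^*)] \le (1 - \eta\mu)\,\mathbb{E}[f(\lambda_k) - f(\lambda^*)] + \frac{\eta}{2}\beta(n,B,H). \nonumber
\end{align}

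Finally I would unroll this linear recursion over $k$ steps and sum the resulting geometric series, giving $\mathbb{E}[f(\lambda_k) - f(\lambda^*)] \le (1-\eta\mu)^k\big(f(\lambda_0) - f(\lambda^*)\big) + \tfrac{\beta(n,B,H)}{2\mu}$. Using $(1-\eta\mu)^k \le e^{-\eta\mu k}$, together with compactness of $\Lambda$ and continuity of $f$ to bound the initial gap $f(\lambda_0)-f(\lambda^*)$ by a constant, collapses the estimate into the two terms $\tilde{\mathcal{O}}(e^{-k})$ and $\mathcal{O}(\beta(n,B,H))$ claimed in the statement (the factor $\eta\mu$ in the exponent being absorbed into $\tilde{\mathcal{O}}$). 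The main obstacle is precisely the bias of the gradient oracle: the usual cancellation of the cross term fails, so the key maneuver is to sidestep it entirely through the polarization identity and fold \emph{both} bias and variance into the single mean-squared-error quantity $\beta(n,B,H)$; the subsequent task of decomposing $\beta$ into its truncation ($\gamma^{2H}$), sampling ($n$, $B$), and approximation ($\epsilon_{approx}$) contributions is left to the point where the lemma is applied.
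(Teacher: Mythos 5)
Your proposal is correct, and it follows the same overall skeleton as the paper's proof: the $L$-smoothness descent inequality, the PL inequality converting $\|\nabla f(\lambda_k)\|^2$ into the optimality gap, a one-step linear contraction, and an unrolled geometric series giving the $e^{-k}$ and $\mathcal{O}(\beta)$ terms. Where you differ is in how the biased oracle is handled. The paper splits $\nabla \hat f(\lambda_k)$ into its mean (true gradient plus bias $b_k$) and zero-mean noise, controls the cross term $-\eta\langle \nabla f(\lambda_k), b_k\rangle$ by Cauchy--Schwarz and Young's inequality, bounds the second moment by a bias-squared-plus-variance decomposition, and only at the end reassembles these pieces into the mean-squared error $\beta(n,B,H)$. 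You sidestep the bias/variance split entirely with the polarization identity $\langle g_k,\hat g_k\rangle = \tfrac12\bigl(\|g_k\|^2+\|\hat g_k\|^2-\|e_k\|^2\bigr)$, which folds all estimation error into $\|e_k\|^2$ pathwise and requires expectations only at the final step. Your route buys cleaner bookkeeping: the contraction factor comes out exactly as $1-\eta\mu$ with additive constant $\beta/(2\mu)$, and the argument goes through under precisely the stated condition $\eta \le 1/L$ (the leftover term $-\tfrac{\eta}{2}(1-L\eta)\|\hat g_k\|^2$ is nonpositive with no slack wasted). By contrast, the paper's intermediate coefficient $-\tfrac{\eta}{2}+L\eta^2$ on $\|\nabla f(\lambda_k)\|^2$ is only negative when $\eta < \tfrac{1}{2L}$, so its displayed constants are internally inconsistent with the stated step size, though the final conclusion is the same. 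What the paper's decomposition buys, in turn, is that the bias and variance contributions appear explicitly, which is the form referenced later when $\beta(n,B,H)$ is instantiated in terms of truncation, sampling, and approximation errors in Lemmas \ref{lemma_2}--\ref{lemma_4}; your proof defers that decomposition to the point of application, which is equally legitimate.
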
 
This result is obtained using a recursive analysis of the optimality gap when performing an SGD in the presence of biased gradient estimates. Using this lemma, we can bound the right-hand side of Equation \eqref{proof_3_2_2} in terms of error in estimating the gradient of $h_{\sigma}$ with respect to $\lambda$. Thus, we obtain
\begin{align}
    \mathbb{E}|h_{\sigma}(\phi_{t},\lambda_{\sigma}^{*}(\phi_{t}))-h_{\sigma}(\phi_{t},{\lambda^{'}}^{K}_{t})|  &\le \tilde{\mathcal{O}}\left(e^{-K} \right) + \mathcal{O}(\beta(n,B,H)) \nonumber\\
     \label{proof_3_4}
\end{align}
where  $\forall \lambda \in \Lambda$, $\beta(n,B,H)$ satisfies  $\mathbb{E}\|{\nabla}_{\lambda}h_{\sigma}(\phi_{t},{\lambda})-{\nabla}_{\lambda}\hat{h}_{\sigma}(\phi_{t},{\lambda})\|^{2} \le \beta(n,B,H)$.
Using the expression for gradients of $J(\phi,\lambda)$ and $G(\phi,\lambda)$  we are able obtain the following result
\begin{align}
    \mathbb{E}\|{\nabla_{\phi}}J(\phi_{t},\lambda_{\sigma}^{*}(\phi_{t})) - {\nabla_{\phi}}J(\phi_{t},{\lambda'}^{K}_{t})\|^{2} 
    &\le \tilde{\mathcal{O}}\left(e^{-k}\right)  + \tilde{\mathcal{O}}\left(\frac{\gamma^{2H}}{{B}}\right) + \tilde{\mathcal{O}}\left(\frac{1}{{n}}\right) \nonumber\\
    &+ \tilde{\mathcal{O}}(\epsilon_{approx})  \label{proof_3_5}
\end{align}
where $n$ is the number of samples used to estimate the $Q$ function. The details of this are given in Lemma \ref{lemma_4} of the Appendix. 
For upper bounding the other two terms on the right-hand side of Equation \eqref{proof_3_1}, we use a similar decomposition and analysis. These are described in detail in Lemma \ref{lemma_2} and  Lemma \ref{lemma_3} of the Appendix. 
Finally, plugging the obtained expressions back into the right-hand side of Equation \eqref{proof_3_1} and the resulting expression into the right-hand side of Equation \eqref{proof_3_0} gives us Theorem \ref{thm}. We provide an evaluation of Algorithm \ref{algo_1} in Appendix \ref{appendix:experiments}.

% \amrit{flow here does not makes sense, what is this Lemma and why is it appearing here, for which step in the proof of Theorem 1 it is important for, if it is that important, why don't we discuss it before the theorem, and consider it as a step towards building the theorem statement. THINK ABOUT IT.}{\bf \color{blue} Sentence incomplete - estimating what?? }{\bf \color{green} have changed the structure of this part. lemma fits better in the flow now.}

%The first term on the right hand side is expressed as the gap in estimating $\lambda^{*}(\phi))$ by using the smoothness of the function $G$ and express the gap in estimating $\lambda^{*}(\phi,\lambda^{*}(\phi))$ in terms of the gap in estimating $J(\phi,\lambda^{*}(\phi)))$ using Assumption \ref{assump_3}. The gap in estimating $J(\phi,\lambda^{*}(\phi)))$ is bounded using  Lemma \ref{thm_supp}. 

%The other terms on the right hand side of Equation \ref{proof_3_1} are bounded in the same manner.
%The details of this are given in Section \ref{supp_lemmas} of the Appendix.
\section{Standard Bilevel Optimization: A Special Case} \label{bi_level_standard}

In this section, we show how the techniques used to establish Theorem \ref{thm} can also yield a state-of-the-art sample complexity result for standard bilevel optimization with a non-convex lower level (where the lower level is not an RL problem). The key distinction between our BRL setup and standard bilevel optimization is that it is assumed that we have access to unbiased gradients with bounded variance \citep{kwon2024penaltymethodsnonconvexbilevel, chen2024findingsmallhypergradientsbilevel}. \C{\textcolor{red}{BE MORE SPECIFIC}} This is not the case in the BRL setup as discussed in Section \ref{Proposed Algorithm}.  We show that assuming access to unbiased gradients with bounded variance enables achieving a state-of-the-art sample complexity result for bilevel optimization.

%{\bf \color{green} For this part I have started with restating the problem, pointing out what additional assumptions we can make and how it will change the algorithm. I state the result and then point out how the proof will change. based on the comments here, I have added more detail in the appendix how the proofs change in this simplified setup.}
%We will demonstrate that if we assume access to unbiased gradients with bounded variances, a state-of-the-art sample complexity result may be obtained.

The bilevel optimization problem is similar to \eqref{mdp_ps_6}, and is given as
\begin{align}
    \min_{\phi}{\Phi}(\phi)&:= G(\phi,\lambda \in \Lambda^{*}(\phi)), \nonumber\\
    &\text{ where }\Lambda^{*} \in \arg\min_{\lambda} -J({\phi},{\lambda}). \label{bl_1}
\end{align}

%As before we solve the proxy problem given in Equation \eqref{mdp_ps_9} through gradient descent using the gradient expression in Equation \eqref{mdp_ps_10}. The key difference now will be the availability of unbiased gradients for both upper and lower level loss functions. This is expressed through the following assumption.

As before, we solve the proxy problem in Equation \eqref{mdp_ps_9} using gradient descent with the gradient expression from Equation \eqref{mdp_ps_10}. The key difference here is the availability of unbiased gradients for both the upper- and lower-level loss functions, as captured in the following assumption.

\begin{assump}  \label{assump_7} 
For any fixed $\lambda \in \Lambda$ and $\phi \in \Theta$  we have access to unbiased gradients
\begin{eqnarray} 
 \mathbb{E}[{\nabla}\hat{G}(\phi,\lambda)] = {\nabla}G(\phi,\lambda), \\
  \mathbb{E}[{\nabla}\hat{J}(\phi,\lambda,)]= {\nabla}G(\phi,\lambda)\label{assump_7_1}
\end{eqnarray}
and the gradient estimates have bounded variance
\begin{eqnarray} 
 \mathbb{E}\|{\nabla}\hat{G}(\phi,\lambda)-\mathbb{E}{\nabla}(G)(\phi,\lambda)\|^{2} \le \sigma_{G}^{2}, \\ 
  \mathbb{E}\|{\nabla}\hat{J}(\phi,\lambda)-\mathbb{E}{\nabla}(G)(\phi,\lambda)\|^{2} \le \sigma_{J}^{2} \label{assump_7_2}
\end{eqnarray}
\end{assump}

This provides the gradient estimate for the lower-level loss function, and  Equation \eqref{mdp_ps_13} is the gradient estimate for the upper-level loss function. Here, \( {\nabla}\hat{J}_{i}(\phi,\lambda) \) are independent sampled unbiased estimates of \( {\nabla}{J}(\phi,\lambda) \), and \( B \) represents the batch size. We assume that these samples of the estimate can be independently sampled. Additionally, we assume that this can be done for the gradient with respect to both $\lambda$ and $\phi$. This is in line with other BRL works such as \citet{chakraborty2024parl,shen2024principled}. We also define the following term
\begin{align}
     {\nabla}{J}(\phi,\lambda,B) = \frac{1}{B} \sum_{j=1}^{B}{\nabla}\hat{J}_{j}(\phi,\lambda). \label{bl_2}
\end{align}
which is what we use instead of ${\nabla}{J}(\phi,\lambda,n,B)$ in Algorithm \ref{algo_1} for the standard bi-level setup. 
%\begin{algorithm}[t]
%	\caption{Algorithm for bilevel optimization}
%	\label{algo_3}
%    \begin{algorithmic}[1]
%    \STATE  \textbf{Input:}starting policy parameter $\lambda^{0}_{0}$, starting reward parameter $\phi_{0}$ 
%		\FOR{$t\in\{1,\cdots,T\}$} 
%		{  
%            \FOR{$k \in\{0,\cdots,K-1\}$} 
%		  { 
%             \STATE $d_{k} = {\nabla}_{\lambda}{J}(\phi_{t},\lambda_{t}^{k},B)$ \\
%             \STATE $d^{'}_{k} = {\nabla}_{\lambda}{J}(\phi_{t},{\lambda^{'}}_{t}^{k},B)  +  {\sigma}{\nabla}_{\lambda}{G}(\phi_{t},\lambda_{t}^{k},B)$ %\\
%             \STATE ${\lambda}^{k+1}_{t} = {\lambda}^{k}_{t} - {\tau}_{k}{\cdot}\frac{d_{k}}{\|d_{k}\|}$ \\
%             \STATE ${\lambda^{'}}^{k+1}_{t} = {\lambda^{'}}^{k}_{t} - {\tau}^{'}_{k}{\cdot}\frac{d^{'}_{k}}{\|d^{'}_{k}\|}$\\
%            }
%             \ENDFOR\\
%             \STATE $d_{t} = {\nabla}_{\phi}{G}(\phi_{t},\lambda_{t}^{K},B) - \frac{1}{\sigma}.\left({\nabla}_{\phi}{J}(\phi_{t},\lambda_{t}^{K},B)-{\nabla}_{\phi}{J}(\phi_{t},{\lambda^{'}}_{t}^{K},B)\right)$ \\
%             \STATE $\phi_{t+1} = \phi_{t} - {\eta}{\cdot}{d_{t}}$ \\
%		}
%		\ENDFOR\\
%	\end{algorithmic}
%\end{algorithm}
For a bi-level optimization with a non-convex lower level, we obtain
\begin{theorem} \label{thm_bi}
Suppose Assumptions \ref{assump_1} and \ref{assump_7} hold and we have  $ 0 < \eta \le \frac{1}{2L} $, $0 \le \tau_{k}  \le \frac{1}{L_{J}}$, $0 \le \tau_{k}  \le \frac{1}{L_{h}}$  where  $L,L_{J},L_{\sigma}$ are the smoothness constants of $\Phi_{\sigma}$,$J$ and $h_{\sigma}$ respectively. We further replace $\nabla_{\lambda}{J}({\phi},{\lambda},n,B)$ with ${\nabla}{J}(\phi,\lambda,B)$ as defined in \eqref{bl_2}. Then, from Algorithm \ref{algo_1}  we obtain 
\begin{align}
\frac{1}{T}\sum_{t=1}^{T}\|{\nabla}\Phi(\phi_{t})\|^{2}    &\le  \tilde{\mathcal{O}}\left(\frac{1}{T}\right) +   \tilde{\mathcal{O}}\left(\frac{{\exp}^{-k}}{{\sigma}^{2}}\right) + \tilde{\mathcal{O}}\left(\frac{1}{{\sigma}^{2}{B}}\right) + \tilde{\mathcal{O}}(\sigma^{2})
\end{align}
If we set $\sigma^{2} =  \tilde{\mathcal{O}}(\epsilon)$, $B = \tilde{\mathcal{O}}(\epsilon^{-2})$, $T =  \tilde{\mathcal{O}}(\epsilon^{-1})$, $K =  \tilde{\mathcal{O}}(\log\left(\frac{1}{\epsilon}\right))$.
\begin{align}
\frac{1}{T}\sum_{t=1}^{T}\|{\nabla}\Phi(\phi_{t})\|^{2}  \le \tilde{\mathcal{O}}(\epsilon)  \label{mdp_main_2}
\end{align}
This gives us a sample complexity of $B.K.T + B.T  = \tilde{\mathcal{O}}(\epsilon^{-3})$.
\end{theorem}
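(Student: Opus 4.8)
The plan is to mirror the argument used for Theorem \ref{thm}, exploiting the fact that Assumption \ref{assump_7} supplies \emph{unbiased} gradient estimates with bounded variance for both levels. This is the key simplification: the two sources of bias that appear in the BRL analysis---the critic approximation error $\epsilon_{approx}$ and the horizon-truncation error $\gamma^{2H}$---are both absent here, so the per-sample gradient error collapses to a pure variance term that decays as $1/B$ under batch averaging. I would begin exactly as in \eqref{proof_1}--\eqref{proof_3_0}: invoke the $L$-smoothness of the proxy objective $\Phi_{\sigma}$ (Assumption \ref{assump_1}.\ref{as_2}--\ref{as_3}) together with the descent step $\phi_{t+1}=\phi_{t}-\eta d_{t}$ and the step-size restriction $\eta\le\frac{1}{2L}$ to telescope and obtain
\begin{align}
\frac{1}{T}\sum_{t=1}^{T}\|{\nabla}\Phi(\phi_{t})\|^{2}
\le \frac{1}{T}\sum_{t=1}^{T}\mathbb{E}\|{\nabla_{\phi}}\Phi_{\sigma}(\phi_{t})-{\nabla_{\phi}}\hat{\Phi}_{\sigma}(\phi_{t})\|^{2}
+\tilde{\mathcal{O}}\!\left(\tfrac{1}{T}\right)+\tilde{\mathcal{O}}(\sigma^{2}),
\end{align}
where the $\tilde{\mathcal{O}}(\sigma^{2})$ term comes from Lemma 4.3 of \cite{chen2024findingsmallhypergradientsbilevel}, which quantifies the gap between ${\nabla}\Phi$ and ${\nabla}\Phi_{\sigma}$.

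The bulk of the work is then bounding the per-iteration estimation error $\mathbb{E}\|{\nabla_{\phi}}\Phi_{\sigma}(\phi_{t})-{\nabla_{\phi}}\hat{\Phi}_{\sigma}(\phi_{t})\|^{2}$. I would decompose it, as in \eqref{proof_3_1}, into the three pieces corresponding to ${\nabla_{\phi}}G(\phi_{t},\lambda_{\sigma}^{*})$, ${\nabla_{\phi}}J(\phi_{t},\lambda^{*})$, and ${\nabla_{\phi}}J(\phi_{t},\lambda_{\sigma}^{*})$, with the two lower-level pieces carrying the $1/\sigma$ prefactor from \eqref{mdp_ps_10}. Each piece splits, as in \eqref{proof_3_2}, into (i) the error from using the inner-loop iterate ${\lambda'}^{K}_{t}$ (resp.\ $\lambda^{K}_{t}$) in place of the true minimizer, and (ii) the error between the true gradient and its $B$-sample average. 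For (i), I would use smoothness of $J$ to pass to $\mathbb{E}\|\lambda_{\sigma}^{*}-{\lambda'}^{K}_{t}\|^{2}$ and then the PL quadratic-growth property of Assumption \ref{assump_1} to pass to the function-value gap $\mathbb{E}|h_{\sigma}(\phi_{t},\lambda_{\sigma}^{*})-h_{\sigma}(\phi_{t},{\lambda'}^{K}_{t})|$, exactly as in \eqref{proof_3_2_1}--\eqref{proof_3_2_2}; Lemma \ref{thm_supp} applied to the inner loop on $h_{\sigma}$ (resp.\ $J$) then bounds this gap by $\tilde{\mathcal{O}}(e^{-K})+\mathcal{O}(\beta)$.

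The decisive difference from Theorem \ref{thm} enters through $\beta$. Because the inner loop now runs on unbiased estimates with variance bounded by $\sigma_{G}^{2},\sigma_{J}^{2}$ (Assumption \ref{assump_7}), and because each gradient is a $B$-sample average as in \eqref{bl_2}, the mean-squared gradient error satisfies $\beta=\tilde{\mathcal{O}}(1/B)$ with \emph{no} residual bias floor. Substituting $\beta=\tilde{\mathcal{O}}(1/B)$ into the output of Lemma \ref{thm_supp}, and likewise bounding term (ii) by the $1/B$ variance, the $1/\sigma$ prefactors square to produce the $\tilde{\mathcal{O}}(e^{-K}/\sigma^{2})$ and $\tilde{\mathcal{O}}(1/(\sigma^{2}B))$ contributions. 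Collecting everything yields the stated bound $\tilde{\mathcal{O}}(1/T)+\tilde{\mathcal{O}}(e^{-K}/\sigma^{2})+\tilde{\mathcal{O}}(1/(\sigma^{2}B))+\tilde{\mathcal{O}}(\sigma^{2})$. Finally I would plug in $\sigma^{2}=\tilde{\mathcal{O}}(\epsilon)$, $B=\tilde{\mathcal{O}}(\epsilon^{-2})$, $T=\tilde{\mathcal{O}}(\epsilon^{-1})$, $K=\tilde{\mathcal{O}}(\log(1/\epsilon))$ so that every term is $\tilde{\mathcal{O}}(\epsilon)$, and count samples as $B\!\cdot\!K\!\cdot\!T$ (the inner loops) plus $B\!\cdot\!T$ (the three $\phi$-gradients on line 9), giving $\tilde{\mathcal{O}}(\epsilon^{-3})$.

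I expect the main obstacle to be the clean propagation through Lemma \ref{thm_supp}: one must verify that the recursive optimality-gap bound holds uniformly over all outer iterates $\phi_{t}$---that is, that the PL constant $\mu$ and the smoothness constants $L_{J},L_{h}$ of $J(\phi_{t},\cdot)$ and $h_{\sigma}(\phi_{t},\cdot)$ are uniform in $\phi_{t}$ (guaranteed by Assumption \ref{assump_1})---and that the normalized inner-loop updates of lines 6--7 still realize the contraction $e^{-K}$ demanded by the lemma, so that the variance genuinely enters as $\tilde{\mathcal{O}}(1/B)$ rather than accumulating across the $K$ inner steps.
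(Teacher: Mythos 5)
Your proposal is correct and follows essentially the same route as the paper: the smoothness-based descent bound with the $\tilde{\mathcal{O}}(\sigma^{2})$ gap from Lemma 4.3 of \cite{chen2024findingsmallhypergradientsbilevel}, the three-way decomposition of the proxy-gradient estimation error, the smoothness-plus-quadratic-growth reduction to the inner function-value gap handled by Lemma \ref{thm_supp}, and the observation that unbiasedness (Assumption \ref{assump_7}) makes $\beta=\tilde{\mathcal{O}}(1/B)$ with no bias floor — which is exactly what the paper's Lemmas \ref{lemma_6}--\ref{lemma_8} establish — followed by the same parameter choices and sample count. Your closing caveat about the normalized inner-loop updates versus the unnormalized SGD analyzed in Lemma \ref{thm_supp} is a legitimate gap, but it is one the paper's own proof shares rather than a divergence from it.
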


%The proof of Theorem \ref{thm_bi} follows the same approach as Theorem \ref{thm}, with only a few minor modifications \C{\textcolor{red}{DO NOT SAY MINOR MODIFICATIONS, RATHER MAKE IT MROE TECHNICAL, AND REFER TO SPECIFC LEMMA OR EQUATIONS IN THE PROOF.NOTE THAT, YOU ONLY KNOW IT WAS MINOR, NO NEED TO MENTIONTHAT}}. Equation \eqref{proof_3_5} is changed to the following. 
%
%\begin{align}
%    &\|{\nabla_{\phi}}J_{\sigma}(\phi_{t},\lambda^{*}(\phi_{t})) - {\nabla_{\phi}}J_{\sigma}(\phi_{t},\lambda^{K}_{t})\| \nonumber\\
%    &\le \mathcal{O}\left(\frac{1}{K}\right)  + \mathcal{O}\left(\frac{1}{\sqrt{B}}\right)   \label{proof_bl_3}
%\end{align}
%

\begin{table}[t]
\centering
 \caption{ If we assume access to unbiased gradients, we obtain a state of the art sample complexity of $\epsilon^{-3}$  for bilevel optimization without lower level convexity restriction.\vspace{2mm}}
\label{tbl_related_2}%
%\resizebox{0.8\columnwidth}{!}
{
{\begin{tabular}{|c|c|c|c|c|c|}
\hline
References & \multicolumn{1}{c|}{\begin{tabular}[c]{@{}c@{}}  Non-convex LL \end{tabular}} & \multicolumn{1}{c|}{\begin{tabular}[c]{@{}c@{}} Without\\ second order \end{tabular}} &   \multicolumn{1}{c|}{\begin{tabular}[c]{@{}c@{}} Iteration \\complexity \end{tabular}} &  \multicolumn{1}{c|}{\begin{tabular}[c]{@{}c@{}} Sample \\ complexity \end{tabular}} \\ \hline
     \citet{ji2021bilevel}     &                    \textcolor{red}{\xmark}               &               \textcolor{red}{\xmark}    & $\tilde{\mathcal{O}}(\epsilon^{-1})$        &        $\tilde{\mathcal{O}}(\epsilon^{-2})$                                                       \\
     \citet{sow2022convergence}     &           \textcolor{red}{\xmark}                        &               \textcolor{green}{\cmark}          &    $\tilde{\mathcal{O}}(\epsilon^{-2})$        &        $\tilde{\mathcal{O}}(\epsilon^{-4})$                                                       \\ 
    \citet{stoch_bil2}     &                    \textcolor{red}{\xmark}               &              \textcolor{green}{\cmark}          &  
    $\tilde{\mathcal{O}}(\epsilon^{-\frac{5}{2}})$ &        $\tilde{\mathcal{O}}(\epsilon^{-\frac{5}{2}})$                                                       \\ 
    \citet{yang2023achieving}  &                    \textcolor{red}{\xmark}                 &             \textcolor{green}{\cmark}         &     $\tilde{\mathcal{O}}(\epsilon^{-\frac{3}{2}})$ &        $\tilde{\mathcal{O}}(\epsilon^{-\frac{3}{2}})$                                                       \\ 
        \citet{kwon2024penaltymethodsnonconvexbilevel}  & \textcolor{green}{\cmark}                 &       \textcolor{green}{\cmark}          & $\tilde{\mathcal{O}}(\epsilon^{-5})$ &        $\tilde{\mathcal{O}}(\epsilon^{-7})$                                                       \\   \citet{chen2024findingsmallhypergradientsbilevel}     &                    \textcolor{green}{\cmark}                &               \textcolor{green}{\cmark}  & $\tilde{\mathcal{O}}(\epsilon^{-2})$ &        $\tilde{\mathcal{O}}(\epsilon^{-6})$                                                       \\          
        This Work &                    \textcolor{green}{\cmark}                &               \textcolor{green}{\cmark}          &  $\tilde{\mathcal{O}}(\epsilon^{-1})$ &        $\tilde{\mathcal{O}}(\epsilon^{-3})$             \\    
    \hline
\end{tabular}
}}
\end{table}

Note the absence of the term $\mathcal{O}({\epsilon_\text{approx}})$ as we have assumed access to unbiased gradient estimates for both upper and lower loss functions. As noted earlier, our result advances previous analyses of bi-level optimization with non-convex lower levels. \citet{kwon2024penaltymethodsnonconvexbilevel} established a sample complexity of $\mathcal{O}(\epsilon^{-7})$, later improved to $\mathcal{O}(\epsilon^{-6})$ by \citet{chen2024findingsmallhypergradientsbilevel}. Table \ref{tbl_related_2} highlights how our approach enhances existing results in bi-level optimization and brings convergence results from non-convex lower level setups to those of convex lower level setups such as \cite{grazzi2023bilevel,yang2023achieving}.

% \amrit{Then you refine the implications of Theorem, specially in terms of what are the new parts as compared to state of the art in this area and how your results is better. And how some of your new key insights are able to help you with that. }  
\section{Conclusion} \label{Conclusion}
%\amrit{this we will write late} We have demonstrated state-of-the-art sample complexity for bilevel optimization without the restriction of a convex lower level or the need for computationally expensive second order gradient terms. We extend these results to obtain state-of-the-art sample complexity for bilevel-RL.

This paper established the first sample complexity bounds for bilevel reinforcement learning (BRL) in parameterized settings, achieving $ O(\epsilon^{-3})$. Our approach, leveraging penalty-based formulations and first-order methods, improves scalability without requiring costly Hessian computations. These results extend to standard bilevel optimization, setting a new state-of-the-art for non-convex lower-level problems. Our work provides a foundation for more efficient BRL algorithms with applications in AI alignment and RLHF. Future direction include improving the theoretical bounds in this paper, and evaluating the proposed algorithm in different applications.
\if 0

\section{Impact Statement}

This paper presents work whose goal is to advance the field of Machine Learning. There are many potential societal consequences of our work, none which we feel must be specifically highlighted here.
\fi 

\section{Acknowledgment}
The work was supported in part by the National Science Foundation under  grant CCF-2149588 and Cisco Systems, Inc. 

\bibliography{iclr2025_conference}
\bibliographystyle{plain}
\newpage
\section*{NeurIPS Paper Checklist}

\begin{enumerate}

\item {\bf Claims}
    \item[] Question: Do the main claims made in the abstract and introduction accurately reflect the paper's contributions and scope?
    \item[] Answer: 
    % \answerTODO{} % Replace by 
    \answerYes{}
    % , \answerNo{}, or \answerNA{}.
    \item[] Justification: The claims are demonstrated in the key results in Lemmas and Theorems, with explanations next to them. 
    % \justificationTODO{}
    \item[] Guidelines:
    \begin{itemize}
        \item The answer NA means that the abstract and introduction do not include the claims made in the paper.
        \item The abstract and/or introduction should clearly state the claims made, including the contributions made in the paper and important assumptions and limitations. A No or NA answer to this question will not be perceived well by the reviewers. 
        \item The claims made should match theoretical and experimental results, and reflect how much the results can be expected to generalize to other settings. 
        \item It is fine to include aspirational goals as motivation as long as it is clear that these goals are not attained by the paper. 
    \end{itemize}

\item {\bf Limitations}
    \item[] Question: Does the paper discuss the limitations of the work performed by the authors?
    \item[] Answer: \answerYes{} % Replace by \answerYes{}, \answerNo{}, or \answerNA{}.
    \item[] Justification:  The assumptions given in the paper give the limitations of this work. Further, future work direction in the conclusions describe another limitation of this work. %See Introduction, Discussion, and other sections.
    \item[] Guidelines:
    \begin{itemize}
        \item The answer NA means that the paper has no limitation while the answer No means that the paper has limitations, but those are not discussed in the paper. 
        \item The authors are encouraged to create a separate "Limitations" section in their paper.
        \item The paper should point out any strong assumptions and how robust the results are to violations of these assumptions (e.g., independence assumptions, noiseless settings, model well-specification, asymptotic approximations only holding locally). The authors should reflect on how these assumptions might be violated in practice and what the implications would be.
        \item The authors should reflect on the scope of the claims made, e.g., if the approach was only tested on a few datasets or with a few runs. In general, empirical results often depend on implicit assumptions, which should be articulated.
        \item The authors should reflect on the factors that influence the performance of the approach. For example, a facial recognition algorithm may perform poorly when image resolution is low or images are taken in low lighting. Or a speech-to-text system might not be used reliably to provide closed captions for online lectures because it fails to handle technical jargon.
        \item The authors should discuss the computational efficiency of the proposed algorithms and how they scale with dataset size.
        \item If applicable, the authors should discuss possible limitations of their approach to address problems of privacy and fairness.
        \item While the authors might fear that complete honesty about limitations might be used by reviewers as grounds for rejection, a worse outcome might be that reviewers discover limitations that aren't acknowledged in the paper. The authors should use their best judgment and recognize that individual actions in favor of transparency play an important role in developing norms that preserve the integrity of the community. Reviewers will be specifically instructed to not penalize honesty concerning limitations.
    \end{itemize}

\item {\bf Theory Assumptions and Proofs}
    \item[] Question: For each theoretical result, does the paper provide the full set of assumptions and a complete (and correct) proof?
    \item[] Answer: \answerYes{} % Replace by \answerYes{}, \answerNo{}, or \answerNA{}.
    \item[] Justification: We have provided the assumptions used in the work at one place, which are used in all the results.  %See assumptions.
    \item[] Guidelines:
    \begin{itemize}
        \item The answer NA means that the paper does not include theoretical results. 
        \item All the theorems, formulas, and proofs in the paper should be numbered and cross-referenced.
        \item All assumptions should be clearly stated or referenced in the statement of any theorems.
        \item The proofs can either appear in the main paper or the supplemental material, but if they appear in the supplemental material, the authors are encouraged to provide a short proof sketch to provide intuition. 
        \item Inversely, any informal proof provided in the core of the paper should be complemented by formal proofs provided in appendix or supplemental material.
        \item Theorems and Lemmas that the proof relies upon should be properly referenced. 
    \end{itemize}

    \item {\bf Experimental Result Reproducibility}
    \item[] Question: Does the paper fully disclose all the information needed to reproduce the main experimental results of the paper to the extent that it affects the main claims and/or conclusions of the paper (regardless of whether the code and data are provided or not)?
    \item[] Answer: \answerYes{} % Replace by \answerYes{}, \answerNo{}, or \answerNA{}.
    \item[] Justification: Details provided in Appendix \ref{appendix:experiments}. %algorithm is standard. Thus, it is known that the algorithms in this paper work widely. The focus of the paper is on the analysis of the sample complexity of these algorithms. 
    \item[] Guidelines:
    \begin{itemize}
        \item The answer NA means that the paper does not include experiments.
        \item If the paper includes experiments, a No answer to this question will not be perceived well by the reviewers: Making the paper reproducible is important, regardless of whether the code and data are provided or not.
        \item If the contribution is a dataset and/or model, the authors should describe the steps taken to make their results reproducible or verifiable. 
        \item Depending on the contribution, reproducibility can be accomplished in various ways. For example, if the contribution is a novel architecture, describing the architecture fully might suffice, or if the contribution is a specific model and empirical evaluation, it may be necessary to either make it possible for others to replicate the model with the same dataset, or provide access to the model. In general. releasing code and data is often one good way to accomplish this, but reproducibility can also be provided via detailed instructions for how to replicate the results, access to a hosted model (e.g., in the case of a large language model), releasing of a model checkpoint, or other means that are appropriate to the research performed.
        \item While NeurIPS does not require releasing code, the conference does require all submissions to provide some reasonable avenue for reproducibility, which may depend on the nature of the contribution. For example
        \begin{enumerate}
            \item If the contribution is primarily a new algorithm, the paper should make it clear how to reproduce that algorithm.
            \item If the contribution is primarily a new model architecture, the paper should describe the architecture clearly and fully.
            \item If the contribution is a new model (e.g., a large language model), then there should either be a way to access this model for reproducing the results or a way to reproduce the model (e.g., with an open-source dataset or instructions for how to construct the dataset).
            \item We recognize that reproducibility may be tricky in some cases, in which case authors are welcome to describe the particular way they provide for reproducibility. In the case of closed-source models, it may be that access to the model is limited in some way (e.g., to registered users), but it should be possible for other researchers to have some path to reproducing or verifying the results.
        \end{enumerate}
    \end{itemize}

\item {\bf Open access to data and code}
    \item[] Question: Does the paper provide open access to the data and code, with sufficient instructions to faithfully reproduce the main experimental results, as described in supplemental material?
    \item[] Answer: \answerYes{} % Replace by \answerYes{}, \answerNo{}, or \answerNA{}.
    \item[] Justification: Details provided in Appendix \ref{appendix:experiments}.
    \item[] Guidelines:
    \begin{itemize}
        \item The answer NA means that paper does not include experiments requiring code.
        \item Please see the NeurIPS code and data submission guidelines (\url{https://nips.cc/public/guides/CodeSubmissionPolicy}) for more details.
        \item While we encourage the release of code and data, we understand that this might not be possible, so “No” is an acceptable answer. Papers cannot be rejected simply for not including code, unless this is central to the contribution (e.g., for a new open-source benchmark).
        \item The instructions should contain the exact command and environment needed to run to reproduce the results. See the NeurIPS code and data submission guidelines (\url{https://nips.cc/public/guides/CodeSubmissionPolicy}) for more details.
        \item The authors should provide instructions on data access and preparation, including how to access the raw data, preprocessed data, intermediate data, and generated data, etc.
        \item The authors should provide scripts to reproduce all experimental results for the new proposed method and baselines. If only a subset of experiments are reproducible, they should state which ones are omitted from the script and why.
        \item At submission time, to preserve anonymity, the authors should release anonymized versions (if applicable).
        \item Providing as much information as possible in supplemental material (appended to the paper) is recommended, but including URLs to data and code is permitted.
    \end{itemize}

\item {\bf Experimental Setting/Details}
    \item[] Question: Does the paper specify all the training and test details (e.g., data splits, hyperparameters, how they were chosen, type of optimizer, etc.) necessary to understand the results?
    \item[] Answer: \answerYes{} % Replace by \answerYes{}, \answerNo{}, or \answerNA{}.
    \item[] Justification: Details provided in Appendix \ref{appendix:experiments}.
    \item[] Guidelines:
    \begin{itemize}
        \item The answer NA means that the paper does not include experiments.
        \item The experimental setting should be presented in the core of the paper to a level of detail that is necessary to appreciate the results and make sense of them.
        \item The full details can be provided either with the code, in appendix, or as supplemental material.
    \end{itemize}

\item {\bf Experiment Statistical Significance}
    \item[] Question: Does the paper report error bars suitably and correctly defined or other appropriate information about the statistical significance of the experiments?
    \item[] Answer: \answerYes{} % Replace by \answerYes{}, \answerNo{}, or \answerNA{}.
    \item[] Justification: Details provided in Appendix \ref{appendix:experiments}.
    \item[] Guidelines:
    \begin{itemize}
        \item The answer NA means that the paper does not include experiments.
        \item The authors should answer "Yes" if the results are accompanied by error bars, confidence intervals, or statistical significance tests, at least for the experiments that support the main claims of the paper.
        \item The factors of variability that the error bars are capturing should be clearly stated (for example, train/test split, initialization, random drawing of some parameter, or overall run with given experimental conditions).
        \item The method for calculating the error bars should be explained (closed form formula, call to a library function, bootstrap, etc.)
        \item The assumptions made should be given (e.g., Normally distributed errors).
        \item It should be clear whether the error bar is the standard deviation or the standard error of the mean.
        \item It is OK to report 1-sigma error bars, but one should state it. The authors should preferably report a 2-sigma error bar than state that they have a 96\% CI, if the hypothesis of Normality of errors is not verified.
        \item For asymmetric distributions, the authors should be careful not to show in tables or figures symmetric error bars that would yield results that are out of range (e.g. negative error rates).
        \item If error bars are reported in tables or plots, The authors should explain in the text how they were calculated and reference the corresponding figures or tables in the text.
    \end{itemize}

\item {\bf Experiments Compute Resources}
    \item[] Question: For each experiment, does the paper provide sufficient information on the computer resources (type of compute workers, memory, time of execution) needed to reproduce the experiments?
    \item[] Answer: \answerYes{} % Replace by \answerYes{}, \answerNo{}, or \answerNA{}.
    \item[] Justification: Details provided in Appendix \ref{appendix:experiments}.
    \item[] Guidelines:
    \begin{itemize}
        \item The answer NA means that the paper does not include experiments.
        \item The paper should indicate the type of compute workers CPU or GPU, internal cluster, or cloud provider, including relevant memory and storage.
        \item The paper should provide the amount of compute required for each of the individual experimental runs as well as estimate the total compute. 
        \item The paper should disclose whether the full research project required more compute than the experiments reported in the paper (e.g., preliminary or failed experiments that didn't make it into the paper). 
    \end{itemize}
    
\item {\bf Code Of Ethics}
    \item[] Question: Does the research conducted in the paper conform, in every respect, with the NeurIPS Code of Ethics \url{https://neurips.cc/public/EthicsGuidelines}?
    \item[] Answer: \answerYes{} % Replace by \answerYes{}, \answerNo{}, or \answerNA{}.
    \item[] Justification: All the points mentioned in the NeurIPS Code of Ethics are taken into consideration.
    \item[] Guidelines:
    \begin{itemize}
        \item The answer NA means that the authors have not reviewed the NeurIPS Code of Ethics.
        \item If the authors answer No, they should explain the special circumstances that require a deviation from the Code of Ethics.
        \item The authors should make sure to preserve anonymity (e.g., if there is a special consideration due to laws or regulations in their jurisdiction).
    \end{itemize}

\item {\bf Broader Impacts}
    \item[] Question: Does the paper discuss both potential positive societal impacts and negative societal impacts of the work performed?
    \item[] Answer: \answerNA{} % Replace by \answerYes{}, \answerNo{}, or \answerNA{}.
    \item[] Justification: Since the work is primarily theoretical in nature, no potential negative societal impact.
    \item[] Guidelines:
    \begin{itemize}
        \item The answer NA means that there is no societal impact of the work performed.
        \item If the authors answer NA or No, they should explain why their work has no societal impact or why the paper does not address societal impact.
        \item Examples of negative societal impacts include potential malicious or unintended uses (e.g., disinformation, generating fake profiles, surveillance), fairness considerations (e.g., deployment of technologies that could make decisions that unfairly impact specific groups), privacy considerations, and security considerations.
        \item The conference expects that many papers will be foundational research and not tied to particular applications, let alone deployments. However, if there is a direct path to any negative applications, the authors should point it out. For example, it is legitimate to point out that an improvement in the quality of generative models could be used to generate deepfakes for disinformation. On the other hand, it is not needed to point out that a generic algorithm for optimizing neural networks could enable people to train models that generate Deepfakes faster.
        \item The authors should consider possible harms that could arise when the technology is being used as intended and functioning correctly, harms that could arise when the technology is being used as intended but gives incorrect results, and harms following from (intentional or unintentional) misuse of the technology.
        \item If there are negative societal impacts, the authors could also discuss possible mitigation strategies (e.g., gated release of models, providing defenses in addition to attacks, mechanisms for monitoring misuse, mechanisms to monitor how a system learns from feedback over time, improving the efficiency and accessibility of ML).
    \end{itemize}
    
\item {\bf Safeguards}
    \item[] Question: Does the paper describe safeguards that have been put in place for responsible release of data or models that have a high risk for misuse (e.g., pretrained language models, image generators, or scraped datasets)?
    \item[] Answer: \answerNA{} % Replace by \answerYes{}, \answerNo{}, or \answerNA{}.
    \item[] Justification: The paper poses no such risks. 
    \item[] Guidelines:
    \begin{itemize}
        \item The answer NA means that the paper poses no such risks.
        \item Released models that have a high risk for misuse or dual-use should be released with necessary safeguards to allow for controlled use of the model, for example by requiring that users adhere to usage guidelines or restrictions to access the model or implementing safety filters. 
        \item Datasets that have been scraped from the Internet could pose safety risks. The authors should describe how they avoided releasing unsafe images.
        \item We recognize that providing effective safeguards is challenging, and many papers do not require this, but we encourage authors to take this into account and make a best faith effort.
    \end{itemize}

\item {\bf Licenses for existing assets}
    \item[] Question: Are the creators or original owners of assets (e.g., code, data, models), used in the paper, properly credited and are the license and terms of use explicitly mentioned and properly respected?
    \item[] Answer: \answerYes{} % Replace by \answerYes{}, \answerNo{}, or \answerNA{}.
    \item[] Justification: All existing works used are properly credited. 
    \item[] Guidelines:
    \begin{itemize}
        \item The answer NA means that the paper does not use existing assets.
        \item The authors should cite the original paper that produced the code package or dataset.
        \item The authors should state which version of the asset is used and, if possible, include a URL.
        \item The name of the license (e.g., CC-BY 4.0) should be included for each asset.
        \item For scraped data from a particular source (e.g., website), the copyright and terms of service of that source should be provided.
        \item If assets are released, the license, copyright information, and terms of use in the package should be provided. For popular datasets, \url{paperswithcode.com/datasets} has curated licenses for some datasets. Their licensing guide can help determine the license of a dataset.
        \item For existing datasets that are re-packaged, both the original license and the license of the derived asset (if it has changed) should be provided.
        \item If this information is not available online, the authors are encouraged to reach out to the asset's creators.
    \end{itemize}

\item {\bf New Assets}
    \item[] Question: Are new assets introduced in the paper well documented and is the documentation provided alongside the assets?
    \item[] Answer: \answerYes{} % Replace by \answerYes{}, \answerNo{}, or \answerNA{}.
    \item[] Justification:  Details provided in Appendix \ref{appendix:experiments}
    \item[] Guidelines:
    \begin{itemize}
        \item The answer NA means that the paper does not release new assets.
        \item Researchers should communicate the details of the dataset/code/model as part of their submissions via structured templates. This includes details about training, license, limitations, etc. 
        \item The paper should discuss whether and how consent was obtained from people whose asset is used.
        \item At submission time, remember to anonymize your assets (if applicable). You can either create an anonymized URL or include an anonymized zip file.
    \end{itemize}

\item {\bf Crowdsourcing and Research with Human Subjects}
    \item[] Question: For crowdsourcing experiments and research with human subjects, does the paper include the full text of instructions given to participants and screenshots, if applicable, as well as details about compensation (if any)? 
    \item[] Answer: \answerNA{} % Replace by \answerYes{}, \answerNo{}, or \answerNA{}.
    \item[] Justification: The paper does not involve crowdsourcing nor research with human subjects.
    \item[] Guidelines:
    \begin{itemize}
        \item The answer NA means that the paper does not involve crowdsourcing nor research with human subjects.
        \item Including this information in the supplemental material is fine, but if the main contribution of the paper involves human subjects, then as much detail as possible should be included in the main paper. 
        \item According to the NeurIPS Code of Ethics, workers involved in data collection, curation, or other labor should be paid at least the minimum wage in the country of the data collector. 
    \end{itemize}

\item {\bf Institutional Review Board (IRB) Approvals or Equivalent for Research with Human Subjects}
    \item[] Question: Does the paper describe potential risks incurred by study participants, whether such risks were disclosed to the subjects, and whether Institutional Review Board (IRB) approvals (or an equivalent approval/review based on the requirements of your country or institution) were obtained?
    \item[] Answer: \answerNA{} % Replace by \answerYes{}, \answerNo{}, or \answerNA{}.
    \item[] Justification: The paper does neither involve crowd-sourcing nor research with human subjects.
    \item[] Guidelines:
    \begin{itemize}
        \item The answer NA means that the paper does not involve crowdsourcing nor research with human subjects.
        \item Depending on the country in which research is conducted, IRB approval (or equivalent) may be required for any human subjects research. If you obtained IRB approval, you should clearly state this in the paper. 
        \item We recognize that the procedures for this may vary significantly between institutions and locations, and we expect authors to adhere to the NeurIPS Code of Ethics and the guidelines for their institution. 
        \item For initial submissions, do not include any information that would break anonymity (if applicable), such as the institution conducting the review.
    \end{itemize}
\end{enumerate}
\onecolumn
\appendix
%\section{Appendix}
\section{Proof of Lemma \ref{thm_supp}} \label{lem_proof}
\begin{proof} 

\[
\lambda_{t+1} \;=\; \lambda_t \;-\; \eta\, \nabla\hat f(\lambda_t),
\]
where the biased stochastic gradient 
\[
b_t \;\coloneqq\; \mathbb{E}\!\left[\nabla\hat f(\lambda_t)\right]-\nabla f(\lambda_t),
\qquad 
\mathbb{E}\!\Big[\big\|\nabla\hat f(\lambda_t)-\mathbb{E}[\nabla\hat f(\lambda_t)]\big\|^2\Big] = \sigma_{t}^2.
\]
By $L$-smoothness, for $y=\lambda_{t+1}=\lambda_t-\eta\,\nabla\hat f(\lambda_t)$,
\begin{align*}
f(\lambda_{t+1})
&\le f(\lambda_t) + \big\langle \nabla f(\lambda_t),\, \lambda_{t+1}-\lambda_t\big\rangle
+ \frac{L}{2}\,\|\lambda_{t+1}-\lambda_t\|^2 \\
&= f(\lambda_t) - \eta\,\big\langle \nabla f(\lambda_t),\, \nabla\hat f(\lambda_t)\big\rangle
+ \frac{L}{2}\,\eta^2 \,\big\|\nabla\hat f(\lambda_t)\big\|^2 .
\end{align*}
Taking expectation and using the bias notation,
\begin{align*}
\mathbb{E}\!\left[f(\lambda_{t+1})\right]
&\le f(\lambda_t) 
- \eta\,\big\langle \nabla f(\lambda_t),\, \nabla f(\lambda_t)+ b_t \big\rangle
+ \frac{L}{2}\,\eta^2 \,\mathbb{E}\!\left[\big\|\nabla\hat f(\lambda_t)\big\|^2 \right] \\
&= f(\lambda_t) 
- \eta\,\|\nabla f(\lambda_t)\|^2
- \eta\,\big\langle \nabla f(\lambda_t),\, b_t \big\rangle
+ \frac{L}{2}\,\eta^2 \,\mathbb{E}\!\left[\big\|\nabla\hat f(\lambda_t)\big\|^2 \right].
\end{align*}
Apply Cauchy--Schwarz and Young's inequality to the bias inner product:
\[
-\eta\,\langle \nabla f(\lambda_t), b_t\rangle
\;\le\; \eta\,\|\nabla f(\lambda_t)\|\,\|b_t\|
\;\le\; \frac{\eta}{2}\,\|\nabla f(\lambda_t)\|^2 \;+\; \frac{\eta}{2}\,\|b_t\|^2
\]
For the second moment, decompose around the (biased) mean:

Combining the bounds yields
\begin{align}
&\mathbb{E}\!\left[f(\lambda_{t+1})\right] \\
&\le f(\lambda_t)
+ \Big(-\tfrac{\eta}{2}+L\eta^2\Big)\,\|\nabla f(\lambda_t)\|^2
+ \frac{\eta}{2}\,||b_{t}||^2
+ \frac{L}{2}\,\eta^2\,(2b^{2}_{t}+\sigma^2).\\
&\le f(\lambda_t)
+ \Big(-\tfrac{\eta}{2}+L\eta^2\Big)\,\|\nabla f(\lambda_t)\|^2
+ \frac{\eta}{2}\,\big(||b_{t}||^2 + {\sigma}^{2}  \big)
+ \frac{L}{2}\,\eta^2\,(2b^{2}_{t}+2\sigma_{t}^2). \label{lem1_1_1}\\
&\le f(\lambda_t)
+ \Big(-\tfrac{\eta}{2}+L\eta^2\Big)\,\|\nabla f(\lambda_t)\|^2
+ \frac{\eta}{2}\big(\mathbb{E}||{\nabla}\hat{f}(\lambda_{t}) - {\nabla}{f}(\lambda_{t})||^{2}  \big) \label{lem1_1_2}\\
&+ \frac{L}{2}\,\eta^2\,2(\mathbb{E}||{\nabla}\hat{f}(\lambda_{t}) - {\nabla}{f}(\lambda_{t})||^{2}).\\
&\le f(\lambda_t)
+ \Big(-\tfrac{\eta}{2}+L\eta^2\Big)\,\|\nabla f(\lambda_t)\|^2
+ \frac{\eta}{2}\,
\big( \beta(n,B,H) \big)
+ \frac{L}{2}\,\eta^2\,(2\beta(n,B,H)) \label{lem1_1_3}
\end{align}

We obtain equation \eqref{lem1_1_2} from equation \ref{lem1_1_1} by using the identity that the square of the bias plus variance is equal to the mean square error. We obtain \eqref{lem1_1_3} from equation \ref{lem1_1_2} from the definition of $\beta(n,B,H)$.  
\begin{align}
  f(\lambda_{t+1}) \le f(\lambda_{t}) - \left(\eta - \frac{L.{\eta}^{2}}{2}\right) ||{\nabla}f(\lambda_{t})||^2 + \frac{(L.{\eta}^{2} + \eta) \beta(n,B,H) }{2}
\end{align}

Now applying the PL inequality (Assumption \ref{assump_1}), $\|\nabla f(\lambda_t)\|^2 \ge 2\mu \left(f(\lambda_t) - f^* \right)$, we substitute in the above inequality to get
\begin{align}
    f(\lambda_{t+1})- f^* 
    &\le \left(1 - 2\mu \left(\eta - \frac{L\eta^2}{2} \right)\right)\left(f(\lambda_t) - f^*\right) + \frac{(L.{\eta}^{2} + \eta) { \beta(n,B,H) }}{2}.
\end{align}

Define the contraction factor
\begin{align}
    \rho \coloneqq 1 - 2\mu \left(\eta - \frac{L.{\eta}^{2}}{2} \right).
\end{align}

 we get the recursion:
\begin{align}
\delta_{t+1} \le \rho \cdot \delta_t + \frac{(L.{\eta}^{2} + \eta).\beta(n,B,H)}{2}.
\end{align}
When $ \eta \le \frac{1}{L} $, we have
\begin{align}
\eta - \frac{L \eta^2}{2} \ge \frac{\eta}{2} \Rightarrow \rho \le 1 - \mu \eta.
\end{align}
Unrolling the recursion we have
\begin{align}
\delta_t \le (1 - \mu \eta)^t \delta_0 + \frac{(L.{\eta^2}+ \eta)\beta(n,B,H)}{2} \sum_{j=0}^{t-1} (1 - \mu \eta)^j.
\end{align}

Using the geometric series bound:
\begin{align}
    \sum_{j=0}^{t-1} (1 - \mu \eta)^j \le \frac{1}{\mu \eta},
\end{align}
we conclude that
\begin{align}
    \delta_t \le (1 - \mu \eta)^t \delta_0 + \frac{(L.{\eta}^{2} + \eta) \beta(n,B,H)}{2\mu{\cdot}{\eta}}.
\end{align}
Hence, we have the convergence result
\begin{align}
    f(\lambda_t) - f^* \le (1 - \mu\eta)^t \delta_0 + \frac{(L.{\eta} + \eta){\cdot}\beta(n,B,H)}{2\mu{\cdot}{\eta}}.
\end{align}
\end{proof}

\begin{lemma}[Uniform bound for a sample-based KL gradient estimator] \label{lem_kl}
Let $\mathcal A$ be a action space and, for a fixed state $s$, let $\pi_\lambda(\cdot\mid s)$ and $\bar\pi(\cdot\mid s)$ be two policies on $\mathcal A$,
with parameter $\lambda \in \Lambda$.
Assume:
\begin{enumerate}
\item[(i)] (\emph{Bounded score}) There exists $B<\infty$ such that
$\|\nabla_{\lambda}  \log \pi_\theta(a\mid s)\|\le B$ for all $a\in\mathcal A$ and $\theta\in\Theta$.
\item[(ii)] (\emph{Common support bounded away from $0$}) There exist $\varepsilon,\bar\varepsilon\in(0,1]$ such that
$\pi_\lambda(a\mid s)\ge \varepsilon$ for all $a\in\mathcal A$ and $\lambda\in\Lambda$, and $\pi_{ref}(a\mid s)\ge \bar\varepsilon$ for all $a\in\mathcal A$.
\end{enumerate}
Define the per-sample contribution
\[
g_\theta(s,a)
\;:=\;
\nabla_{\lambda}  \log \pi_\theta(a\mid s)\,
\Big(1+\log\pi_\theta(a\mid s)-\log\pi_{ref}(a\mid s)\Big),
\]
so that
$\nabla_{\lambda}  D_{\mathrm{KL}}(\pi_\theta\Vert \pi_{ref})
=\mathbb{E}_{a\sim\pi_\theta(\cdot\mid s)}[\,g_\theta(s,a)\,]$.
Then, with $C_{\log}:=\log(1/\varepsilon)+\log(1/\bar\varepsilon)$,
\[
\|g_\theta(s,a)\|\;\le\; B\,(1+C_{\log})
\qquad \text{for all $a\in\mathcal A$ and $\theta\in\Theta$},
\]
and consequently, for any $n\ge 1$ and i.i.d.\ draws $a_1,\dots,a_n\sim\pi_\theta(\cdot\mid s)$, the Monte-Carlo estimator
$\hat g_n:=\tfrac{1}{n}\sum_{i=1}^n g_\theta(s_{i},a_i)$ satisfies
$\|\hat g_n\|\le B\,(1+C_{\log})$.
\end{lemma}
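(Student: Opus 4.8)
The plan is to establish a deterministic, pointwise bound on $\|g_\theta(s,a)\|$ that holds uniformly over all $a\in\mathcal A$ and $\theta\in\Theta$, and then lift it to the Monte-Carlo estimator $\hat g_n$ essentially for free via the triangle inequality, since $\hat g_n$ is an arithmetic mean of terms each obeying the same uniform bound. The essential structural observation is that $g_\theta$ factorizes as a score vector $\nabla_{\lambda}\log\pi_\theta(a\mid s)$ multiplied by the scalar $\bigl(1+\log\pi_\theta(a\mid s)-\log\pi_{ref}(a\mid s)\bigr)$. By absolute homogeneity of the norm, $\|cv\|=|c|\,\|v\|$, it therefore suffices to bound the two factors separately: the vector factor by Assumption (i), and the scalar factor using Assumption (ii).

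The key step is controlling the scalar log-ratio term. Because each of $\pi_\theta(\cdot\mid s)$ and $\pi_{ref}(\cdot\mid s)$ is a probability, we have $\varepsilon\le\pi_\theta(a\mid s)\le 1$ and $\bar\varepsilon\le\pi_{ref}(a\mid s)\le 1$; taking logarithms gives $\log\varepsilon\le\log\pi_\theta(a\mid s)\le 0$ and $\log\bar\varepsilon\le\log\pi_{ref}(a\mid s)\le 0$, hence $|\log\pi_\theta(a\mid s)|\le\log(1/\varepsilon)$ and $|\log\pi_{ref}(a\mid s)|\le\log(1/\bar\varepsilon)$. A triangle inequality then yields
\[
\bigl|1+\log\pi_\theta(a\mid s)-\log\pi_{ref}(a\mid s)\bigr|
\;\le\; 1+|\log\pi_\theta(a\mid s)|+|\log\pi_{ref}(a\mid s)|
\;\le\; 1+C_{\log},
\]
uniformly in $a$ and $\theta$, with $C_{\log}=\log(1/\varepsilon)+\log(1/\bar\varepsilon)$ as defined in the statement.

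Combining this with the bounded-score hypothesis (i), $\|\nabla_{\lambda}\log\pi_\theta(a\mid s)\|\le B$, gives the pointwise estimate $\|g_\theta(s,a)\|\le B\,(1+C_{\log})$ for every $a$ and $\theta$. Finally, for the estimator I would apply the triangle inequality followed by this uniform bound:
\[
\|\hat g_n\|
=\Bigl\|\tfrac{1}{n}\sum_{i=1}^{n}g_\theta(s_i,a_i)\Bigr\|
\;\le\;\frac{1}{n}\sum_{i=1}^{n}\|g_\theta(s_i,a_i)\|
\;\le\; B\,(1+C_{\log}),
\]
which holds deterministically for every realization of the samples, not merely in expectation.

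I expect no serious obstacle: the argument is pure bookkeeping once the two assumptions are in place. The only subtle point worth flagging is that the stated bound uses the upper bound $\pi\le 1$ implicitly when passing from the lower bounds $\pi_\theta\ge\varepsilon$, $\pi_{ref}\ge\bar\varepsilon$ to $|\log\pi_\theta|\le\log(1/\varepsilon)$ and $|\log\pi_{ref}|\le\log(1/\bar\varepsilon)$; this is valid for probability masses (discrete $\mathcal A$), but for a continuous action space where $\pi$ is a density that may exceed $1$, one would additionally need an upper bound on the densities to control the positive part of the logarithm. I would also note that the differentiation identity $\nabla_{\lambda}D_{\mathrm{KL}}(\pi_\theta\Vert\pi_{ref})=\mathbb{E}_{a\sim\pi_\theta}[g_\theta(s,a)]$ is supplied in the statement and need not be re-derived; it follows from $\nabla_{\lambda}\pi_\theta=\pi_\theta\nabla_{\lambda}\log\pi_\theta$ together with $\mathbb{E}_{a\sim\pi_\theta}[\nabla_{\lambda}\log\pi_\theta]=0$, the latter producing the constant $1$ inside the scalar factor.
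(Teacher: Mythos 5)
Your proof is correct and follows essentially the same argument as the paper's: bound the scalar factor by $1+C_{\log}$ using $\pi\in[\varepsilon,1]$ (resp.\ $[\bar\varepsilon,1]$), multiply by the score bound $B$, and note the deterministic uniform bound survives averaging. Your remark that the implicit upper bound $\pi\le 1$ fails for continuous densities is a fair caveat, but it applies equally to the paper's own proof, which makes the same implicit assumption.
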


\begin{proof}
(i) and (ii) are satisfied from Assumption \ref{assump_8}, for every $a\in\mathcal A$ and $\theta\in\Theta$,
$\pi_\theta(a\mid s)\in[\varepsilon,1]$ and $\pi_{ref}(a\mid s)\in[\bar\varepsilon,1]$,
hence $\log\pi_\theta(a\mid s)\in[\log\varepsilon,0]$ and $\log\pi_{ref}(a\mid s)\in[\log\bar\varepsilon,0]$.
Therefore
\[
\big|\log\pi_\theta(a\mid s)-\log\pi_{ref}(a\mid s)\big|
\;\le\; \log(1/\varepsilon)+\log(1/\bar\varepsilon)
\;=\; C_{\log},
\]
and thus $\big|1+\log\pi_\theta(a\mid s)-\log\pi_{ref}(a\mid s)\big|\le 1+C_{\log}$.
By (i),
\[
\|g_\theta(s,a)\|
=\big\|\nabla_{\lambda}  \log \pi_\theta(a\mid s)\big\|\,\big|1+\log\pi_\theta(a\mid s)-\log\pi_{ref}(a\mid s)\big|
\le B\,(1+C_{\log}).
\]
This bound is deterministic (independent of the sample index) and holds for all $a,\theta$,
so taking averages over  samples preserves it:
$\|\hat g_n\|\le B\,(1+C_{\log})$.
\end{proof}

\section{Proof of Theorem \ref{thm}} \label{thm_proof}
\begin{proof}

Using same steps as for Lemma \ref{lem_proof} we get

\begin{eqnarray}
   \Phi(\phi_{t+1}) &\le&  \Phi(\phi_{t}) - \left(\frac{\eta}{2} - \frac{L.{\eta}^{2}}{2} \right)||\nabla{\Phi}(\phi_{t})||^{2} + \frac{\eta+L{\eta}^{2}}{2}{\mathbb{E}}||{\nabla_{\phi}}{\Phi}(\phi_{k}) - {\nabla_{\phi}}\hat{\Phi}_{\sigma}( \phi_{k})||^{2} \nonumber\\
   &&\label{main_proof_3}
\end{eqnarray}

Now rearranging terms, summing Equation \eqref{main_proof_3} over $T$ and dividing by $T$ on both sides we get
\begin{eqnarray}
  \frac{1}{T}\sum_{t=1}^{T}||{\nabla}\Phi(\phi_{t})||^{2}  &\le&   \frac{1}{T}\sum^{t=T}_{t=0}\underbrace{{\mathbb{E}}||{\nabla_{\phi}}{\Phi}(\phi_{k}) - {\nabla_{\phi}}\hat{\Phi}_{\sigma}( \phi_{k})||^{2}}_{A_{t}} + \tilde{\mathcal{O}}\left(\frac{1}{T}\right). \label{main_proof_4}
\end{eqnarray}

We now bound $A_{t}$ as follows
\begin{eqnarray}
    {\mathbb{E}}||{\nabla_{\phi}}{\Phi}(\phi_{t}) - {\nabla_{\phi}}\hat{\Phi}_{\sigma}(\phi_{t}))|| &=& {\mathbb{E}}||{\nabla_{\phi}}{\Phi}(\phi_{t}) - {\nabla_{\phi}}{\Phi}_{\sigma}(\phi_{t}) + {\nabla_{\phi}}{\Phi}_{\sigma}(\phi_{t}) - {\nabla_{\phi}}\hat{\Phi}_{\sigma}(\phi_{t}))||, \nonumber\\
    &&  \label{main_proof_5}\\
    &\le& {\mathbb{E}}||{\nabla_{\phi}}{\Phi}(\phi_{t}) - {\nabla_{\phi}}{\Phi}_{\sigma}(\phi_{t}))|| \nonumber\\
    &+& {\mathbb{E}}||{\nabla_{\phi}}{\Phi}_{\sigma}(\phi_{t}) - {\nabla_{\phi}}\hat{\Phi}_{\sigma}(\phi_{t}))||,  \label{main_proof_5_1}\\
    &\le&  \mathcal{O}(\sigma) +  {\mathbb{E}}\underbrace{||{\nabla_{\phi}}{\Phi}_{\sigma}(\phi_{t}) - {\nabla_{\phi}}\hat{\Phi}_{\sigma}(\phi_{t}))||}_{A_{t}}, \label{main_proof_6}
\end{eqnarray}
The first term on the right-hand side denotes the gap between the gradient of the objective function and the gradient of the pseudo-objective $\Phi_{\sigma}$. We get the upper bound on this term from Lemma  4.3 of \cite{chen2024findingsmallhypergradientsbilevel}. The term $A^{'}_{t}$ denotes the error incurred in estimating the true gradient of the pseudo-objective. 
\begin{eqnarray}
    \underbrace{{\mathbb{E}}||{\nabla_{\phi}}{\Phi}_{\sigma}(\phi_{t}) - {\nabla_{\phi}}\hat{\Phi}_{\sigma}(\phi_{t}))||^{2}}_{A_{t}}
    &\le&  {\mathbb{E}}\Bigg|\Bigg| {\nabla_{\phi}}G(\phi_{t},\lambda_{\sigma}^{*}(\phi_{t}))  +  \frac{{\nabla_{\phi}}J({\phi_{t}},\lambda^{*}(\phi_{t})) -  {\nabla_{\phi}}J(\phi_{t},{\lambda_{\sigma}^{*}}(\phi_{t}))}{\sigma} \nonumber\\
    &-& {\nabla_{\phi}}{G}(\phi_{t},\lambda^{K}_{t},B)  +  \frac{{\nabla_{\phi_{t}}}\hat{J}({\phi_{t}},\lambda^{K}_{t}) -  {\nabla_{\phi}}{J}({\phi_{t}},{\lambda^{'}}^{K}_{t}(\phi)),B}{\sigma}\Bigg|\Bigg|^{2}, \nonumber\\
    && \label{main_proof_7}\\
    &\le& {\mathbb{E}}|| {\nabla_{\phi}}G(\phi_{t},\lambda_{\sigma}^{*}(\phi_{t})) - {\nabla_{\phi}}{G}(\phi_{t},{\lambda^{'}}^{K}_{t},B)||^{2}  \nonumber\\ 
    &+& \frac{1}{\sigma}{\mathbb{E}}||{\nabla_{\phi}}J({\phi_{t}},\lambda^{*}(\phi_{t})) - {\nabla_{\phi}}{J}({\phi_{t}},\lambda^{K}_{t},B)||^{2} \nonumber \\
    &+& \frac{1}{\sigma}{\mathbb{E}}||{\nabla_{\phi}}J(\phi_{t},\lambda_{\sigma}^{*}(\phi_{t}))-  {\nabla_{\phi}}{J}({\phi_{t}},{\lambda^{'}}^{K}_{t},B)||^{2}. \label{main_proof_8}
\end{eqnarray}
As stated in the main text, the error in estimation of the gradient of the pseudo objective is split into the error in estimating ${\nabla_{\phi}}G(\phi,\lambda_{\sigma}^{*}(\phi))$, ${\nabla_{\phi}}J({\phi},\lambda^{*}(\phi))$ and ${\nabla_{\phi}}J({\phi},\lambda_{\sigma}^{*}(\phi))$ whose respective sample based estimates are denoted by ${\nabla_{\phi}}\hat{G}(\phi,{\lambda^{'}}^{K}_{t})$, ${\nabla_{\phi}}\hat{J}({\lambda^{K}_{t},{\phi}})$ and ${\nabla_{\phi}}\hat{J}({{\phi},{\lambda^{'}}^{k}_{t}})$ respectively.
From Lemmas \ref{lemma_2}, \ref{lemma_3}, and \ref{lemma_4}  we have 
\begin{eqnarray}
    \underbrace{{\mathbb{E}}||{\nabla_{\phi}}{\Phi}_{\sigma}(\phi_{t}) - {\nabla_{\phi}}\hat{\Phi}_{\sigma}(\phi_{t}))||^{2}}_{A_{t}}
    &\le&  \tilde{\mathcal{O}} \left(\frac{\gamma^{2H}}{{\sigma}^{2}{B}}\right) + \tilde{\mathcal{O}}\left(\frac{{\exp}^{-K}}{{\sigma}^{2}}\right)  + \tilde{\mathcal{O}}\left(\frac{1}{{\sigma}^{2}{n}}\right) +\tilde{\mathcal{O}}(\epsilon_{approx}) \nonumber\\ 
    && \label{main_proof_9}
\end{eqnarray}
Plugging Equation \eqref{main_proof_9} into Equation \eqref{main_proof_8}, then plugging the result into Equation \eqref{main_proof_4} and squaring both sides we get.
\begin{eqnarray}
  \frac{1}{T}\sum_{i=1}^{T}||{\nabla}\Phi(\phi_{t})||^{2} &\le&   \tilde{\mathcal{O}}\left(\frac{1}{T}\right) + \tilde{\mathcal{O}} \left(\frac{\gamma^{2H}}{{\sigma}^{2}{B}}\right) + \tilde{\mathcal{O}}\left(\frac{{\exp}^{-K}}{{\sigma^{2}}}\right)  + \tilde{\mathcal{O}}\left(\frac{1}{{\sigma}^{2}{n}}\right) +\tilde{\mathcal{O}}(\epsilon_{approx}) \nonumber\\
  &&
\end{eqnarray}

Here $T$ is the number of iterations of the outer loop of Algorithm \ref{algo_1}, $K$ is the number of iterations of the inner loop of Algorithm \ref{algo_1}. $n$ is the number of samples required for the gradients of $J$ with respect to $\lambda$. $B$ is the number of samples used to evaluate the gradients of $G$ with respect to $\lambda$ and $\phi$ respectively and the gradients of $J$ with respect to $\phi$.

\end{proof}
\section{Supplementary Lemmas For Theorem \ref{thm}} \label{supp_lemmas_bi}
\begin{lemma}\label{lemma_2}
For a fixed $\phi_{t} \in \Theta$ and iteration $t$ of Algorithm \ref{algo_1} under Assumptions \ref{assump_1}-\ref{assump_6}  we have 
\begin{eqnarray}
    {\mathbb{E}}||{\nabla}G(\phi_{t},\lambda_{\sigma}^{*}(\phi_{t})) - {\nabla_{\phi}}{G}(\phi,{\lambda'}^{K}_{t},B)||^{2} &\le& \tilde{\mathcal{O}}\left(\frac{\gamma^{2H}}{B}\right) + \tilde{\mathcal{O}}\left({\exp}^{-K}\right)  + \tilde{\mathcal{O}}\left(\frac{1}{n}\right) \nonumber\\
    &+&  \tilde{\mathcal{O}}({\epsilon_{approx}}). \nonumber
\end{eqnarray}
\end{lemma}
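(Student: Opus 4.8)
The plan is to bound the target quantity by splitting it, via the triangle inequality together with $\|a+b\|^2\le 2\|a\|^2+2\|b\|^2$, into a \emph{parameter estimation error} and a \emph{gradient sampling error}:
\begin{align}
\mathbb{E}\|\nabla_\phi G(\phi_t,\lambda_\sigma^*(\phi_t)) - \nabla_\phi G(\phi_t,{\lambda'}_t^K,B)\|^2
&\le 2\,\mathbb{E}\|\nabla_\phi G(\phi_t,\lambda_\sigma^*(\phi_t)) - \nabla_\phi G(\phi_t,{\lambda'}_t^K)\|^2 \nonumber\\
&\quad + 2\,\mathbb{E}\|\nabla_\phi G(\phi_t,{\lambda'}_t^K) - \nabla_\phi G(\phi_t,{\lambda'}_t^K,B)\|^2.
\end{align}
The first term isolates the effect of running only $K$ inner iterations, so that ${\lambda'}_t^K$ is merely an approximation of the penalty minimizer $\lambda_\sigma^*(\phi_t)$; the second isolates the effect of replacing the exact gradient $\nabla_\phi G$ by its $B$-sample average. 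This parallels the decomposition already carried out for the $J$ terms in Equations \eqref{proof_3_2}--\eqref{proof_3_5}.

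For the parameter estimation error I would first invoke the smoothness of $G$ in $\lambda$ (Assumption \ref{assump_1}.\ref{as_2}) to obtain a bound of the form $\|\nabla_\phi G(\phi_t,\lambda_\sigma^*(\phi_t)) - \nabla_\phi G(\phi_t,{\lambda'}_t^K)\|^2 \lesssim \|\lambda_\sigma^*(\phi_t) - {\lambda'}_t^K\|^2$, and then apply the quadratic growth property that follows from the PL condition on $h_\sigma$ (Assumption \ref{assump_1}.\ref{as_1}) to bound $\|\lambda_\sigma^*(\phi_t) - {\lambda'}_t^K\|^2$ by a multiple of the function-value gap. The crucial structural observation, as in the proof sketch, is that lines 5 and 7 of Algorithm \ref{algo_1} perform stochastic gradient descent on $h_\sigma(\phi_t,\cdot)$, so Lemma \ref{thm_supp} applies directly with $f=h_\sigma(\phi_t,\cdot)$, giving
\begin{align}
\mathbb{E}\big(h_\sigma(\phi_t,\lambda_\sigma^*(\phi_t)) - h_\sigma(\phi_t,{\lambda'}_t^K)\big) \le \tilde{\mathcal{O}}(e^{-K}) + \mathcal{O}(\beta(n,B,H)),
\end{align}
where $\beta(n,B,H)$ bounds the mean-squared error of the estimator $\nabla_\lambda\hat h_\sigma$ of $\nabla_\lambda h_\sigma$. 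Uniform applicability of Lemma \ref{thm_supp} along the trajectory relies on the compactness of $\Theta,\Lambda$ and the boundedness hypotheses of Assumption \ref{assump_8}.

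The main work, and the principal obstacle, is controlling $\beta(n,B,H)$, which is where the MDP structure enters and the analysis departs from standard bilevel optimization. Since $h_\sigma = J - \sigma G$, I would bound $\mathbb{E}\|\nabla_\lambda h_\sigma - \nabla_\lambda\hat h_\sigma\|^2$ by separately controlling the lower-level gradient error $\nabla_\lambda J$, whose estimator is given in Equation \eqref{mdp_ps_12_1}, and the unbiased bounded-variance error in $\sigma\nabla_\lambda G$. The $J$-gradient contributes three distinct sources: the truncation of the infinite KL sum to horizon $H$ averaged over $B$ trajectories, which yields $\tilde{\mathcal{O}}(\gamma^{2H}/B)$ via the geometric tail $\sum_{i>H}\gamma^{i-1}$ combined with the uniform score bound of Lemma \ref{lem_kl}; the variance of the $Q$-based policy-gradient estimate from $n$ samples, yielding $\tilde{\mathcal{O}}(1/n)$; and the irreducible bias of the neural $Q$-estimate, yielding $\tilde{\mathcal{O}}(\epsilon_{approx})$ through Assumption \ref{assump_5}. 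The gradient sampling error on $G$ itself, by the bounded-variance Assumption \ref{assump_6}, contributes $\tilde{\mathcal{O}}(1/B)$, which is absorbed into the stated rates. Substituting these bounds back through the two-term split yields the claim; the delicate points are keeping the $Q$-approximation bias $\epsilon_{approx}$ cleanly separated, since it does not vanish as $n,B,H\to\infty$, and ensuring the smoothness and PL constants enter uniformly in $\phi_t$ rather than being evaluated at the unknown minimizer.
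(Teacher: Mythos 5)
Your proposal matches the paper's own proof essentially step for step: the same split into a parameter-estimation term and a sampling term (the paper's $A^{'}_{K}$ and $B^{'}_{K}$ in Equation \eqref{lem2_2}), the same smoothness-plus-quadratic-growth reduction to the $h_{\sigma}$ function-value gap, the same invocation of Lemma \ref{thm_supp} with $f=h_{\sigma}(\phi_{t},\cdot)$, and the same three-way control of $\beta(n,B,H)$ via the $Q$-based policy-gradient error (Assumption \ref{assump_5}, giving $\tilde{\mathcal{O}}(1/n)+\tilde{\mathcal{O}}(\epsilon_{approx})$), the truncated KL sum with Lemma \ref{lem_kl}, and the bounded-variance bound on $\nabla_{\lambda}G$ from Assumption \ref{assump_6}. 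The proposal is correct and takes the same route as the paper.
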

\begin{proof}
\begin{eqnarray}
    {\mathbb{E}}|| {\nabla_{\phi}}G(\phi_{t},\lambda_{\sigma}^{*}(\phi_{t})) - {\nabla_{\phi}}{G}(\phi_{t},{\lambda'}^{K}_{t},B)||^{2} &\le&  {\mathbb{E}}|| {\nabla_{\phi}}G(\phi_{t},\lambda_{\sigma}^{*}(\phi_{t})) - {\nabla_{\phi}}G(\phi_{t},{\lambda'}^{K}_{t}) \nonumber\\
    &+& {\nabla_{\phi}}G(\phi,{\lambda'}^{K}_{t}) -   {\nabla_{\phi}}{G}(\phi_{t},{\lambda'}^{K}_{t},B)||^{2}, \label{lem2_1}\\
    &\le&  \underbrace{{\mathbb{E}}|| {\nabla_{\phi}}G(\phi_{t},\lambda_{\sigma}^{*}(\phi_{t})) - {\nabla_{\phi}}G(\phi_{t},{\lambda'}^{K}_{t})||^{2}}_{A^{'}_{K}} \nonumber\\
    &+&  \underbrace{{\mathbb{E}}||{\nabla_{\phi}}G(\phi_{t},{\lambda'}^{K}_{t}) -   {\nabla_{\phi}}{G}(\phi_{t},{\lambda'}^{K}_{t},B)||^{2}}_{B^{'}_{K}}.\label{lem2_2}
\end{eqnarray}
$A^{'}_{K}$ represents the error incurred in due to difference between $\lambda_{\sigma}^{*}(\phi_{t})$ and our estimate ${\lambda'}^{K}_{t}$. $B^{'}_{K}$ represents the difference between the true gradient ${\nabla_{\phi}}G(\phi,{\lambda'}^{K}_{t})$ and its sample-based estimate. 
We first bound $A^{'}_{K}$ as follows
\begin{eqnarray}
 {\mathbb{E}}|| {\nabla_{\phi}}G(\phi_{t},\lambda_{\sigma}^{*}(\phi_{t})) - {\nabla_{\phi}}G(\phi_{t},{\lambda'}^{K}_{t})||^{2}  &\le&  L{\mathbb{E}}||\lambda_{\sigma}^{*}(\phi_{t})-{\lambda'}^{K}_{t})||^{2} \label{lem2_3}\\
                                                                           &\le&  L_{G}{\cdot}{\lambda^{'}}{\mathbb{E}}||h_{\sigma}(\phi_{t},\lambda_{\sigma}^{*}(\phi_{t}))-h_{\sigma}(\phi_{t},{\lambda'}^{K}_{t}))||. \label{lem2_4}
\end{eqnarray}
Here $L_{G}$ is the smoothness constant of $G(\lambda,\phi)$. We get Equation \eqref{lem2_4} from  Equation \eqref{lem2_3} by the quadratic growth property applied to $h_{\sigma}(\phi,\lambda))$ using Assumption \ref{assump_1}. Now, consider the function $h_{\sigma}(\phi,\lambda)$. We know from Assumption \ref{assump_1} that it satisfies the PL condition, therefore using Lemma \ref{thm_supp} we obtain

\begin{eqnarray}
  &&{\mathbb{E}}||h_{\sigma}(\phi_{t},\lambda^{*}(\phi_{t}))-h_{\sigma}(\phi_{t},\lambda^{K}_{t}))||  \le   \tilde{\mathcal{O}}\left({\exp}^{-K}\right) + \mathcal{O}(\beta(n,B,H)),   \label{lem2_temp}
\end{eqnarray}

Where $\beta(n,B,H)$ is such that $\mathbb{E}||{\nabla}_{\lambda}h_{\sigma}(\phi,\lambda) -  {\nabla}_{\lambda}\hat{h}_{\sigma}(\phi,\lambda)||^{2} \le \beta(n,B,H)$. Here, the expectation is with respect to the state action pairs sampled to estimate ${\nabla}_{\lambda}J(\phi,\lambda)$.

Now we have ${\nabla}_{\lambda}\hat{h}_{\sigma}(\phi,\lambda)$ as 
\begin{align}
{\nabla}_{\lambda}\hat{h}_{\sigma}(\phi_{t},\lambda) &= \frac{1}{n}\sum_{i=1}^{n}{\nabla}{\log}(\pi_{\lambda}(a_{i}|s_{i}))\hat{Q_{\phi_{t}}}(s_{i},a_{i}) + \frac{\beta}{B}\sum_{j=1}^{n}\sum_{i=1}^{H} {\gamma}^{i-1}{\nabla}_{\lambda}h_{\pi_{\lambda},\pi_{ref}}(s_{i,j},a_{i,j}) \nonumber\\
&+ \frac{1}{B}\sum_{i=1}^{B}{\nabla}_{\lambda}G(\phi_{t},\lambda)
\end{align}
Thus, in order to bound $\mathbb{E}||{\nabla}_{\lambda}h_{\sigma}(\phi_{t},\lambda) -  {\nabla}_{\lambda}\hat{h}_{\sigma}(\phi_{t},\lambda)||^{2}$, we decompose $\mathbb{E}||{\nabla}_{\lambda}h_{\sigma}(\phi_{t},\lambda) -  {\nabla}_{\lambda}\hat{h}_{\sigma}(\phi_{t},\lambda)||^{2}$ as follows
\begin{eqnarray}
 &&\mathbb{E}||{\nabla}_{\lambda}h_{\sigma}(\phi_{t},\lambda) -  {\nabla}_{\lambda}\hat{h}_{\sigma}(\phi_{t},\lambda)||^{2} \nonumber\\
   &\le&  2\underbrace{(\mathbb{E}||{\nabla}_{\lambda}J(\phi_{t},\lambda) - \frac{1}{n}\sum_{i=1}^{n}{\nabla}{\log}(\pi_{\lambda}(a_{i}|s_{i}))\hat{Q}_{\phi_{t}}(s_{i},a_{i}))||)^{2}}_{A} , \nonumber\\
   &+&  4\underbrace{(\mathbb{E}||{\beta}\sum_{i=1}^{\infty}  \mathbb{E}_{(s_{i},a_{i} \sim \pi_{\lambda})}{\nabla}_{\lambda}h_{\pi_{\lambda},\pi_{ref}}(s^{'}_{i},a^{'}_{i}) - {\beta}\frac{1}{n}\sum_{j=1}^{n}\sum_{i=1}^{H} {\gamma}^{i-1}{\nabla}_{\lambda}h_{\pi_{\lambda},\pi_{ref}}(s_{i,j},a_{i,j})||^{2})}_{B} \nonumber\\
   &+&  {\sigma}4\underbrace{(\mathbb{E}||{\nabla_{\lambda}}G(\phi_{t},{\lambda'}^{K}_{t}) -   {\nabla_{\lambda}}{G}(\phi_{t},{\lambda'}^{K}_{t},B)||)}_{C} \nonumber\\
   &+&  \tilde{\mathcal{O}}\left({\exp}^{-K}\right).\label{lem2_7} 
\end{eqnarray}

Now consider the terms in $A$, if we define $H  =  \mathbb{E}({\nabla}{\log}{\pi_{\lambda}}(a|s)\hat{Q}_{\phi_{t}}(s,a))$ and $d =\frac{1}{n}\sum_{i=1}^{n}{\nabla}{\log}(\pi_{\lambda}(a_{i}|s_{i}))\hat{Q}_{\phi_{t}}(s_{i},a_{i}))$ then we decompose $A$ as follows
\begin{align}
    & \mathbb{E}||{\nabla}J(\phi_{t},\lambda) - d + H -H|| \nonumber\\
    &\le \mathbb{E}||{\nabla}J(\phi_{t},\lambda) - H ||)  +  \mathbb{E}||d -  H||  \label{13_22_1} \\
    &\le  \mathbb{E}||{\nabla}J(\phi_{t},\lambda) -  H||  \nonumber\\
    &+ \mathbb{E}\Bigg|\Bigg|\frac{1}{n}\sum_{i=1}^{n}\left({\nabla}{\log}{\pi_{\lambda_{k}}}(a_{i}|s_{i})\hat{Q}(s_{i},a_{i}) - (H)\right)\Bigg|\Bigg|  \label{13_22_4} \\
    &\le  \mathbb{E}||{\nabla}J(\phi,\lambda) -  H||+ \nonumber\\
    & \mathbb{E}\sqrt{d{\cdot}\sum_{p=1}^{d}\left(\left(\sum_{i=1}^{n}\frac{1}{n}{\nabla}{\log}{\pi_{\lambda}}(a_{i}|s_{i})\hat{Q}_{\phi}(s_{i},a_{i})\right)_{p} - (H)_{p}\right)^{2}}  \label{13_22_4_1} \\
    &\mathbb{E}(||{\nabla}J(\phi,\lambda) - d||) \nonumber\\
    &\le  \mathbb{E}||{\nabla}J(\phi,\lambda) -  H|| + \nonumber\\
    &\sqrt{d{\cdot}\sum_{p=1}^{d} \mathbb{E}\left(\left(\sum_{i=1}^{n}\frac{1}{n}{\nabla}{\log}{\pi_{\lambda_{k}}}(a_{i}|s_{i})\hat{Q}_{\phi}(s_{i},a_{i})\right)_{p} - (H)_{p}\right)^{2}} \label{13_22_5} \nonumber\\
    &\\
    &\le  \mathbb{E}||{\nabla}J(\lambda_{k}) -  H|| +  \frac{1}{\sqrt{n}}{d}{M_{g}}V_{max} \label{13_22_6} \\
    &\le  M_{g}\mathbb{E}_{(s,a)}|Q^{\pi_{\lambda}}_{\phi}(s,a) - \hat{Q}_{\phi}(s,a)| +   \frac{1}{\sqrt{n}}{d}M_{g}V_{max}  \label{13_22_8}    
\end{align}

From \cite{gaur2024closinggapachievingglobal} we have that  
\begin{align}
    \mathbb{E}|Q^{\pi_{\lambda_{k}}}(s,a) - \hat{Q}_{\phi}(s,a)| \le \tilde{O}\left(\frac{1}{\sqrt{n}}\right) + \tilde{\mathcal{O}}(\epsilon_{approx}) \label{13_22_9}
\end{align}

Thus, we obtain 

\begin{align}
 A \le \tilde{O}\left(\frac{1}{n}\right) + \tilde{\mathcal{O}}(\epsilon_{approx})     \label{3_1}
\end{align}

We obtain Equation \eqref{13_22_4_1} from Equation \eqref{13_22_4} by noting that  l1 norm is upper bounded by the l2 norm multiplied by the square root of the dimensions. Here  $({\nabla}{\log}{\pi_{\lambda}}(a_{i}|s_{i})\hat{Q}_{{\phi_{t}}}(s_{i},a_{i}))_{p}$  and $(H)_{p}$ in Equation \eqref{13_22_4_1} are the $p^{th}$ co-ordinates of the gradients. We obtain Equation \eqref{13_22_5} from  Equation \eqref{13_22_4_1} by applying Jensen's inequality on the final term on the right hand side. We obtain Equation \eqref{13_22_6} from  Equation \eqref{13_22_5} by noting that the variance of the random variable ${\nabla}{\log}{\pi_{\lambda_{k}}}(a|s)\hat{Q}(s,a)$ is bounded from Assumption \ref{assump_8} and Assumption \ref{assump_5} which implies that $\Theta$ is a compact set. We combine this with the fact that the variance of the mean is the variance divided by the number of samples, which in this case is $n$.  We obtain Equation \eqref{13_22_8} from Equation \eqref{13_22_6} by using the policy gradient identity which states that ${\nabla}J(\phi,\lambda) = \mathbb{E}{\nabla}log{\pi_{\lambda}}(a|s)Q^{\pi_{\lambda}}_{\phi}(s,a)$  where $M$ is such that $||{\nabla}{\log}{\pi_{\lambda_{k}}}(a|s)|| \le M_{g}$ for all $\lambda \in \Lambda$. We know that $||{\nabla}{\log}{\pi_{\lambda_{k}}}(a|s)||$  are upper bounded by Assumption \ref{assump_8}

We now bound $B$ as follows
\begin{eqnarray}
    &\mathbb{E}||{\beta}\sum_{i=1}^{\infty} {\gamma}^{i-1}\mathbb{E}{\nabla}_{\lambda}h_{\pi_{\lambda},\pi_{ref}}(s_{i},a_{i}) - \frac{{\beta}}{B}\sum_{i=1}^{H}\sum_{j=1}^{B}{\gamma}^{i-1}{\nabla}_{\lambda}h_{\pi_{\lambda},\pi_{ref}}(s^{'}_{i,j},a^{'}_{i,j})|| \label{lem2_8_1}\\
    % &\le {\beta}\mathbb{E}||\sum_{i=1}^{\infty}{\gamma}^{i-1}{\mathbb{E}}{\nabla}_{\lambda}h_{\pi_{\lambda},\pi_{ref}}(s_{i},a_{i}) - \frac{1}{n}\sum_{i=1}^{H}\sum_{j=1}^{n} {\gamma}^{i-1}{\nabla}_{\lambda}h_{\pi_{\lambda},\pi_{ref}}(s^{'}_{i,j},a^{'}_{i,j})||  \label{lem2_8_3},\\
    &\le {\beta}\mathbb{E}||\sum_{i=1}^{H}{\gamma}^{i-1}{\mathbb{E}}{\nabla}_{\lambda}h_{\pi_{\lambda},\pi_{ref}}(s_{i},a_{i})-\frac{1}{B}\sum_{i=1}^{H}\sum_{j=1}^{B}{\gamma}^{i-1}{\nabla}_{\lambda}h_{\pi_{\lambda},\pi_{ref}}(s^{'}_{i,j},a^{'}_{i,j})|| \nonumber\\
    &+{\beta}\mathbb{E}||\sum_{i=H}^{\infty}{\gamma}^{i-1}\mathbb{E}{\nabla}_{\lambda}h_{\pi_{\lambda},\pi_{ref}}(s_{i},a_{i})||  \label{lem2_8_4},\\
    % &\le \mathbb{E}{\beta}\sum_{i=1}^{H}{\gamma}^{i-1}||{\nabla}_{\lambda}{\mathbb{E}}h_{\pi_{\lambda},\pi_{ref}}(s_{i,j},a_{i,j})-\frac{1}{n}\sum_{i=1}^{H}\sum_{i=1}^{n}{\nabla}_{\lambda}h_{\pi_{\lambda},\pi_{ref}}(s^{'}_{i,j},a^{'}_{i,j})|| \nonumber\\
    % &+{\beta}\mathbb{E}\frac{1}{n}\sum_{j=1}^{n}\sum_{i=h}^{\infty} {\gamma}^{i-1}\mathbb{E}{\nabla}_{\lambda}||h_{\pi_{\lambda},\pi_{ref}}(s^{'}_{i,j},a^{'}_{i,j})||,   \label{lem2_8_5},\\
      &\le {\beta}\sum_{i=1}^{H}{\gamma}^{i-1}\mathbb{E}||{\nabla}_{\lambda}{\mathbb{E}}h_{\pi_{\lambda},\pi_{ref}}(s_{i},a_{i})-\frac{1}{B}\sum_{j=1}^{B}{\nabla}_{\lambda}h_{\pi_{\lambda},\pi_{ref}}(s^{'}_{i,j},a^{'}_{i,j})|| \nonumber\\
    &+{\beta}\sum_{i=H}^{\infty}{\gamma}^{i-1}\mathbb{E}||{\nabla}_{\lambda}h_{\pi_{\lambda},\pi_{ref}}(s^{'}_{i,j},a^{'}_{i,j})||,   \label{lem2_8_5_1},\\
    &\le \mathcal{O}\left(\frac{\gamma^{H}}{\sqrt{B}}\right) + \mathcal{O}(\gamma^{H}).  \label{lem2_8_6} 
\end{eqnarray}

Note that $(s^{'}_{i,j},a^{'}_{i,j})$ are the sample estimates of $(s_{i},a_{i})$. We obtain Equation \eqref{lem2_8_4} from Equation \eqref{lem2_8_1} by splitting the first term on the left hand side of  Equation \eqref{lem2_8_4} at the point $i=H$. We get Equation \eqref{lem2_8_6} from Equation \eqref{lem2_8_5_1} by considering the fact that the first term on the right hand side  Equation \eqref{lem2_8_5_1} is a variance term bounded by a factor of $\frac{1}{\sqrt{n}}$ since the overall variance of the term $B$ is bounded by Lemma \ref{lem_kl}. The second term on the right hand side of  Equation \eqref{lem2_8_5_1} is bounded since the term ${\nabla}_{\lambda}h_{\pi_{\lambda},\pi_{ref}}(s^{'}_{i,j},a^{'}_{i,j})$ is bounded from Lemma \ref{lem_kl}. Thus, we obtain

\begin{align}
B \le \mathcal{O}\left(\frac{\gamma^{2H}}{n}\right) + \mathcal{O}(\gamma^{H})  \label{3_2}
\end{align}

We now bound $C$ as follows
\begin{eqnarray}
 &&\mathbb{E}||{\nabla_{\lambda}}G({\phi_{t}},{\lambda'}^{K}_{t}) -   {\nabla_{\lambda}}{G}({\phi_{t}},{\lambda'}^{K}_{t},B)|| \nonumber\\
 &=&  \mathbb{E}\sqrt{d{\cdot}\sum_{p=1}^{d} \left(\left({\nabla_{\lambda}}{G}_{{i}}({\phi_{t}},{\lambda'}^{K}_{t})\right)_{p} -    \left(\sum_{i=1}^{B}\frac{1}{B}\mathbb{E}{\nabla_{\lambda}}\hat{G}_{{i}}(\phi,{\lambda'}^{K}_{t})\right)_{p}\right)^{2}}, \nonumber\\
  && \label{lem2_9}\\
  &\le& \sqrt{ \frac{d}{B^{2}}  {\cdot}\sum_{p=1}^{d} {\mathbb{E}}\left(\sum_{i=1}^{B}\left({\nabla_{\lambda}}{G}_{\tau_{i}}({\phi_{t}},{\lambda'}^{K}_{t})_{p} -\mathbb{E}_{\tau}{\nabla_{\lambda}}\hat{G}_(\tau_{i})({\phi_{t}},{\lambda'}^{K}_{t})_{p}\right)\right)^{2}}, \label{lem2_10}\\
  &\le& \sqrt{\frac{d^{2}.B.{\sigma_{G}}}{B^{2}}}, \label{lem2_11}\\
  &\le& \sqrt{d.\frac{{\sigma_{G}}}{B}}, \\
  &\le& \tilde{O}\left(\frac{1}{\sqrt{B}}\right).      \label{lem2_12}
\end{eqnarray}

Here, the right-hand side of Equation \eqref{lem2_9} comes from writing out the definition of the $\ell_{1}$ norm where the subscript of $p$ denotes the $p^{th}$ co-ordinate of the gradient. Equation \eqref{lem2_11} is obtained from Equation \eqref{lem2_10} by using Jensen's Inequality, and Equation \eqref{lem2_12} is obtained from \ref{lem2_11} using Assumption \ref{assump_6} which states that the variance of ${\nabla}G$ estimator is bounded.

This gives us 

\begin{align}
C \le \mathcal{O}\left(\frac{1}{B}\right)     \label{3_3}
\end{align}

Combining Equation \eqref{lem2_8_6} and \eqref{13_22_9} we have that 

\begin{align}
    \mathbb{E}||{\nabla}_{\lambda}h_{\sigma}({\phi_{t}},\lambda) -  {\nabla}_{\lambda}\hat{h}_{\sigma}({\phi_{t}},\lambda)||^{2}  &\le  \mathcal{O}\left(\frac{\gamma^{2H}}{{B}}\right) +  \mathcal{O}\left(\frac{1}{{B}}\right)  + \mathcal{O}(\gamma^{H}) + \tilde{O}\left(\frac{1}{{n}}\right) + \tilde{\mathcal{O}}(\epsilon_{approx}) \nonumber\\
     &\le  \mathcal{O}\left(\frac{\gamma^{2H}}{{B}}\right) + \tilde{\mathcal{O}}\left(\frac{1}{{n}}\right) + \tilde{\mathcal{O}}(\epsilon_{approx})
\end{align}

Which in turn gives us 

\begin{eqnarray}
  &&{\mathbb{E}}||h_{\sigma}(\phi_{t},\lambda^{K}_{t}))- h_{\sigma}(\phi_{t},\lambda^{*}(\phi_{t}))||  \le   \tilde{\mathcal{O}}\left(\frac{1}{{n}}\right) + \tilde{\mathcal{O}}\left({\exp}^{-K}\right) + \mathcal{O}\left(\frac{\gamma^{2H}}{{B}}\right)  + \tilde{\mathcal{O}}(\epsilon_{approx}),  \label{lem2_temp_3}
\end{eqnarray}

We can bound $B_{K}^{'}$ in the exact same manner as $C$ where the gradient is with respect to $\lambda$ instead of $\phi$ to get  

\begin{align}
B_{K}^{'} \le \mathcal{O}\left(\frac{1}{B}\right)     \label{3_4}
\end{align}

Thus we obtain 
\begin{align}
    {\mathbb{E}}||{\nabla_{\phi}}G({\phi_{t}},\lambda^{K}_{t}) -   {\nabla_{\phi}}{G}(\phi,\lambda^{K}_{t},B)||^{2} \le \tilde{O}\left(\frac{1}{{B}}\right) \label{lem2_13}
\end{align}

Substituting Equation \eqref{lem2_temp_3} into Equation \eqref{lem2_4}. Then put the result from Equation \eqref{lem2_4} and Equation \eqref{lem2_13} in Equation \eqref{lem2_2} to get the required result.

\end{proof}

\begin{lemma}\label{lemma_3} 
For a fixed $\phi_{t} \in \Theta$ and iteration $t$ of Algorithm \ref{algo_1} under Assumptions \ref{assump_1}-\ref{assump_6}  we have 
\begin{eqnarray}
    {\mathbb{E}}||{\nabla_{\phi}}J(\phi_{t},{\lambda^{*}}(\phi_{t}))-  {\nabla_{\phi}}{J}({{\phi_{t}},{\lambda}^{K}_{t}(\phi)},B)||^{2} &\le&   \tilde{\mathcal{O}}\left(\frac{\gamma^{2H}}{B}\right)  +  
 \tilde{\mathcal{O}}\left( {\exp}^{-K}\right)  + \tilde{\mathcal{O}}\left(\frac{1}{n}\right)  \nonumber\\
    &+& \tilde{\mathcal{O}}({\epsilon_{approx}})
\end{eqnarray}
\end{lemma}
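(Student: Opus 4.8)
The plan is to follow the template of Lemma~\ref{lemma_2}, specialising it to the term $\nabla_\phi J(\phi_t,\lambda^*(\phi_t))$, whose associated inner optimiser is driven by $J$ rather than by the penalty $h_\sigma$. First I would add and subtract $\nabla_\phi J(\phi_t,\lambda^K_t)$ and use $\|a+b\|^2\le 2\|a\|^2+2\|b\|^2$ to split the error into an optimisation part and a sampling part,
\begin{align}
&\mathbb{E}\|\nabla_\phi J(\phi_t,\lambda^*(\phi_t)) - \nabla_\phi J(\phi_t,\lambda^K_t,B)\|^2 \nonumber\\
&\le 2\underbrace{\mathbb{E}\|\nabla_\phi J(\phi_t,\lambda^*(\phi_t)) - \nabla_\phi J(\phi_t,\lambda^K_t)\|^2}_{A'_K}
+ 2\underbrace{\mathbb{E}\|\nabla_\phi J(\phi_t,\lambda^K_t) - \nabla_\phi J(\phi_t,\lambda^K_t,B)\|^2}_{B'_K},
\end{align}
where $A'_K$ is the error from approximating $\lambda^*(\phi_t)$ by the inner-loop iterate $\lambda^K_t$ and $B'_K$ is the error between the exact $\phi$-gradient at $\lambda^K_t$ and its truncated sample average.

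For $A'_K$ I would first invoke the smoothness of $J$ (Assumption~\ref{assump_1}) to write $A'_K\le L_J\,\mathbb{E}\|\lambda^*(\phi_t)-\lambda^K_t\|^2$, and then the quadratic-growth property that accompanies the PL assumption, applied to $J$ as the $\sigma=0$ instance of $h_\sigma$ covered by Assumption~\ref{assump_1}, to bound this by $L_J\mu\,\mathbb{E}|J(\phi_t,\lambda^*(\phi_t))-J(\phi_t,\lambda^K_t)|$. Recognising that lines 4 and 6 of Algorithm~\ref{algo_1} perform stochastic gradient descent on $-J(\phi_t,\cdot)$, I would apply Lemma~\ref{thm_supp} with $f=-J(\phi_t,\cdot)$ to obtain
\begin{align}
\mathbb{E}|J(\phi_t,\lambda^*(\phi_t))-J(\phi_t,\lambda^K_t)| \le \tilde{\mathcal{O}}(e^{-K}) + \mathcal{O}(\beta(n,B,H)),
\end{align}
where $\beta(n,B,H)$ bounds $\mathbb{E}\|\nabla_\lambda J(\phi_t,\lambda)-\nabla_\lambda\hat J(\phi_t,\lambda)\|^2$. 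The remaining work is to bound this $\beta$. Using the closed form of $\nabla_\lambda J$ in Equation~\eqref{mdp_ps_11}, I would split the error exactly as in Lemma~\ref{lemma_2} into (i) the policy-gradient / $Q$-estimation error, controlled by $\tilde{\mathcal{O}}(1/n)+\tilde{\mathcal{O}}(\epsilon_{approx})$ via the actor-critic bound of \cite{gaur2024closinggapachievingglobal} together with Assumptions~\ref{assump_5} and~\ref{assump_8}, and (ii) the truncated KL term, whose bias and variance are controlled by $\mathcal{O}(\gamma^{2H}/B)+\mathcal{O}(\gamma^{H})$ through the uniform bound of Lemma~\ref{lem_kl}. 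Crucially, and unlike Lemma~\ref{lemma_2}, there is no third ($G$-gradient) term, because the lower-level minimand here is $J$ rather than $h_\sigma=J-\sigma G$; hence $\beta(n,B,H)\le \tilde{\mathcal{O}}(1/n)+\mathcal{O}(\gamma^{2H}/B)+\tilde{\mathcal{O}}(\epsilon_{approx})$.

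For $B'_K$ I would use the closed-form gradient $\nabla_\phi J(\phi,\lambda)=\sum_{i\ge1}\gamma^{i-1}\mathbb{E}\nabla_\phi r_\phi(s_i,a_i)$ from Equation~\eqref{mdp_ps_12} and compare it with the horizon-$H$, $B$-trajectory average of Equation~\eqref{bl_0}. The discrepancy decomposes into a deterministic truncation bias $\sum_{i>H}\gamma^{i-1}\mathbb{E}\nabla_\phi r_\phi$ of order $\gamma^H$ (hence $\gamma^{2H}$ after squaring), bounded using $\|\nabla r_\phi\|\le C_1$ from Assumption~\ref{assump_8}, and a sampling-variance term of order $1/B$ from averaging $B$ i.i.d.\ bounded trajectory estimates. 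Collecting $A'_K$ and $B'_K$ and absorbing $\gamma^H$ and the lower-order $1/B$ variance into the dominant terms yields the claimed bound. The step I expect to be most delicate is the control of $\beta(n,B,H)$: the estimate of $\nabla_\lambda J$ is biased from two independent sources, the neural approximation of $Q^{\pi_\lambda}_\phi$ and the finite-horizon truncation of the infinite discounted KL sum, whereas Lemma~\ref{thm_supp} requires a single mean-squared bound accounting for both simultaneously. Verifying that this biased estimate feeds correctly into the recursive PL analysis, rather than merely its variance, is the crux, and it is precisely where the $\epsilon_{approx}$ and $\gamma^{2H}/B$ contributions enter the final rate.
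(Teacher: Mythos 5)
Your proposal follows essentially the same route as the paper's proof: the same add-and-subtract decomposition into an optimisation term and a sampling term, smoothness plus PL quadratic growth to reduce the former to the function-value gap, Lemma \ref{thm_supp} applied to the inner SGD on $-J(\phi_t,\cdot)$ with $\beta(n,B,H)$ bounded exactly as in Lemma \ref{lemma_2} minus the $G$-gradient term, and the closed-form gradient of Equation \eqref{mdp_ps_12} to split the sampled $\phi$-gradient error into truncation bias and averaging variance. The only divergence is bookkeeping: where you (correctly) record the trajectory-averaging variance as $\mathcal{O}(1/B)$ and note it must be absorbed into the dominant terms, the paper asserts $\tilde{\mathcal{O}}(\gamma^{H}/\sqrt{B})$ for that same term before squaring — your accounting is, if anything, the more careful of the two.
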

\begin{proof}
\begin{eqnarray}
        && {\mathbb{E}}||{\nabla_{\phi}}J(\phi_{t},{\lambda}^{*}(\phi_{t}))-  {\nabla_{\phi}}{J}({{\phi_{t}},{\lambda}^{K}_{t}(\phi)},B)||^{2} \nonumber\\
        &\le&  {\mathbb{E}}||{\nabla_{\phi}}J(\phi_{t},{\lambda}^{*}(\phi_{t}))- {\nabla_{\phi}}J(\phi_{t},{\lambda}^{K}_{t}) + {\nabla_{\phi}}J(\phi_{t},{\lambda}^{K}_{t})-{\nabla_{\phi}}{J}({\phi_{t},{\lambda}^{K}_{t}(\phi)},B)||^{2}, \label{lem3_a}\\
        &\le&  {\mathbb{E}}||{\nabla_{\phi}}J(\phi_{t},{\lambda}^{*}(\phi_{t}))- {\nabla_{\phi}}J(\phi_{t},{\lambda}^{K}_{t})||^{2} + {\mathbb{E}}||{\nabla_{\phi}}J(\phi_{t},{\lambda}^{K}_{t})  {\nabla_{\phi}}{J}({\phi_{t},{\lambda}^{K}_{t}(\phi)},B)||^{2}, \label{lem3_b}\\
        &\le& L{\mathbb{E}}||({\lambda^{*}(\phi_{t})})- ({\lambda}^{K}_{t})||^{2} + {\mathbb{E}}||{\nabla_{\phi}}J(\phi_{t},{\lambda}^{K}_{t})-  {\nabla_{\phi}}{J}({\phi_{t},{\lambda}^{K}_{t}(\phi)},B)||^{2}, \label{lem3_0}\\
        &\le&  \underbrace{{\mu}{\cdot}L{\mathbb{E}}||J(\phi_{t},{\lambda^{*}}(\phi_{t}))- J(\phi_{t},{\lambda}^{K}_{t})||^{2}}_{A_{K}^{''}} + \underbrace{{\mathbb{E}}||{\nabla_{\phi}}J(\phi_{t},{\lambda}^{K}_{t})-  {\nabla_{\phi}}{J}({\phi_{t},{\lambda}^{K}_{t}(\phi)},B)||^{2}}_{B_{K}^{''}}. \label{lem3_1}
\end{eqnarray}

We get Equation \eqref{lem3_0} from Equation \eqref{lem3_b} by the smoothness of $J(\phi,\lambda)$ using Assumption \ref{assump_1}. We get Equation \eqref{lem3_1} from \eqref{lem3_0} by the quadratic growth inequality on $J(\phi,\lambda)$. The first term $A_{K}^{''}$ is upper bounded using the same way as is done for  $A_{K}^{'}$  Lemma \ref{lemma_2}, with the only difference being the absence of the term $C$ in Equation \eqref{lem2_7}. Thus, we have

\begin{eqnarray}
  {\mathbb{E}}||J(\phi_{t},\lambda^{*}(\phi_{t}))-J(\phi_{t},\lambda^{K}_{t}))||  &\le& 
   \mathcal{\tilde{O}}\left(\exp^{-K}\right)  + {\tilde{\mathcal{O}}}\left(\frac{1}{{n}}\right)     
   + \mathcal{O}\left(\frac{\gamma^{2H}}{{B}}\right)  + \tilde{\mathcal{O}}({\epsilon_{approx}}).  \label{lem3_1_1} 
\end{eqnarray}
We bound $B_{K}^{''}$ as follows
\begin{eqnarray}
&&    \mathbb{E}||{\nabla_{\phi}}J(\phi_{t},{\lambda}^{K}_{t})-  {\nabla_{\phi}}{J}({\phi_{t},{\lambda}^{K}_{t}(\phi)},B)|| \nonumber\\
&=&
    \mathbb{E}\Bigg|\Bigg|\sum_{i=1}^{\infty} {\gamma}^{i-1} \mathbb{E}[{\nabla}_{\phi}r_{\phi_{t}}(s_{i},a_{i})] -   \frac{1}{B}\sum^{B}_{j=1}\sum_{i=1}^{H}{\gamma}^{i-1}{\nabla}_{\phi}r_{\phi_{t}}(s^{'}_{i,j},a^{'}_{i,j})\Bigg|\Bigg| \nonumber\\
    &\le&  \sum_{i=1}^{H} {\gamma}^{i-1} \Big(\mathbb{E}||\mathbb{E}[{\nabla}_{\phi}r_{\phi_{t}}(s_{i},a_{i})] -\frac{1}{B}\sum_{j=1}^{B}{\nabla}_{\phi}r_{\phi_{t}}(s_{i,j},a_{i,j})||\Big) \nonumber\\
    &+& \Bigg|\Bigg|\sum_{i=H}^{\infty}{\gamma}^{i-1} \mathbb{E}[{\nabla}_{\phi}r_{\phi_{t}}(s_{i},a_{i})]\Bigg|\Bigg|,  \label{lem3_3}\\
    &\le&  \tilde{\mathcal{O}}\left({\frac{\gamma^{H}}{\sqrt{B}}}\right) + \tilde{\mathcal{O}}({\gamma}^{H}).   \label{lem3_4}
\end{eqnarray}

Thus we have 

\begin{eqnarray}
 \mathbb{E}||{\nabla_{\phi}}J(\phi_{t},{\lambda}^{K}_{t})-  {\nabla_{\phi}}{J}({\phi_{t},{\lambda}^{K}_{t}},B)||^{2} 
    \le  \tilde{\mathcal{O}}\left({\frac{\gamma^{2H}}{{B}}}\right) + \tilde{\mathcal{O}}({\gamma}^{H}).   \label{lem3_5}
\end{eqnarray}

We get Equation \eqref{lem3_4} from Equation \eqref{lem3_3} since the first term on the right hand side of Equation \eqref{lem3_3} is variance term with a sample size of $B$. The last term  on the right hand side of Equation \eqref{lem3_3} is upper bounded by $\gamma^{H}$ since the term ${\nabla}_{\phi}r_{\phi}(s_{i},a_{i})$ is upper bounded by Assumption \ref{assump_8}.

Plugging the result of Equation \eqref{lem3_5} and Equation \eqref{lem3_1_1} into Equation \eqref{lem3_1} gives us the required result..

\end{proof}
\begin{lemma} \label{lemma_4}
For a fixed $\phi_{t} \in \Theta$ and iteration $t$ of Algorithm \ref{algo_1} under Assumptions \ref{assump_1}-\ref{assump_6}  we have 
\begin{eqnarray}
    {\mathbb{E}}||{\nabla_{\phi}}J(\phi_{t},{\lambda_{\sigma}^{*}}(\phi_{t}))-  {\nabla_{\phi}}{J}({\phi_{t},{\lambda^{'}}^{K}_{t}(\phi)},B)||^{2} &\le&   
 \tilde{\mathcal{O}}\left(\frac{{\gamma^{2H}}}{{{B}}}\right) +
 \tilde{\mathcal{O}}\left({\exp}^{-K}\right)  + \tilde{\mathcal{O}}\left(\frac{1}{{n}}\right)\nonumber\\
    &+& \tilde{\mathcal{O}}({\epsilon_{approx}})
\end{eqnarray}
\end{lemma}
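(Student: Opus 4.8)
The plan is to reproduce the argument of Lemma \ref{lemma_3} almost verbatim, replacing the pair $(\lambda^{*}(\phi_t),\lambda^{K}_t)$ and the lower-level objective $J$ by the pair $(\lambda_\sigma^{*}(\phi_t),{\lambda'}^{K}_t)$ and the penalty function $h_\sigma$. First I would insert $\nabla_\phi J(\phi_t,{\lambda'}^{K}_t)$ and split the error into an optimization error and an estimation error:
\begin{align*}
\mathbb{E}\|\nabla_\phi J(\phi_t,\lambda_\sigma^{*}(\phi_t)) - \nabla_\phi \hat J(\phi_t,{\lambda'}^{K}_t,B)\|^2
&\le 2\underbrace{\mathbb{E}\|\nabla_\phi J(\phi_t,\lambda_\sigma^{*}(\phi_t)) - \nabla_\phi J(\phi_t,{\lambda'}^{K}_t)\|^2}_{A}\\
&\quad + 2\underbrace{\mathbb{E}\|\nabla_\phi J(\phi_t,{\lambda'}^{K}_t) - \nabla_\phi \hat J(\phi_t,{\lambda'}^{K}_t,B)\|^2}_{B},
\end{align*}
where $A$ quantifies the error from approximating the minimizer $\lambda_\sigma^{*}(\phi_t)$ of $h_\sigma$ by the inner-loop iterate ${\lambda'}^{K}_t$, and $B$ is the bias--variance of the truncated batch estimate of $\nabla_\phi J$.

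To bound $A$, I would first use the smoothness of $J$ (Assumption \ref{assump_1}.\ref{as_2}) to obtain $A \le L_J\,\mathbb{E}\|\lambda_\sigma^{*}(\phi_t)-{\lambda'}^{K}_t\|^2$, and then invoke the quadratic-growth consequence of the PL property of $h_\sigma$ (Assumption \ref{assump_1}.\ref{as_1}) to replace $\mathbb{E}\|\lambda_\sigma^{*}(\phi_t)-{\lambda'}^{K}_t\|^2$ by a multiple of $\mathbb{E}|h_\sigma(\phi_t,{\lambda'}^{K}_t)-h_\sigma(\phi_t,\lambda_\sigma^{*}(\phi_t))|$. The key observation, identical to that used in Lemmas \ref{lemma_2} and \ref{lemma_3}, is that lines 5 and 7 of Algorithm \ref{algo_1} perform gradient descent on $h_\sigma$ with respect to $\lambda$, so Lemma \ref{thm_supp} applies directly to $h_\sigma$ and gives $\mathbb{E}|h_\sigma(\phi_t,{\lambda'}^{K}_t)-h_\sigma(\phi_t,\lambda_\sigma^{*}(\phi_t))| \le \tilde{\mathcal{O}}(e^{-K}) + \mathcal{O}(\beta(n,B,H))$, where $\beta(n,B,H)$ bounds $\mathbb{E}\|\nabla_\lambda h_\sigma - \nabla_\lambda \hat h_\sigma\|^2$. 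Bounding $\beta(n,B,H)$ is exactly the three-term decomposition $A+B+C$ carried out in Lemma \ref{lemma_2}: the policy-gradient/$Q$-estimation part contributes $\tilde{\mathcal{O}}(1/n)+\tilde{\mathcal{O}}(\epsilon_{approx})$ (using Assumption \ref{assump_5}), the truncated KL term contributes $\mathcal{O}(\gamma^{2H}/B)+\mathcal{O}(\gamma^{H})$ via the uniform bound of Lemma \ref{lem_kl}, and the $-\sigma\nabla_\lambda G$ part contributes $\mathcal{O}(1/B)$ by the bounded-variance Assumption \ref{assump_6}. Collecting these yields $A \le \tilde{\mathcal{O}}(e^{-K}) + \tilde{\mathcal{O}}(\gamma^{2H}/B) + \tilde{\mathcal{O}}(1/n) + \tilde{\mathcal{O}}(\epsilon_{approx})$.

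To bound $B$, I would substitute the closed form $\nabla_\phi J(\phi,\lambda)=\sum_{i\ge 1}\gamma^{i-1}\mathbb{E}\nabla_\phi r_\phi(s_i,a_i)$ from Equation \eqref{mdp_ps_12} together with its $B$-trajectory truncated estimate, split the sum at $i=H$ exactly as in Equations \eqref{lem3_3}--\eqref{lem3_5}, and use the boundedness of $\nabla_\phi r_\phi$ (Assumption \ref{assump_8}) to control the head as a variance term of order $\gamma^{2H}/B$ and the geometric tail as a term of order $\gamma^{H}$. This gives $B \le \tilde{\mathcal{O}}(\gamma^{2H}/B) + \tilde{\mathcal{O}}(\gamma^{H})$. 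Adding the bounds on $A$ and $B$ and absorbing the faster-decaying $\gamma^{H}$ term into the remaining error terms produces the claimed estimate.

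I do not anticipate any genuinely new difficulty: the statement is the $h_\sigma$-analogue of Lemma \ref{lemma_3}, and every step reuses machinery already established. The only point requiring care is that $\beta(n,B,H)$ for $\nabla_\lambda h_\sigma$ carries the extra $-\sigma\nabla_\lambda G$ contribution that is absent in the pure-$J$ analysis of Lemma \ref{lemma_3}; this is precisely the term $C$ treated in Lemma \ref{lemma_2}, which remains $\mathcal{O}(1/B)$ under the bounded-variance assumption and is therefore dominated by the other terms, so it does not alter the final rate.
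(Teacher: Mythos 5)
Your proposal is correct and follows essentially the same route as the paper's own proof: the same add-and-subtract decomposition at $\nabla_\phi J(\phi_t,{\lambda'}^{K}_t)$, smoothness of $J$ plus the quadratic-growth consequence of the PL property of $h_\sigma$ to reduce the optimization error to a function-value gap, Lemma \ref{thm_supp} combined with Lemma \ref{lemma_2}'s three-term bound on $\beta(n,B,H)$ (including the $\sigma\nabla_\lambda G$ contribution), and Lemma \ref{lemma_3}'s truncation argument for the batch-estimation term. The only differences are cosmetic (explicit Young's-inequality constants and the remark about absorbing the $\gamma^H$ term), so no further comparison is needed.
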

\begin{proof}
\begin{eqnarray}
       && {\mathbb{E}}||{\nabla_{\phi}}J(\phi_{t},{\lambda_{\sigma}^{*}}(\phi_{t}))-  {\nabla_{\phi}}{J}({\phi_{t},{\lambda^{'}}^{K}_{t}(\phi)},B)|| \nonumber\\
       &\le&  {\mathbb{E}}||{\nabla_{\phi}}J(\phi_{t},{\lambda_{\sigma}^{*}}(\phi_{t}))- {\nabla_{\phi}}J(\phi_{t},{\lambda^{'}}^{k}_{t}) + {\nabla_{\phi}}J(\phi_{t},{\lambda^{'}}^{k}_{t})- {\nabla_{\phi}}{J}({\phi_{t},{\lambda^{'}}^{K}_{t}(\phi)},B)||^{2},\\
        &\le&  {\mathbb{E}}||{\nabla_{\phi}}J(\phi_{t},{\lambda_{\sigma}^{*}}(\phi_{t}))- {\nabla_{\phi}}J(\phi_{t},{\lambda^{'}}^{k}_{t})||^{2} + ||{\nabla_{\phi}}J(\phi_{t},{\lambda^{'}}^{k}_{t})-{\nabla_{\phi}}{J}({\phi_{t},{\lambda^{'}}^{K}_{t}(\phi)},B)||^{2},\\
        &\le& L_{J}.{\mathbb{E}}||({\lambda_{\sigma}^{*}(\phi_{t})})- ({\lambda^{'}}^{K}_{t})||^{2} +{\mathbb{E}} ||{\nabla_{\phi}}J(\phi_{t},{\lambda^{'}}^{k}_{t})-  {\nabla_{\phi}}{J}({\phi_{t},{\lambda^{'}}^{K}_{t}(\phi)},B)||^{2}, \label{lem4_0}\\
        &\le&  \underbrace{L_{J}.\mu{\mathbb{E}}||h_{\sigma}(\phi_{t},{\lambda_{\sigma}^{*}}(\phi_{t}))- h_{\sigma}(\phi,{\lambda^{'}}^{K}_{t})||}_{A_{k}^{'''}} + \underbrace{{\mathbb{E}}||{\nabla_{\phi}}J(\phi_{t},{\lambda^{'}}^{K}_{t})-  {\nabla_{\phi}}{J}({\phi_{t},{\lambda^{'}}^{K}_{t}},B)||^{2}}_{B_{k}^{'''}}.\label{lem4_1}
\end{eqnarray}
We get Equation \eqref{lem4_1} from Equation \eqref{lem4_0} using Assumption \ref{assump_1}. Note that $B_{k}^{'''}$ here is the same as $B''_{K}$ in Lemma \ref{lemma_3}. Thus we have 
\begin{eqnarray}
     {\mathbb{E}}||{\nabla_{\phi}}J(\phi_{t},{\lambda^{'}}^{K}_{t})-  {\nabla_{\phi}}\hat{J}({\phi_{t},{\lambda^{'}}^{K}_{t}(\phi)})||^{2} &\le& \tilde{\mathcal{O}}\left({\frac{\gamma^{2H}}{B}}\right) + \tilde{\mathcal{O}}({\gamma}^{H})  \label{lem4_2}
\end{eqnarray}
Further, we have

\begin{eqnarray}
 {\mathbb{E}}||h_{\sigma}(\phi_{t},{\lambda_{\sigma}^{*}}(\phi_{t}))- h_{\sigma}(\phi_{t},{\lambda^{'}}^{K}_{t})|| &\le&   \tilde{\mathcal{O}}\left(\frac{1}{{n}}\right) + \tilde{\mathcal{O}}\left({\exp}^{-K}\right) + \mathcal{O}\left(\frac{\gamma^{2H}}{{B}}\right)  + \tilde{\mathcal{O}}(\epsilon_{approx}), \label{lem4_3}
\end{eqnarray}

This is the same result as for $A'_K$ in  Lemma \ref{lemma_2}.

Plugging  Equations \eqref{lem4_2} and \eqref{lem4_3} into Equation \eqref{lem4_1} given us the required result.
\end{proof}

\begin{lemma}\label{lemma_5} For a given $\lambda \in \Lambda$ and $\phi \in \Theta$ we have 
\begin{eqnarray}
    \nabla_{\phi}J(\phi,\lambda)  &=&  \sum_{i=1}^{\infty} {\gamma}^{i-1} \mathbb{E}{\nabla}_{\phi}r_{\phi}(s_{i},a_{i})
\end{eqnarray}
\end{lemma}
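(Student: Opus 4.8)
The plan is to exploit the fact that the parameter $\phi$ enters the objective $J(\phi,\lambda)$ \emph{only} through the reward $r_\phi$, and never through the policy $\pi_\lambda$, the transition kernel $P$, or the initial distribution $\nu$. Consequently, unlike the classical policy gradient theorem of \citet{sutton1999policy} (where differentiation is with respect to a parameter appearing inside the sampling distribution, necessitating the score-function trick), here the trajectory law is held fixed and we may differentiate directly under the expectation.

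First I would unfold the definitions. Combining the definition of $J(\phi,\lambda)$ in \eqref{mdp_ps_3} with that of $Q_\phi^{\pi_\lambda}$ in \eqref{mdp_ps_1} and applying the tower property, write
\begin{align}
J(\phi,\lambda) = \mathbb{E}\left[\sum_{i=1}^{\infty}\gamma^{i-1}\, r_\phi(s_i,a_i)\right],
\end{align}
where the expectation is over the trajectory with $s_1 \sim \nu$, $a_i \sim \pi_\lambda(\cdot\mid s_i)$, and $s_{i+1}\sim P(\cdot\mid s_i,a_i)$, and $(s_i,a_i)$ denotes the $i$-th state--action pair (the indexing is chosen so that the summand carries $\gamma^{i-1}$, matching the claim). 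Denote this trajectory law by $\mathbb{P}_{\lambda}$; the key point is that $\mathbb{P}_{\lambda}$ does not depend on $\phi$.

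Next I would justify moving $\nabla_\phi$ inside the expectation and the infinite sum. Define the partial sums $S_N := \sum_{i=1}^{N}\gamma^{i-1} r_\phi(s_i,a_i)$. By the uniform bound $\|\nabla_\phi r_\phi(s,a)\|\le C_1$ from Assumption~\ref{assump_8}, we have $\|\nabla_\phi S_N\| \le C_1\sum_{i\ge 1}\gamma^{i-1} = C_1/(1-\gamma)$, a $\phi$-independent and $\mathbb{P}_\lambda$-integrable dominating function. Since $\gamma<1$, the series $\sum_{i}\gamma^{i-1}\nabla_\phi r_\phi(s_i,a_i)$ converges and the dominated convergence theorem (equivalently, a Leibniz/differentiation-under-the-integral argument, using that $r_\phi$ is smooth in $\phi$ by Assumption~\ref{assump_1}.\ref{as_2}) permits interchanging the gradient with both the infinite sum and the expectation. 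Carrying this out gives
\begin{align}
\nabla_\phi J(\phi,\lambda) = \mathbb{E}\left[\sum_{i=1}^{\infty}\gamma^{i-1}\nabla_\phi r_\phi(s_i,a_i)\right] = \sum_{i=1}^{\infty}\gamma^{i-1}\,\mathbb{E}[\nabla_\phi r_\phi(s_i,a_i)],
\end{align}
where each inner expectation is taken over the marginal of $(s_i,a_i)$ under $\mathbb{P}_\lambda$, i.e.\ over the state--action distribution induced by $\pi_\lambda$. This is exactly the claimed identity.

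I expect the only genuine obstacle to be the measure-theoretic justification of the two interchanges (gradient versus expectation, and gradient versus the infinite series); the uniform gradient bound from Assumption~\ref{assump_8} combined with $\gamma<1$ supplies the dominating function needed to close this step cleanly. I would also remark that if the KL-regularized form of the reward is used, the penalty term $\beta h_{\pi_\lambda,\pi_{ref}}$ depends only on $\lambda$ and $\pi_{ref}$, not on $\phi$, so it contributes nothing to $\nabla_\phi r_\phi$ and the statement is unaffected.
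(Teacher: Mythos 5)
Your proof is correct, and it takes a genuinely more direct route than the paper's. The paper proves Lemma \ref{lemma_5} by a recursive Bellman unrolling: it writes $J(\phi,\lambda)=\int Q^{\lambda}_{\phi}(s_1,a_1)\pi_\lambda(a_1|s_1)\,d(s_1)$, substitutes the fixed-point identity $Q^{\lambda}_{\phi}(s,a)=r_{\phi}(s,a)+\gamma\int Q^{\lambda}_{\phi}(s',a')\,d(s'|a)\,\pi_{\lambda}(a'|s')$ to peel off one reward term per step, differentiates at each stage, and concludes with ``continuing this sequence.'' That recursion leaves two analytic points implicit: why $\nabla_\phi$ may be pushed through each integral, and why the remainder term $\gamma^{N}\nabla_{\phi}\int Q^{\lambda}_{\phi}\,d(\cdot)$ vanishes as $N\to\infty$. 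Your argument unfolds the entire discounted sum at once via the tower property, observes that the trajectory law depends on $\lambda$ but not $\phi$ (so, unlike the setting of \citet{sutton1999policy}, no score-function term arises), and then justifies the interchange of gradient with expectation and infinite sum by domination: the uniform bound $\|\nabla_\phi r_\phi\|\le C_1$ from Assumption \ref{assump_8} and $\gamma<1$ give the $\phi$-independent dominating bound $C_1/(1-\gamma)$, with the Weierstrass test handling uniform convergence of the gradient series. In effect you supply exactly the rigor the paper's ``continuing this sequence'' step omits, while the paper's version makes more explicit how the marginal state--action distributions at each horizon arise from the kernel and policy. One small citation correction: the smoothness of $r_\phi$ in $\phi$ that you attribute to Assumption \ref{assump_1}.\ref{as_2} is actually supplied by Assumption \ref{assump_8} (boundedness and Lipschitzness of $\nabla r_\phi$); Assumption \ref{assump_1} concerns $J$ and $h_\sigma$, not $r_\phi$ directly. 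Your closing remark on the KL term is also right: $\beta h_{\pi_\lambda,\pi_{ref}}$ is $\phi$-independent and contributes nothing to $\nabla_\phi$.
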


\begin{proof}
We start by writing the gradient of $J(\phi,\lambda)$ with respect to $\phi$ as follows 
\begin{eqnarray}
  &&  \nabla_{\phi}J(\phi,\lambda)\nonumber\\
  &=&   \nabla_{\phi}\int_{s_{1},a_{1}} Q^{\lambda}_{\phi}(s_{1},a_{1})\pi_{\lambda}(a_1|s_1)d{(s_{1})}  \label{lem5_0}\\
                                     &=&   \int_{s_{1},a_{1}} \nabla_{\phi}r_{\phi}(s_{1},a_{1})\pi_{\lambda}(a_1|s_1)d{(s_{1})} \nonumber\\
                                     &+& {\gamma}{\cdot} \nabla_{\phi}\int_{s_{1},a_{1}}\int_{s_{2},a_{2}} Q^{\lambda}_{\phi}(s_{2},a_{2})d{(s_{2}|a_{1})}\pi_{\lambda}(a_2|s_2)d{(s_{1})}\pi_{\lambda}(a_1|s_1),\label{lem5_1}\\
                                     &=&   \int_{s_{1},a_{1}}\nabla_{\phi}r_{\phi}(s_{1},a_{1})\pi_{\lambda}(a_1|s_1)d{(s_{1})} \nonumber\\
                                     &+& {\gamma}{\cdot}\int_{s_{2},a_{2}}\int_{s_{1},a_{1}} \nabla_{\phi}r_{\phi}(s_{2},a_{2})d{(s_{2}|a_{1})}\pi_{\lambda}(a_2|s_2)d{(s_{1})}\pi_{\lambda}(a_1|s_1)\nonumber\\
                                    &+&  {\gamma}^{2}{\cdot} \nabla_{\phi}\int_{s_{1},a_{1}} \int_{s_{2},a_{2}} \int_{s_{3},a_{3}} Q^{\lambda}_{\phi}(s_{3},a_{3}) 
 d(a_{3}|s_{3})d(s_{3}|a_{2})d{(s_{2}|a_{1})}\pi_{\lambda}(a_2|s_2)d{(s_{1})}\pi_{\lambda}(a_1|s_1),\nonumber\\
                                    && \label{lem5_2}\\
                                      &=&   \int_{s_{1},a_{1}}\nabla_{\phi}r_{\phi}(s_{1},a_{1})d(s_{1},a_{1}) \nonumber\\
                                      &+& {\gamma}{\cdot}
                                     \int_{s_{2},a_{2}}\nabla_{\phi}r_{\phi}(s_{2},a_{2})d(s_{2},a_{3}) + {\gamma}^{2}{\cdot} \nabla_{\phi}\int_{s_{3},a_{3}} Q^{\lambda}_{\phi}(s_{3},a_{3})d(s_{3},a_{3}).\nonumber\\
                                     && \label{lem5_3}
\end{eqnarray}
We get Equation \eqref{lem5_1} from  Equation \eqref{lem5_0} by noting that $Q^{\lambda}_{\phi}(s,a) = r_{\phi} + \int_{s^{'},a^{'}}Q^{\lambda}_{\phi}(s^{'},a^{'})d(s^{'}|a)\pi_{\lambda}(a^{'}|s^{'})$. We repeat the same process on the second term on the right hand side of Equation \eqref{lem5_1}  to obtain Equation \eqref{lem5_2}.
Continuing this sequence, we get
\begin{eqnarray}
    \nabla_{\phi}J^{\lambda}_{\phi}  &=&  \sum_{i=1}^{\infty} {\gamma}^{i-1} \mathbb{E}{\nabla}_{\phi}r_{\phi}(s_{i},a_{i})
\end{eqnarray}
Here, $s_{i},a_{i}$ belong to the distribution of the $i^{th}$ state action pair induced by following the policy $\lambda$.
 \end{proof}

\section{Proof of Theorem \ref{thm_bi}} \label{thm_bi_proof}
\begin{proof}
As is done for the proof of Theorem \ref{thm} we obtain the following from the smoothness assumption on $\Phi$.
\begin{eqnarray}
  \frac{1}{T}\sum_{i=1}^{T}||{\nabla}\Phi(\phi_{t})||^{2}  &\le&   \frac{1}{T}\sum^{t=T}_{k=0}\underbrace{{\mathbb{E}}||{\nabla_{\phi}}{\Phi}(\phi_{t}) - {\nabla_{\phi}}\hat{\Phi}_{\sigma}( \phi_{t})||^{2}}_{A_{t}} + \tilde{\mathcal{O}}\left(\frac{1}{T}\right). \label{proof_bi_1}
\end{eqnarray}

We now bound $A_{t}$ as follows
\begin{eqnarray}
    {\mathbb{E}}||{\nabla_{\phi}}{\Phi}(\phi_{t}) - {\nabla_{\phi}}\hat{\Phi}_{\sigma}(\phi_{t}))||^{2} &=& {\mathbb{E}}||{\nabla_{\phi}}{\Phi}(\phi_{t}) - {\nabla_{\phi}}{\Phi}_{\sigma}(\phi_{t}) + {\nabla_{\phi}}{\Phi}_{\sigma}(\phi_{t}) - {\nabla_{\phi}}\hat{\Phi}_{\sigma}(\phi_{t}))||^{2}, \nonumber\\
    &&  \label{proof_bi_2}\\
    &\le& {\mathbb{E}}||{\nabla_{\phi}}{\Phi}(\phi_{t}) - {\nabla_{\phi}}{\Phi}_{\sigma}(\phi_{t}))||^{2} \nonumber\\
    &+& {\mathbb{E}}||{\nabla_{\phi}}{\Phi}_{\sigma}(\phi_{t}) - {\nabla_{\phi}}\hat{\Phi}_{\sigma}(\phi_{t}))||^{2},  \label{main_proof_34}\\
    &\le&  \mathcal{O}(\sigma) +  \underbrace{{\mathbb{E}}||{\nabla_{\phi}}{\Phi}_{\sigma}(\phi_{t}) - {\nabla_{\phi}}\hat{\Phi}_{\sigma}(\phi_{t}))||^{2}}_{A_{t}^{'}}, \label{proof_bi_3}
\end{eqnarray}
The first term on the right hand side denotes the gap between the gradient of the objective function and the gradient of the pseudo-objective $\Phi_{\sigma}$. We get the upper bound on this term form \cite{chen2024findingsmallhypergradientsbilevel}. The term $A^{'}_{t}$ denotes the error incurred in estimating the true gradient of the pseudo-objective. 
\begin{eqnarray}
    \underbrace{{\mathbb{E}}||{\nabla_{\phi}}{\Phi}_{\sigma}(\phi_{t}) - {\nabla_{\phi}}\hat{\Phi}_{\sigma}(\phi_{t})||^{2}}_{A_{t}^{'}}
    &\le&  {\mathbb{E}}\Bigg|\Bigg| {\nabla_{\phi}}G(\phi,\lambda_{\sigma}^{*}(\phi))  +  \frac{{\nabla_{\phi}}J({\lambda^{*}(\phi),{\phi}}) -  {\nabla_{\phi}}J(\phi,{\lambda_{\sigma}^{*}}(\phi))}{\sigma} \nonumber\\
    &-& {\nabla_{\phi}}{G}(\phi_{t},{\lambda'}^{K}_{t},B)  +  \frac{{\nabla_{\phi}}{J}(\phi_{t},{\lambda^{K}_{t}(\phi_{t})},B) -  {\nabla_{\phi}}{J}({\phi_{t},{\lambda^{'}}^{K}_{t}(\phi)},B)}{\sigma}\Bigg|\Bigg|^{2},\nonumber\\ 
    \label{proof_bi_4}\\
    &\le& {\mathbb{E}}|| {\nabla_{\phi}}G(\phi_{t},\lambda_{\sigma}^{*}(\phi_{t})) - {\nabla_{\phi}}{G}(\phi_{t},{\lambda'}^{K}_{t},B)||^{2}  \nonumber\\ 
    &+& \frac{1}{\sigma}{\mathbb{E}}||{\nabla_{\phi}}J(\phi_{t},{\lambda^{*}(\phi_{t})}) - {\nabla_{\phi}}{J}(\phi_{t},{\lambda^{K}_{t}},B)||^{2} \nonumber \\
    &+& \frac{1}{\sigma}{\mathbb{E}}||{\nabla_{\phi}}J({\phi_t},{\lambda_{\sigma}^{*}}({\phi_t}))-  {\nabla_{\phi}}{J}({{\phi_t},{\lambda^{'}}^{k}_{t}},B)||^{2}. \label{proof_bi_5}
\end{eqnarray}
As stated in the main text, the error in estimation of the gradient of the pseudo objective is split into the error in estimating ${\nabla_{\phi}}G({\phi_t},\lambda_{\sigma}^{*}({\phi_t}))$, ${\nabla_{\phi}}J({\phi_t},\lambda^{*}({\phi_t}))$ and ${\nabla_{\phi}}J({\phi_t},{\lambda_{\sigma}^{*}}({\phi_t}))$ whose respective sample based estimates are denoted by ${\nabla_{\phi}}\hat{G}({\phi_t},{\lambda'}^{K}_{t})$, ${\nabla_{\phi}}\hat{J}({\phi_t}\lambda^{K}_{t})$ and ${\nabla_{\phi}}\hat{J}({{\phi_t},{\lambda^{'}}^{K}_{t}})$ respectively.
From Lemmas \ref{lemma_6}, \ref{lemma_7}, and \ref{lemma_8}  we have 
\begin{eqnarray}
    \underbrace{{\mathbb{E}}||{\nabla_{\phi}}{\Phi}_{\sigma}({\phi_t}) - {\nabla_{\phi}}\hat{\Phi}_{\sigma}({\phi_t}))||^{2}}_{A_{t}^{'}}
    &\le&  \tilde{\mathcal{O}} \left(\frac{1}{{\sigma^{2}}{B}}\right) + \tilde{\mathcal{O}}\left(\frac{\exp^{-K}}{{\sigma^{2}}}\right) \label{proof_bi_6}
\end{eqnarray}
Plugging Equation \eqref{proof_bi_6} into Equation \eqref{proof_bi_5}, then plugging the result into Equation \eqref{proof_bi_1} we get
\begin{eqnarray}
  \frac{1}{T}\sum_{i=1}^{T}||{\nabla}\Phi(\phi_{t})||^{2} &\le&   \tilde{\mathcal{O}}\left(\frac{1}{T}\right) +   \tilde{\mathcal{O}}\left(\frac{\exp^{-K}}{{\sigma}^{2}}\right)  + \tilde{\mathcal{O}}\left(\frac{1}{{\sigma}^{2}{B}}\right)  + \tilde{\mathcal{O}}(\sigma^{2}) 
\end{eqnarray}
Here $T$ is the number of iterations of the outer loop of Algorithm \ref{algo_1}, $K$ is the number of iterations of the inner loop of Algorithm \ref{algo_1}. $B$ is the number of samples required for the all the gradient evaluations.
\end{proof}
\section{Supplementary Lemmas For Theorem \ref{thm_bi}}  \label{supp_lemmas}
\begin{lemma}\label{lemma_6}
For a fixed ${\phi_t} \in \Theta$ and iteration $t$ of Algorithm \ref{algo_1} under Assumptions \ref{assump_1}-\ref{assump_5} and Assumptions \ref{assump_7}  we have 
\begin{eqnarray}
    {\mathbb{E}}|| {\nabla}G({\phi_t},\lambda^{*}({\phi_t})) - {\nabla_{\phi}}{G}({\phi_t},\lambda^{K}_{t},B)||^{2} &\le& \tilde{\mathcal{O}}\left(\frac{1}{{B}}\right) + \tilde{\mathcal{O}}\left({\exp}^{-K}\right) 
\end{eqnarray}
\end{lemma}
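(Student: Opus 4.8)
The plan is to reuse the two–step decomposition of Lemma~\ref{lemma_2}, while exploiting the fact that Assumption~\ref{assump_7} furnishes \emph{unbiased} gradient estimates with bounded variance for both levels. This is precisely what collapses the residual error term $\beta$ of Lemma~\ref{thm_supp} from the BRL expression $\tilde{\mathcal{O}}(\gamma^{2H}/B)+\tilde{\mathcal{O}}(1/n)+\tilde{\mathcal{O}}(\epsilon_{approx})$ down to a pure $\tilde{\mathcal{O}}(1/B)$ variance term, which is the source of the simplified bound stated here.

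First I would insert the true gradient evaluated at the inner-loop iterate and apply the triangle inequality to split the target into a parameter-estimation error and a sampling error,
\[
A'_K := \mathbb{E}\| \nabla_\phi G(\phi_t,\lambda^{*}(\phi_t)) - \nabla_\phi G(\phi_t,\lambda^{K}_{t})\|^2,
\qquad
B'_K := \mathbb{E}\| \nabla_\phi G(\phi_t,\lambda^{K}_{t}) - \nabla_\phi G(\phi_t,\lambda^{K}_{t},B)\|^2 .
\]
For $A'_K$, I would apply $L_G$-smoothness of $G$ (Assumption~\ref{assump_1}) to get $A'_K \le L_G\,\mathbb{E}\|\lambda^{*}(\phi_t)-\lambda^{K}_{t}\|^2$, then use the quadratic-growth consequence of the PL property (Assumption~\ref{assump_1}) to pass to the function-value gap of the inner potential, and finally invoke Lemma~\ref{thm_supp} — legitimate because the inner loop of Algorithm~\ref{algo_1} is exactly (normalized) gradient descent on that potential — to obtain $\tilde{\mathcal{O}}(e^{-K}) + \mathcal{O}(\beta)$.

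The crucial difference from Lemma~\ref{lemma_2} is how $\beta$ is controlled. Under Assumption~\ref{assump_7} the per-sample gradient is unbiased, so the bias $b_t$ in the recursion of Lemma~\ref{thm_supp} vanishes, the mean-squared error equals the variance, and after averaging $B$ i.i.d.\ draws this is $\mathcal{O}(\sigma_J^2/B)=\tilde{\mathcal{O}}(1/B)$. For $B'_K$ I would repeat the Jensen / $\ell_1$-versus-$\ell_2$ variance argument used for the term $C$ in Lemma~\ref{lemma_2}, now directly invoking the bounded-variance part of Assumption~\ref{assump_7} for $\nabla G$, which gives $B'_K \le \mathcal{O}(\sigma_G^2/B)=\tilde{\mathcal{O}}(1/B)$. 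Adding the two bounds yields $\tilde{\mathcal{O}}(e^{-K})+\tilde{\mathcal{O}}(1/B)$, as claimed.

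I expect the only genuinely delicate point to be the bookkeeping that confirms $\beta=\tilde{\mathcal{O}}(1/B)$ rather than inheriting the bias contributions ($\gamma^{2H}$, $1/n$, $\epsilon_{approx}$) of the BRL analysis; once unbiasedness is used to eliminate $b_t$, the remaining steps are the same routine smoothness-plus-PL chain as in Lemma~\ref{lemma_2}, and the absence of truncation/approximation bias is exactly what distinguishes the standard bilevel setting from the reinforcement-learning one.
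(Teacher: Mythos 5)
Your proposal is correct and follows essentially the same route as the paper's own proof: the identical split into the parameter-estimation error $A'_K$ and the sampling error $B'_K$, the same smoothness $\to$ quadratic-growth $\to$ Lemma~\ref{thm_supp} chain for $A'_K$, and the same Jensen/bounded-variance argument (term $C$ of Lemma~\ref{lemma_2}) for $B'_K$ and for showing $\beta(B)=\tilde{\mathcal{O}}(1/B)$. The only cosmetic difference is that the paper makes the last step explicit by decomposing the error in $\nabla_{\lambda}h_{\sigma}=\nabla_{\lambda}J-\sigma\nabla_{\lambda}G$ into two variance terms (so both $\sigma_J^2$ and $\sigma_G^2$ enter, not just $\sigma_J^2$), which your argument absorbs implicitly and which does not change the $\tilde{\mathcal{O}}(1/B)$ rate.
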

\begin{proof}
\begin{eqnarray}
    {\mathbb{E}}|| {\nabla_{\phi}}G({\phi_t},\lambda_{\sigma}^{*}({\phi_t})) - {\nabla_{\phi}}{G}({\phi_t},{\lambda'}^{K}_{t},B)||^{2} &\le&  {\mathbb{E}}|| {\nabla_{\phi}}G({\phi_t},\lambda_{\sigma}^{*}(\phi)) - {\nabla_{\phi}}G({\phi_t},{\lambda'}^{K}_{t}) \nonumber\\
    &+& {\nabla_{\phi}}G({\phi_t},{\lambda'}^{K}_{t}) -   {\nabla_{\phi}}{G}({\phi_t},{\lambda'}^{K}_{t},B)||^{2}, \label{lem6_1}\\
    &\le&  2\underbrace{{\mathbb{E}}|| {\nabla_{\phi}}G({\phi_t},\lambda_{\sigma}^{*}({\phi_t})) - {\nabla_{\phi}}G({\phi_t},{\lambda'}^{K}_{t})||^{2}}_{A^{'}_{K}} \nonumber\\
    &+&  2\underbrace{{\mathbb{E}}||{\nabla_{\phi}}G({\phi_t},{\lambda'}^{K}_{t}) -   {\nabla_{\phi}}{G}({\phi_t},{\lambda'}^{K}_{t},B)||^{2}}_{B^{'}_{K}}.\label{lem6_2}
\end{eqnarray}

We first bound  $A^{'}_{K}$. 
\begin{eqnarray}
 {\mathbb{E}}||{\nabla_{\phi}}G({\phi_t},\lambda_{\sigma}^{*}(\phi)) - {\nabla_{\phi}}G({\phi_t},{\lambda'}^{K}_{t})||^{2}  &\le&  L{\mathbb{E}}||\lambda_{\sigma}^{*}({\phi_t})-{\lambda'}^{K}_{t})||^{2} \label{lem6_3}\\
                                                                           &\le&  L_{1}{\cdot}{\mu}{\mathbb{E}}||h_{\sigma}({\phi_t},{\lambda_{\sigma}^{*}}({\phi_t}))- h_{\sigma}({\phi_t},{\lambda^{'}}^{k}_{t})||. \label{lem6_4}
\end{eqnarray}
Here $L_{1}$ is the smoothness constant of $G(\lambda,\phi)$. We get Equation \eqref{lem6_4} from  Equation \eqref{lem6_3} by Assumption \ref{assump_1}. Now, consider the function $J(\phi,\lambda)$. We know from Lemma \ref{thm_supp} that it satisfies the weak gradient condition, therefore, applying the same logic for $J(\phi,\lambda)$ that we did for $\Phi(\sigma)$. Using Assumption \ref{assump_1}, and Lemma \ref{thm_supp} we obtain 

\begin{eqnarray}
  {\mathbb{E}}||h_{\sigma}({\phi_t},{\lambda_{\sigma}^{*}}(\phi_{t}))- h_{\sigma}(\phi_{t},{\lambda^{'}}^{K}_{t})|| \le   \beta(B) + \tilde{\mathcal{O}}\left({\exp}^{-K}\right),  \label{lem6_6}
\end{eqnarray}

where $\beta(n,B,H)$ satisfies $\mathbb{E}||\nabla_{\lambda}{h}_{\sigma}({\phi_t},\lambda)-{\nabla}{h}_{\sigma}(\phi_{t},\lambda))||^{2} \le \delta(B)$. Note we changed notation from $\beta(n,B,H)$ to $\beta(B)$ since $B$ samples are used to evaluate the gradients. Now in this case, we have an unbiased estimate of $\nabla{h}_{\sigma}({\phi_t},\lambda^{*}({\phi_t}))$. Therefore, from assumption \ref{assump_7} we have that.

Now, the term $\mathbb{E}||{\nabla}h_{\sigma}({\phi_t},{\lambda}) - {\nabla}\hat{h}_{\sigma}(\phi,{\lambda})||^{2}$, it can be decomposed as follows
\begin{eqnarray}
&&\mathbb{E}||\nabla_{\lambda}h_{\sigma}(\phi_{t},{\lambda}) - \nabla_{\lambda}\hat{h}_{\sigma}(\phi_{t},{\lambda})||^{2} \nonumber\\
&=& \mathbb{E}||{\nabla}_{\lambda}J(\phi_{t},{\lambda}) + {\sigma}{\nabla}_{\lambda}G(\phi_{t},{\lambda}) - {\nabla}_{\lambda}{J}(\phi_{t},\lambda,B) - {\sigma}{\nabla}_{\lambda}G(\phi_{t},{\lambda},B)||^{2}, \label{lem6_7}\\
 &\le&  \underbrace{\mathbb{E}||{\nabla}_{\lambda}J(\phi_{t},{\lambda})-{\nabla}_{\lambda}{J}(\phi_{t},{\lambda},B)||^{2}}_{A^{'''}} + {\sigma}\underbrace{\mathbb{E}||{\nabla}_{\lambda}G(\phi_{t},{\lambda})-{\nabla}_{\lambda}{G}(\phi_{t},{\lambda},B)||^{2}}_{B^{'''}}. \label{lem6_8}
\end{eqnarray}

Note that both $A^{'''}$ and $B^{'''}$ can be bounded same as $C$ in Lemma \ref{lemma_2}. thus we have 

\begin{eqnarray}
A^{'''} \le \tilde{\mathcal{O}}\left(\frac{1}{{B}}\right) \label{lem6_9}\\
B^{'''} \le \tilde{\mathcal{O}}\left(\frac{1}{{B}}\right) \label{lem6_10}
\end{eqnarray}

Thus we have $\beta(B) = \tilde{\mathcal{O}}\left(\frac{1}{B}\right)$. Which gives us
\begin{eqnarray}
 {\mathbb{E}}||h_{\sigma}(\phi_{t},{\lambda_{\sigma}^{*}})- h_{\sigma}(\phi_{t},{\lambda^{'}}^{K}_{t})||  &\le&   \tilde{\mathcal{O}}\left({\exp}^{-K}\right)  + \tilde{\mathcal{O}}\left(\frac{1}{B}\right) \label{lem6_11}
\end{eqnarray}

Similarly  $B_{k}^{'}$ here is bounded the same  way as $C$ in Lemma \ref{lemma_2} to get 

\begin{eqnarray}
    {\mathbb{E}}||{\nabla_{\phi}}G(\phi_{t},\lambda^{K}_{t}) -   {\nabla_{\phi}}{G}(\phi_{t},\lambda^{K}_{t},B)||^{2} \le \mathcal{O}\left(\frac{1}{{B}}\right)  \label{lem6_12} 
\end{eqnarray}

Plugging Equation \eqref{lem6_11} and \eqref{lem6_12} into Equation \eqref{lem6_2} gives us the required result.
\end{proof}

\begin{lemma}\label{lemma_7} 
For a fixed $\phi_{t} \in \Theta$ and iteration $t$ of Algorithm \ref{algo_1} under Assumptions \ref{assump_1}-\ref{assump_5} and Assumptions \ref{assump_7}  we have 
\begin{eqnarray}
    {\mathbb{E}}||{\nabla_{\phi}}J(\phi_{t},{\lambda^{*}}(\phi))-  {\nabla_{\phi}}{J}({\phi_{t},{\lambda}^{K}_{t}(\phi)},B)||^{2} &\le&   \tilde{\mathcal{O}}\left(\frac{1}{{B}}\right)  +  
 \tilde{\mathcal{O}}\left({\exp}^{-K}\right) 
\end{eqnarray}
\end{lemma}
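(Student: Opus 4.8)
The plan is to reuse the decomposition of Lemma \ref{lemma_3} almost verbatim, but to exploit the unbiased, bounded-variance gradient access granted by Assumption \ref{assump_7}. This removes every bias-induced term that appears in the reinforcement-learning version (the $\epsilon_{approx}$ term from $Q$-estimation, the $\gamma^{2H}$ truncation term, and the $1/n$ term), so that only the batch-variance term $\tilde{\mathcal{O}}(1/B)$ and the inner-loop contraction term $\tilde{\mathcal{O}}(e^{-K})$ survive.

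First I would insert the true gradient $\nabla_{\phi}J(\phi_t,\lambda^K_t)$ at the current inner-loop iterate and apply $\|a+b\|^2 \le 2\|a\|^2 + 2\|b\|^2$:
\begin{align}
\mathbb{E}\|\nabla_{\phi}J(\phi_t,\lambda^*(\phi_t)) - \nabla_{\phi}J(\phi_t,\lambda^K_t,B)\|^2
&\le 2\,\underbrace{\mathbb{E}\|\nabla_{\phi}J(\phi_t,\lambda^*(\phi_t)) - \nabla_{\phi}J(\phi_t,\lambda^K_t)\|^2}_{A_{K}^{''}} \nonumber \\
&\quad + 2\,\underbrace{\mathbb{E}\|\nabla_{\phi}J(\phi_t,\lambda^K_t) - \nabla_{\phi}J(\phi_t,\lambda^K_t,B)\|^2}_{B_{K}^{''}}.
\end{align}
The term $A_{K}^{''}$ captures the error from estimating $\lambda^*(\phi_t)$ by the inner-loop output $\lambda^K_t$, while $B_{K}^{''}$ is the batch-estimation error of the gradient at a fixed $\lambda^K_t$.

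To bound $A_{K}^{''}$ I would chain three facts. Smoothness of $J$ (Assumption \ref{assump_1}.\ref{as_2}) gives $A_{K}^{''} \le L_J\,\mathbb{E}\|\lambda^*(\phi_t) - \lambda^K_t\|^2$; the quadratic-growth consequence of the PL condition (Assumption \ref{assump_1}.\ref{as_1} taken at $\sigma=0$, where $h_0 = J$) then converts this into a function-value gap controlled by $\mu\,\mathbb{E}|J(\phi_t,\lambda^*(\phi_t)) - J(\phi_t,\lambda^K_t)|$. Since the inner loop of Algorithm \ref{algo_1} runs SGD on $J(\phi_t,\cdot)$, and Assumption \ref{assump_7} supplies an unbiased estimator with variance $\sigma_J^2$ averaged over a batch of size $B$, the per-step estimation error is $\beta(B)=\sigma_J^2/B$. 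Applying Lemma \ref{thm_supp} with this $\beta$ yields a function-value gap of $\tilde{\mathcal{O}}(e^{-K}) + \tilde{\mathcal{O}}(1/B)$, and hence $A_{K}^{''} \le \tilde{\mathcal{O}}(e^{-K}) + \tilde{\mathcal{O}}(1/B)$.

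For $B_{K}^{''}$ I would note that $\nabla_{\phi}J(\phi_t,\lambda^K_t,B)$ is the mean of $B$ i.i.d.\ unbiased per-sample gradients whose variance is bounded by $\sigma_J^2$ (Assumption \ref{assump_7}); the variance of the mean is therefore $\sigma_J^2/B$, giving $B_{K}^{''} \le \tilde{\mathcal{O}}(1/B)$ via the same coordinate-wise Jensen argument used for term $C$ in Lemma \ref{lemma_2} (identical to $B_{k}^{'}$ of Lemma \ref{lemma_6}). Summing the two bounds delivers the claim. The main obstacle is correctly instantiating $\beta(B)=\tilde{\mathcal{O}}(1/B)$ in Lemma \ref{thm_supp}: one must verify that the inner loop is genuinely SGD on $J(\phi_t,\cdot)$ so that the PL/quadratic-growth chain applies, and confirm that the batched unbiased estimator's variance scales as $1/B$ without reintroducing any of the bias terms present in the RL setting.
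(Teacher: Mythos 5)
Your proposal matches the paper's own proof essentially step for step: the same insertion of $\nabla_{\phi}J(\phi_t,\lambda^{K}_{t})$ to split the error into an inner-loop term $A_K^{''}$ and a batch-variance term $B_K^{''}$, the same smoothness-then-quadratic-growth chain reducing $A_K^{''}$ to a function-value gap handled by Lemma \ref{thm_supp} with $\beta(B)=\tilde{\mathcal{O}}(1/B)$ from the unbiased bounded-variance gradients of Assumption \ref{assump_7}, and the same coordinate-wise Jensen variance argument (as for term $C$ in Lemma \ref{lemma_2}) giving $B_K^{''}\le\tilde{\mathcal{O}}(1/B)$. Your explicit factor of $2$ in the decomposition and your remark that the PL property of $J$ follows from Assumption \ref{assump_1} at $\sigma=0$ are minor clarifications of steps the paper leaves implicit, not a different route.
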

\begin{proof}
\begin{eqnarray}
        && {\mathbb{E}}||{\nabla_{\phi}}J(\phi_{t},{\lambda^{*}}(\phi_{t}))-  {\nabla_{\phi}}{J}({\phi_{t},{\lambda}^{K}_{t}},B)||^{2} \nonumber\\
        &\le&  {\mathbb{E}}||{\nabla_{\phi}}J(\phi_{t},{\lambda^{*}}(\phi_{t}))- {\nabla_{\phi}}J(\phi_{t},{\lambda}^{K}_{t}) + {\nabla_{\phi}}J(\phi,{\lambda}^{K}_{t}) -{J}({{\phi},{\lambda}^{K}_{t}(\phi)},B)||^{2}, \label{lem7_a}\\
        &\le&  {\mathbb{E}}||{\nabla_{\phi}}J(\phi_{t},{\lambda^{*}}(\phi_{t}))- {\nabla_{\phi}}J(\phi_{t},{\lambda}^{K}_{t})||^{2} + ||{\nabla_{\phi}}J(\phi_{t},{\lambda}^{K}_{t}) - {\nabla_{\phi}}{J}({{\phi},{\lambda}^{K}_{t}},B)||^{2}, \label{lem7_b}\\
        &\le& L^{'}{\mathbb{E}}||({\lambda^{*}(\phi_{t})})- ({\lambda}^{K}_{t})||^{2} + ||{\nabla_{\phi}}J(\phi_{t},{\lambda}^{K}_{t})-  {\nabla_{\phi}}{J}({{\phi_t},{\lambda}^{K}_{t}(\phi)},B)||^{2}, \label{lem7_0}\\
        &\le&  \underbrace{L^{'}{\cdot}{\mu}{\mathbb{E}}||J({\phi_t},{\lambda^{*}}({\phi_t}))- J({\phi_t},{\lambda}^{K}_{t})||}_{A^{''}} + \underbrace{{\mathbb{E}}||{\nabla_{\phi}}J({\phi_t},{\lambda}^{K}_{t})-  {\nabla_{\phi}}{J}({{\phi_t},{\lambda}^{K}_{t}},B)||^{2}}_{B^{''}}. \label{lem7_1}
\end{eqnarray}

We get Equation \eqref{lem7_0} form Equation \eqref{lem7_b} by using Assumption \ref{assump_1}. The first term $A^{''}$ is upper the same way starting from Equation \eqref{lem6_6} as in Lemma \ref{lemma_6} to give

\begin{eqnarray}
  {\mathbb{E}}||J({\phi_t},\lambda^{*}({\phi_t}))-J({\phi_t},\lambda^{K}_{t})||  &\le& 
  \tilde{\mathcal{O}}\left(\frac{1}{B}\right) + \tilde{\mathcal{O}}\left({\exp}^{-K}\right)   \label{lem7_2} 
\end{eqnarray}

$B^{''}$ is bounded in the same manner as $B_{k}^{'}$  in Lemma \ref{lemma_2} to give
\begin{eqnarray}
  {\mathbb{E}}||{\nabla_{\phi}}J({\phi_t},{\lambda}^{K}_{t})-  {\nabla_{\phi}}{J}({{\phi},{\lambda}^{K}_{t}(\phi)},B)|| &\le& 
  \tilde{\mathcal{O}}\left(\frac{1}{B}\right)  \label{lem7_3} 
\end{eqnarray}

Plugging Equation \eqref{lem7_2} and \eqref{lem7_3} into Equation \eqref{lem7_1} given us the required result.

\end{proof}
\begin{lemma} \label{lemma_8}
For a fixed ${\phi_t} \in \Theta$ and iteration $t$ of Algorithm \ref{algo_1} under Assumptions \ref{assump_1}-\ref{assump_5} and Assumptions \ref{assump_7}  we have 
\begin{eqnarray}
    {\mathbb{E}}||{\nabla_{\phi}}J({\phi_t},{\lambda_{\sigma}^{*}}({\phi_t}))-  {\nabla_{\phi}}{J}({{\phi_t},{\lambda^{'}}^{K}_{t}},B)||^{2} &\le&  
 \tilde{\mathcal{O}}\left(\frac{1}{{B}}\right) +
 \tilde{\mathcal{O}}\left({\exp}^{-K}\right)  
\end{eqnarray}
\end{lemma}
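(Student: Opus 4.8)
The plan is to mirror the decomposition used for Lemma~\ref{lemma_4}, but with the biased RL gradient estimates replaced by the unbiased estimates guaranteed by Assumption~\ref{assump_7}; indeed Lemma~\ref{lemma_8} is the standard-bilevel analog of Lemma~\ref{lemma_4} in exactly the way that Lemmas~\ref{lemma_6} and~\ref{lemma_7} are the analogs of Lemmas~\ref{lemma_2} and~\ref{lemma_3}. First I would insert the true gradient at the inner-loop output ${\lambda^{'}}^{K}_{t}$ and apply the triangle inequality (with Young's inequality), splitting the target into a term $A_{k}^{'''}$ capturing the error from approximating $\lambda_{\sigma}^{*}(\phi_t)$ by ${\lambda^{'}}^{K}_{t}$, and a term $B_{k}^{'''}$ capturing the sampling error of the empirical $\phi$-gradient at the fixed point ${\lambda^{'}}^{K}_{t}$:
\begin{align*}
\mathbb{E}\big\|\nabla_{\phi} J(\phi_t, \lambda_{\sigma}^{*}(\phi_t)) - \nabla_{\phi} J(\phi_t, {\lambda^{'}}^{K}_{t}, B)\big\|^2 \;\le\; 2 A_{k}^{'''} + 2 B_{k}^{'''}.
\end{align*}

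For $A_{k}^{'''}$ I would use the smoothness of $J$ (Assumption~\ref{assump_1}) to bound it by $L_{J}\,\mathbb{E}\|\lambda_{\sigma}^{*}(\phi_t) - {\lambda^{'}}^{K}_{t}\|^2$, then invoke the quadratic-growth consequence of the PL property of $h_{\sigma}$ (also Assumption~\ref{assump_1}) to pass to the function-value gap $\mu L_{J}\,\mathbb{E}|h_{\sigma}(\phi_t, \lambda_{\sigma}^{*}(\phi_t)) - h_{\sigma}(\phi_t, {\lambda^{'}}^{K}_{t})|$. The crucial observation is that ${\lambda^{'}}^{K}_{t}$ is produced by $K$ normalized-SGD steps (lines 5 and 7 of Algorithm~\ref{algo_1}) on $h_{\sigma}$, so Lemma~\ref{thm_supp} applies directly and yields $\tilde{\mathcal{O}}(e^{-K}) + \mathcal{O}(\beta(B))$, where $\beta(B)$ is any uniform bound on $\mathbb{E}\|\nabla_{\lambda} h_{\sigma} - \nabla_{\lambda}\hat{h}_{\sigma}\|^2$. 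Because $h_{\sigma} = J - \sigma G$ and Assumption~\ref{assump_7} supplies unbiased gradient estimates of both $J$ and $G$ with bounded variance, decomposing this variance exactly as in the $A^{'''}/B^{'''}$ split of Lemma~\ref{lemma_6} gives $\beta(B) = \tilde{\mathcal{O}}(1/B)$ via the standard variance-of-the-mean argument (the same computation as term $C$ in Lemma~\ref{lemma_2}). This reproduces the bound in Equation~\eqref{lem6_11}.

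For $B_{k}^{'''}$, I would note that it is identical to the term $B^{''}$ of Lemma~\ref{lemma_7}: it is the gap between the true $\phi$-gradient of $J$ at the fixed iterate ${\lambda^{'}}^{K}_{t}$ and its $B$-sample empirical average. Under the unbiasedness and bounded-variance guarantee of Assumption~\ref{assump_7}, this is again $\tilde{\mathcal{O}}(1/B)$, exactly as in Equation~\eqref{lem7_3}. Substituting the bound on the function-value gap back into $A_{k}^{'''}$, combining with the bound on $B_{k}^{'''}$, and absorbing the sub-dominant contributions into the $\tilde{\mathcal{O}}(\cdot)$ notation then produces the claimed rate $\tilde{\mathcal{O}}(1/B) + \tilde{\mathcal{O}}(e^{-K})$.

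The main obstacle I anticipate is not any single estimate but the subtlety that ${\lambda^{'}}^{K}_{t}$ is itself a random variable generated by the inner SGD loop, so both the sampling error in $B_{k}^{'''}$ and the variance $\beta(B)$ feeding into Lemma~\ref{thm_supp} must be controlled \emph{uniformly} over $\lambda \in \Lambda$ rather than at a deterministic point. This is precisely where the uniform bounded-variance part of Assumption~\ref{assump_7} is essential, and where care is needed so that the expectation over the inner-loop randomness and the expectation over the fresh minibatch used at the outer step factorize correctly (e.g.\ by conditioning on ${\lambda^{'}}^{K}_{t}$ before taking the outer-batch expectation). Once this uniformity is secured, the remainder of the argument is a routine transcription of the estimates already established in Lemmas~\ref{lemma_6} and~\ref{lemma_7}.
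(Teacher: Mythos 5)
Your proposal is correct and follows essentially the same route as the paper's own proof: the same triangle-inequality split at ${\lambda^{'}}^{K}_{t}$, smoothness of $J$ plus the quadratic-growth consequence of the PL condition on $h_{\sigma}$, an appeal to Lemma~\ref{thm_supp} for the inner-loop SGD on $h_{\sigma}$, the $J$/$G$ variance decomposition of $\nabla_{\lambda}\hat{h}_{\sigma}$ bounded by $\tilde{\mathcal{O}}(1/B)$ under Assumption~\ref{assump_7}, and the reuse of the Lemma~\ref{lemma_7}-type bound for the empirical $\phi$-gradient term. If anything, your version is slightly more careful than the paper's (the explicit factor of $2$ from Young's inequality and the remark about conditioning on the random iterate ${\lambda^{'}}^{K}_{t}$, which the paper handles only implicitly through the uniform-in-$\lambda$ bound in Lemma~\ref{thm_supp}).
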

\begin{proof}
\begin{eqnarray}
       && {\mathbb{E}}||{\nabla_{\phi}}J({\phi_t},{\lambda_{\sigma}^{*}}({\phi_t}))-  {\nabla_{\phi}}{J}({{\phi},{\lambda^{'}}^{K}_{t}},B)||^{2} \nonumber\\
       &\le&  {\mathbb{E}}||{\nabla_{\phi}}J({\phi_t},{\lambda_{\sigma}^{*}}({\phi_t}))- {\nabla_{\phi}}J({\phi_t},{\lambda^{'}}^{K}_{t}) + {\nabla_{\phi}}J({\phi_t},{\lambda^{'}}^{K}_{t}) - {\nabla_{\phi}}{J}({{\phi_t},{\lambda^{'}}^{K}_{t}(\phi)},B)||^{2},\\
        &\le&  {\mathbb{E}}||{\nabla_{\phi}}J({\phi_t},{\lambda_{\sigma}^{*}}({\phi_t}))- {\nabla_{\phi}}J({\phi_t},{\lambda^{'}}^{K}_{t})||^{2} + ||{\nabla_{\phi}}J({\phi_t},{\lambda^{'}}^{K}_{t}) - {\nabla_{\phi}}{J}({{\phi_t},{\lambda^{'}}^{K}_{t}(\phi)},B)||^{2},\\
        &\le& L_{J}{\mathbb{E}}||({\lambda_{\sigma}^{*}({\phi_t})})- ({\lambda^{'}}^{K}_{t})||^{2} + {\mathbb{E}}||{\nabla_{\phi}}J({\phi_t},{\lambda^{'}}^{K}_{t})-  {\nabla_{\phi}}{J}({{\phi_t},{\lambda^{'}}^{K}_{t}(\phi)},B)||^{2}, \label{lem8_0}\\
        &\le&  \underbrace{L_{J}.L_{\sigma}{\mathbb{E}}||h_{\sigma}(\phi,{\lambda_{\sigma}^{*}}({\phi_t}))- h_{\sigma}({\phi_t},{\lambda^{'}}^{K}_{t})||}_{A^{''}} + \underbrace{{\mathbb{E}}||{\nabla_{\phi}}J(\phi,{\lambda^{'}}^{K}_{t})-  {\nabla_{\phi}}{J}({{\phi},{\lambda^{'}}^{K}_{t}},B)||^{2}}_{B^{''}}.\label{lem8_1}
\end{eqnarray}
We get Equation \eqref{lem8_1} from Equation \eqref{lem8_0} using Assumption \ref{assump_1}. Note that $B^{''}$ can be bounded same as $B_{k}^{'}$ in Lemma \ref{lemma_2}. Thus we have 
\begin{eqnarray}
     {\mathbb{E}}||{\nabla_{\phi}}J({\phi_t},{\lambda^{'}}^{k}_{t})-  {\nabla_{\phi}}{J}({{\phi_t},{\lambda^{'}}^{K}_{t}(\phi)},B)||^{2} &\le& \tilde{\mathcal{O}}\left(\frac{1}{{B}}\right)  \label{lem8_2}
\end{eqnarray}
For $A^{''}$ note that now the gradient descent is happening on the objective given by $h_{\sigma} = J({\lambda},{\phi}) -  {\sigma}G(\phi,\lambda)$. Applying the same logic as we did for $J(\phi,\lambda)$, from Assumption \ref{assump_1} and Lemma \ref{thm_supp} we get  
\begin{eqnarray}
 {\mathbb{E}}||h_{\sigma}({\phi_t},{\lambda_{\sigma}^{*}}({\phi_t}))- h_{\sigma}({\phi_t},{\lambda^{'}}^{k}_{t})||  &\le&   \tilde{\mathcal{O}}\left({\exp}^{-K}\right)  + \delta(B) \label{lem8_3}
\end{eqnarray}
where $\delta(B)$ is such that  $\mathbb{E}||\nabla_{\lambda}h_{\sigma}({\phi_t},{\lambda}) - \nabla_{\lambda}\hat{h}_{\sigma}({\phi_t},{\lambda})||^{2} \le \delta(B)$

Now, consider the term $\mathbb{E}||{\nabla}h_{\sigma}({\phi_t},{\lambda}) - {\nabla}\hat{h}_{\sigma}({\phi_t},{\lambda})||^{2}$, it can be decomposed as follows
\begin{eqnarray}
&&\mathbb{E}||{\nabla}h_{\sigma}({\phi_t},{\lambda}) - {\nabla}\hat{h}_{\sigma}({\phi_t},{\lambda})||^{2} \nonumber\\
&=& \mathbb{E}||{\nabla}_{\lambda}J({{\phi_t}},{\lambda}) + {\sigma}{\nabla}_{\lambda}G({{\phi_t}},{\lambda}) - {\nabla}_{\lambda}{J}({\phi_t},\lambda,B) - {\sigma}{\nabla}_{\lambda}G({{\phi_t}},{\lambda},B)||^{2}, \label{lem8_4}\\
 &\le&  \underbrace{\mathbb{E}||{\nabla}_{\lambda}J({{\phi_t}},{\lambda})-{\nabla}_{\lambda}{J}({\phi_t},{\lambda},B)||^{2}}_{A^{'''}} + {\sigma}\underbrace{\mathbb{E}||{\nabla}_{\lambda}G({{\phi_t}},{\lambda})-{\nabla}_{\lambda}{G}({{\phi_t}},{\lambda},n)||^{2}}_{B^{'''}}. \label{lem8_5}
\end{eqnarray}

Note that both $A^{'''}$ and $B^{'''}$ can be bounded same as $B_{k}^{'}$ in Lemma \ref{lemma_2}. thus we have 

\begin{eqnarray}
A^{'''} \le \tilde{\mathcal{O}}\left(\frac{1}{{B}}\right) \label{lem8_6}\\
B^{'''} \le \tilde{\mathcal{O}}\left(\frac{1}{{B}}\right) \label{lem8_7}
\end{eqnarray}

Thus we have $\delta(B) = \tilde{\mathcal{O}}\left(\frac{1}{B}\right)$. Which gives us
\begin{eqnarray}
 {\mathbb{E}}||h_{\sigma}(,{\phi_t},{\lambda_{\sigma}^{*}}({\phi_t})- h_{\sigma}({\phi_t},{\lambda^{'}}^{k}_{t})||  &\le&   \tilde{\mathcal{O}}\left({\exp}^{-K}\right)  + \tilde{\mathcal{O}}\left(\frac{1}{B}\right) \label{lem8_8}
\end{eqnarray}

Plugging Equation \eqref{lem8_8} and \eqref{lem8_2} into Equation \eqref{lem8_1}  gives us the required result.
\end{proof}

% \begin{proof}
% Write $g_\theta(s,a)=S(a)\,R(a)$ with
% $S(a)=\nabla_{\lambda} \log\pi_\theta(a\mid s)$ and
% $R(a)=1+\log\pi_\theta(a\mid s)-\log\bar\pi(a\mid s)$.
% From the expressions above and Assumption~\ref{as:gauss-regular},
% \[
% \|S(a)\| \;\le\; L_\mu\,\|\Sigma^{-1/2}\|\,\|X\|
% \quad\text{and}\quad
% |R(a)| \;\le\; |C_0| + \tfrac12\|X\|^2 + \tfrac12|Q(X)|.
% \]
% Since $|Q(X)|\le c_Q\,\|X\|^2$, we get
% \[
% \|g_\theta(s,a)\|
% \;\le\; L_\mu\,\|\Sigma^{-1/2}\|\,\|X\|\left(
% |C_0| + \tfrac12(1+c_Q)\,\|X\|^2\right)
% \;=\; \alpha_1\|X\| + \alpha_2\|X\|^3,
% \]
% for constants $\alpha_1,\alpha_2>0$ depending only on the model parameters.
% Apply Lemma~\ref{lem:laurent-massart} to bound $\|X\|$ and $\|X\|^3$ with
% probability at least $1-\delta$. Using
% $\|X\|\le \sqrt{d}+ \sqrt{2\log(1/\delta)}$ and the inequality
% $(u+v)^3 \le 4(u^3+v^3)$ for $u,v\ge 0$, we obtain
% \[
% \|g_\theta(s,a)\|
% \;\le\; A_1 + A_2\,\sqrt{\log(1/\delta)} + A_3\,\log(1/\delta)
% \]
% for suitable $A_1,A_2,A_3$ that depend only on
% $\alpha_1,\alpha_2$ and $d$, completing the proof.
% \end{proof}

% [Moment bounds]\label{cor:moments}
% Under Assumption~\ref{as:gauss-regular}, $\mathbb E\|g_\theta(s,a)\|^p<\infty$
% for all finite $p\ge 1$. In particular, $\mathbb E\|g_\theta(s,a)\|^2\le C_2$
% for a constant $C_2$ that depends only on $L_\mu$, $\Sigma$, $\bar\Sigma$ and
% bounds on $\bar\mu-\mu_\theta$.

\section{Experiments}
\label{appendix:experiments}

\subsection{Setup}
The upper objective function to evaluate the reward is defined as follows
\begin{eqnarray}
G(\lambda,\phi) = -\mathbb{E}_{y,\tau_{0},\tau_{1} \sim \rho_{H}(\lambda)}(y{\cdot}P_{\phi}(\tau_0 > \tau_1) + (1-y){\cdot}(1-P_{\phi}(\tau_0 > \tau_1)))    \label{ps_4}
\end{eqnarray}
Where $\rho_{H}(\lambda)$ is the distribution of a trajectory of length $H$ by following the policy $\lambda$ and $y$ is the preference  which is $1$ if trajectory $1$ is preferred and $0$ if Trajectory $0$ is preferred which is drawn from some unknown distribution $\rho$. Also, $P_{\phi}(\tau_0 > \tau_1)$ is defined as
\begin{align}
    P_{\phi}(\tau_0 > \tau_1) = \frac{\exp{\sum_{h =0}^{H-1} r_{\phi}(s_h^0, a_h^0)}}{\exp{\sum_{h =0}^{H-1} r_{\phi}(s_h^0, a_h^0)} + \exp{\sum_{h =0}^{H-1} r_{\phi}(s_h^1, a_h^1)}}, \label{ps_5}
\end{align}

The  objective to be minimized is given in Equation \eqref{mdp_ps_9} as follows:
\[
\Phi_{\sigma}(\phi)
= \min_{\lambda} \left[
G(\phi,\lambda) + \frac{1}{\sigma} \big( J(\phi,\lambda^{\ast}(\phi)) - J(\phi,\lambda) \big)
\right],
\]
where \(\lambda^{\ast}(\phi) = \arg\max_{\lambda} J(\phi,\lambda)\) (noting the sign convention for the lower-level maximization of the return \(J\)).

To make this more implementable in an RL context, we reformulate the lower-level optimality using value functions. Let \(V(\phi,\lambda)\) denote the value function under policy \(\pi_{\lambda}\) (i.e., \(J(\phi,\lambda) = \mathbb{E}_{s\sim \nu,\, a\sim \pi_{\lambda}}[V(\phi,\lambda)]\), where \(\nu\) is the initial state distribution). The optimal lower-level policy should maximize the value function, and should therefore satisfy
\[
V(\phi,\lambda^{\ast}(\phi)) = V^{\ast}(\phi) = \max_{\lambda} V(\phi,\lambda).
\]
Substituting this into the penalty form yields:
\[
G(\phi,\lambda) + \frac{1}{\sigma}\big( V(\phi,\lambda^{\ast}(\phi)) - V(\phi,\lambda) \big)
= G(\phi,\lambda) + \frac{1}{\sigma}\big( V^{\ast}(\phi) - V(\phi,\lambda) \big).
\]

Directly minimizing the objective in Equation \ref{mdp_ps_9} is difficult in practice. Thus, for implementation, we make a practical approximation by dropping the \(V(\phi,\lambda)\) term (which is non-negative under the assumption of non-negative rewards, a common setup in discounted MDPs where \(V(\phi,\lambda) \ge 0\)). This provides an upper bound on the objective while simplifying computation:
\[
G(\phi,\lambda) + \frac{1}{\sigma} V^{\ast}(\phi).
\]
In the code, this manifests as the regularization term added to the upper-level loss \(G\), effectively encouraging the outer optimization (over \(\phi\)) to maximize the optimal value \(V^{\ast}\) scaled by \(1/\sigma\). This aligns with the bi-level structure by implicitly penalizing deviations from lower-level optimality without explicit inner-loop solving for \(\lambda^{\ast}\) at every step. We demonstrate improved performance over the PEBBLE~\cite{lee2020pebble} baseline in the two benchmarks using this approximation.  We leave the implementation of the full Algorithm \ref{algo_1} as well as obtaining a tighter upper bound on Equation \eqref{mdp_ps_9}  to future work.

\subsection{Implementation Details}

We evaluate the effectiveness of this method, which solves the simplified objective, on two distinct environments: the Walker locomotion task from the DeepMind Control Suite~\cite{tassa2018deepmind} and the Door Open manipulation task from Meta-world~\cite{mclean2025metaworld}. These environments are chosen as representative benchmarks for robotic locomotion and manipulation, respectively, and both present the challenge of learning from limited, preference-based feedback rather than direct access to ground-truth rewards.

To demonstrate the efficacy of this approach, we compare against PEBBLE~\cite{lee2020pebble} baseline, which also uses preference-based feedback for solving complex tasks. Both PEBBLE as well as the proposed method utilize unsupervised exploration as proposed in PEBBLE~\cite{lee2020pebble}, with disagreement-based sampling for query selection, a standard approach in preference-based reinforcement learning~\cite{DBLP:conf/corl/MetcalfSMT23}. For the PEBBLE baseline, we employ the publicly released code from B-Pref~\cite{lee2021bpref}, maintaining identical hyper-parameters and network architectures, such as the number of layers, learning rate, and the frequency of supervised reward learning. Our method builds on the PEBBLE framework, leveraging its core components while introducing our core contributions. We provide each task with a fixed budget of human preference labels: 100 labels for the Walker task and 1,000 labels for the Door Open task. All experiments are conducted on a single machine with an NVIDIA RTX 1080 Ti GPU, and we report results averaged over multiple independent runs with different random seeds.

\subsection{Results}
The training curves in Figure~\ref{fig:reward_ablation} illustrate the performance improvement of this approach against PEBBLE on both the Walker and Door Open tasks. In the Walker environment, the agent is rewarded for moving forward, and in our setting, the agent receives only preference-based feedback. The proposed method demonstrates improvements over the PEBBLE baseline, achieving higher average velocities and more stable learning trajectories with few preference labels. On the Door Open manipulation task, this approach similarly outperforms the baseline, successfully opening the door more consistently and efficiently. 

These results highlight the effectiveness of this method in improving feedback efficiency and task performance, even in settings with limited preference-feedback. It is to be noted that this approach improves over the PEBBLE baseline without the need for second-order terms, unlike \cite{chakraborty2024parl}. Other bi-level works such as \cite{shen2024principled} do not demonstrate improvement over state-of-the-art bi-level algorithms.  Overall, these experiments validate the advantages of this proposed approach in both locomotion and manipulation scenarios, underscoring its potential for real-world robotic applications. The code can be found at \url{https://github.com/MuditGaur/Neurips_2025_Bilevel_RL}.

\begin{figure}[H]
\centering
\includegraphics[width=0.49\linewidth]{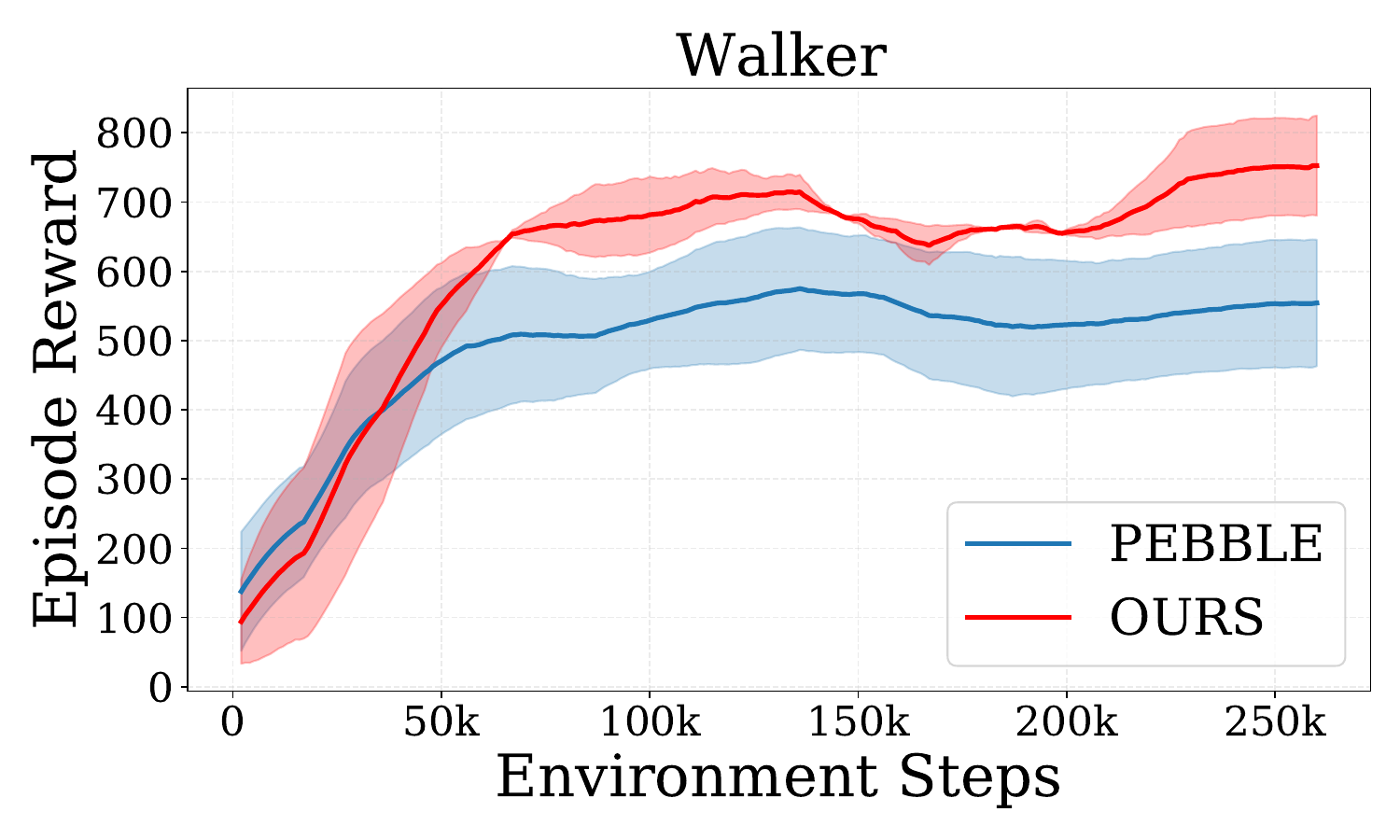}
\includegraphics[width=0.49\linewidth]{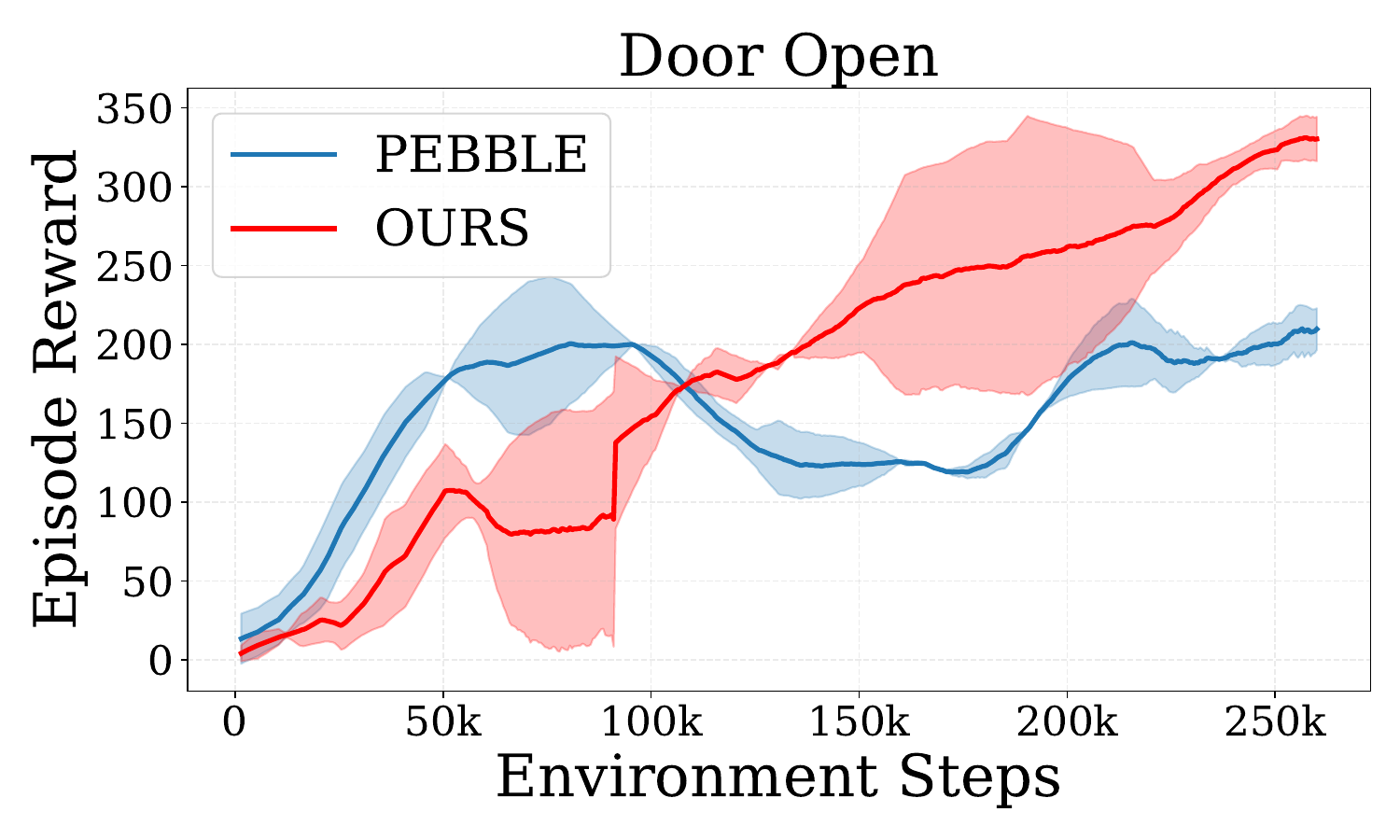}
%\vspace{-5mm}
\caption{Training curves on Walker locomotion task (left) from the DeepMind Control Suite~\cite{tassa2018deepmind} and the Door Open manipulation task (right) from Meta-world~\cite{mclean2025metaworld}. The solid line and shaded regions respectively, denote mean and standard deviation of the success rate, across multiple seeds. Blue curve: PEBBLE, Red curve: OURS.}
\label{fig:reward_ablation}
\end{figure}

\end{document}